\newtheorem{lemma}{Lemma}
\newtheorem{theorem}{Theorem}
\newcommand{\R}{\mathbb{R}}
\newcommand{\la}{\left\langle}
\newcommand{\ra}{\right\rangle}
\newcommand{\e}{\begin{equation}}
\newcommand{\ee}{\end{equation}}
\newcommand{\en}{\begin{equation*}}
\newcommand{\een}{\end{equation*}}
\newcommand{\eqn}{\begin{eqnarray}}
\newcommand{\eeqn}{\end{eqnarray}}
\newcommand{\bmat}{\begin{bmatrix}}
\newcommand{\emat}{\end{bmatrix}}
\DeclareMathAlphabet\mathbfcal{OMS}{cmsy}{b}{n}
\newcommand{\E}{\operatorname{\mathbb{E}}}
\newcommand{\vct}[1]{\boldsymbol{#1}}
\newcommand{\mtx}[1]{\boldsymbol{#1}}
\newcommand{\<}{\langle}
\renewcommand{\>}{\rangle}
\newcommand{\rank}{\operatorname{rank}}
\newcommand{\wh}{\widehat}
\newcommand{\wt}{\widetilde}
\newcommand{\ol}{\overline}
\newcommand{\norm}[2]{\left\| #1 \right\|_{#2}}
\newcommand{\abs}[1]{\left| #1 \right|}
\newcommand{\parans}[1]{\left(#1\right)}
\newcommand{\calA}{\mathcal{A}}
\newcommand{\calE}{\mathcal{E}}
\newcommand{\calF}{\mathcal{F}}
\newcommand{\calT}{\mathcal{T}}
\newcommand{\vr}{\vct{r}}
\newcommand{\mA}{\mtx{A}}
\newcommand{\mC}{\mtx{C}}
\newcommand{\mE}{\mtx{E}}
\newcommand{\mF}{\mtx{F}}
\newcommand{\mR}{\mtx{R}}
\newcommand{\mT}{\mtx{T}}
\newcommand{\mU}{\mtx{U}}
\newcommand{\mV}{\mtx{V}}
\newcommand{\mW}{\mtx{W}}
\newcommand{\mX}{\mtx{X}}
\newcommand{\mSigma}{\mtx{\Sigma}}
\newlength{\imgwidth}
\newcommand{\twoCol}[2]{\ifthenelse{\boolean{twoColVersion}} {#1} {#2} }
\renewcommand{\mathbf}{\boldsymbol}
\def \endprf{\hfill {\vrule height6pt width6pt depth0pt}\medskip}
\newenvironment{proof}{\noindent {\bf Proof} }{\endprf\par}
\title{\Large \bf Error Analysis of Tensor-Train Cross Approximation}
\author{%
  Zhen Qin\\
  Ohio State University\\
  \texttt{qin.660@osu.edu}\\
    \And
  Alexander Lidiak \\
  Colorado School of Mines \\
  \texttt{alidiak@mines.edu} \\
  \And
   Zhexuan Gong \\
   Colorado School of Mines \\
   \texttt{gong@mines.edu} \\
  \And
   Gongguo Tang \\
   University of Colorado \\
   \texttt{gongguo.tang@colorado.edu} \\
  \And
   Michael B. Wakin \\
   Colorado School of Mines \\
   \texttt{mwakin@mines.edu} \\
  \And
   Zhihui Zhu\thanks{Corresponding author.} \\
   Ohio State University \\
   \texttt{zhu.3440@osu.edu}
}
\begin{document}

\maketitle

\begin{abstract}
Tensor train decomposition is widely used in machine learning and quantum physics due to its concise representation of high-dimensional tensors, overcoming the curse of dimensionality. Cross approximation---originally developed for representing a matrix from a set of selected rows and columns---is an efficient method for constructing a tensor train decomposition of a tensor from few of its entries. While tensor train cross approximation has achieved remarkable performance in practical applications, its theoretical analysis, in particular regarding the error of the approximation, is so far lacking. To our knowledge, existing results only
provide element-wise approximation accuracy guarantees, which lead to a very loose bound when extended to the entire tensor. In this paper, we bridge this gap by providing accuracy guarantees in terms of the entire tensor for both exact and noisy measurements. Our results illustrate how the choice of selected subtensors affects the quality of the cross approximation and that the approximation error caused by model error and/or measurement error may not grow exponentially with the order of the tensor. These results are verified by numerical experiments and may have important implications for the usefulness of cross approximations for high-order tensors, such as those encountered in the description of quantum many-body states.

\end{abstract}

\section{Introduction}
\label{intro}

As modern big datasets are typically represented in the form of huge tensors, tensor decomposition
has become ubiquitous across science and engineering applications, including signal processing and machine learning \cite{CichockiMagTensor15,SidiropoulosTSPTENSOR17}, communication \cite{SidiropoulosBlind00}, chemometrics \cite{Smilde04,AcarUnsup09}, genetic engineering \cite{HoreNature16}, and so on.

The most widely used tensor decompositions include the canonical polyadic (CP) \cite{Bro97}, Tucker \cite{Tucker66} and tensor train (TT) \cite{Oseledets11} decompositions.
In general, the CP decomposition provides a compact representation but comes with computational difficulties in finding the optimal decomposition for high-order tensors \cite{johan1990tensor,de2008tensor}. The Tucker decomposition can be approximately computed via the higher order singular value decomposition (SVD)  \cite{de2000multilinear}, but it
is inapplicable for high-order tensors since its number of parameters scales exponentially with the tensor order. The TT decomposition sits in the middle between CP and Tucker decompositions and combines the best of the two worlds: the number of parameters does not grow exponentially with the tensor order, and it can be approximately computed by the SVD-based algorithms with a guaranteed accuracy \cite{Oseledets11}. (See the monograph \cite{cichocki2016tensor} for a detailed description.) As such, TT decomposition has attracted tremendous interest over the past decade~\cite{latorre2005image,bengua2017efficient,khrulkov2018expressive,stoudenmire2016supervised,novikov2015tensorizing,yang2017tensor,tjandra2017compressing,yu2017long,ma2019tensorized,frolov2017tensor,novikov2021tensor,kuznetsov2019tensor,novikov2020tensor}.
Notably, the TT decomposition is equivalent to the \emph{matrix product state (MPS) or matrix product operator (MPO)} introduced in the quantum physics community to efficiently represent one-dimensional quantum states, where the number of particles in a quantum system determines the order of the tensor ~\cite{verstraete2006matrix,verstraete2008matrix,schollwock2011density}. The use of TT decomposition may thus allow one to classically simulate a one-dimension quantum many-body system with resources \emph{polynomial} in the number of particles \cite{schollwock2011density}, or to perform scalable quantum state tomography~\cite{ohliger2013efficient}.

While an SVD-based algorithm \cite{Oseledets11} can be used to compute a TT decomposition, this requires information about the entire tensor. However, in many practical applications such as low-rank embeddings of visual data \cite{Usvyatsov21}, density estimation~\cite{chertkov2021solution}, surrogate visualization modeling \cite{Ballester-Ripoll16},
quantum state tomography \cite{ohliger2013efficient}, multidimensional harmonic retrieval \cite{Vervliet18},  parametric partial differential equations \cite{Dolgov19}, interpolation \cite{SavostyanovPro11}, and CMOS ring oscillators~\cite{KapushevJSC20}, one can or it is desirable to only evaluate a small number of elements of a tensor in practice, since a high-order tensor has exponentially many elements. To address this issue, the work \cite{OseledetsLAA10} develops a \emph{cross approximation} technique for efficiently computing a TT decomposition from selected subtensors. Originally developed for matrices, cross approximation represents a matrix from a selected set of its rows and columns, preserving structure such as sparsity and non-negativity within the data. Thus, it has been widely used for data analysis \cite{mahoney2009cur,thurau2012deterministic}, subspace clustering \cite{aldroubi2019cur}, active learning \cite{li2018joint}, approximate SVDs \cite{drineas2006fast1,drineas2006fast2}, data compression and sampling \cite{drineas2006fast3,xu2015cur,mitrovic2013cur}, and it has been extended for tensor decompositions~\cite{OseledetsLAA10,oseledets2008tucker,goreinov2008cross,mahoney2006tensor,cai2021fast,cai2021mode}. See \Cref{CAFM} for a detailed description of cross approximation for matrices and \Cref{TEBTTCA} for its extension to TT decomposition.


Although the theoretical analysis and applications of matrix cross approximation \cite{Goreinov01,DrineasSIAM08,Mahoney09CUR,WangIMLR13,Boutsidis17,Zamarashkin18,Zamarashkin21,Cortinovis20Low,HammACH20,Hamm21,Pan19CUR,CaiRobust21} have been developed for last two decades, to the best of our knowledge, the theoretical analysis of TT cross approximation, especially regarding its error bounds, has not been well studied \cite{Savostyanov14,Osinsky19}. Although cross approximation gives an accurate TT decomposition when the tensor is exactly in the TT format \cite{OseledetsLAA10}, the question remains largely unsolved as to how good the approximation is when the tensor is only approximately in the TT format, which is often the case for practical data. In particular, an important question is whether the approximation error grows exponentially with the order due to the cascaded multiplications between the factors; this exponential error growth is proved to occur with the Tucker decomposition \cite{oseledets2008tucker,goreinov2008cross}. Fortunately, for TT cross approximation, the works \cite{Savostyanov14,Osinsky19} establish  element-wise error bounds that do not grow exponentially with the order.
Unfortunately, when one extends these element-wise results to the entire tensor, the Frobenius norm error bound grows exponentially with the order of the tensor, indicating that the cross approximation could essentially fail for high-order tensors. However, in practice cross approximation is often found to work well even for high-order tensors, leading us to seek for tighter error bounds on the entire tensor. Specifically, we ask the following main question:

\smallskip
\begin{tcolorbox}[colback=white,left=1mm,top=1mm,bottom=1mm,right=1mm,boxrule=.3pt]
{\bf Question}: {Does cross approximation provide a stable tensor train (TT) decomposition of a high-order tensor with accuracy guaranteed in terms of the Frobenius norm?}
\end{tcolorbox}

In this paper, we answer this question affirmatively by developing accuracy guarantees for tensors that are not exactly in the TT format. Our results illustrate how the choice of subtensors affects the quality of the approximation and that the approximation error need not grow exponentially with the order. We also extend the approximation guarantee to noisy measurements.







The rest of this paper is organized as follows.
In Section~\ref{CAFM}, we review and summarize existing error bounds for matrix cross approximation.
Section~\ref{TEBTTCA} introduces our new error bounds for TT cross approximation.
Section~\ref{SIMRESULTS} includes numerical simulation results that supports our new error bounds and Section~\ref{conclusion} concludes the paper.

{\bf Notation}: We use calligraphic letters (e.g., $\mathcal{A}$) to denote tensors,  bold capital letters (e.g., $\bm{A}$) to denote matrices,  bold lowercase letters (e.g., $\bm{a}$) to denote vectors, and italic letters (e.g., $a$) to denote scalar quantities. ${\bm A}^{-1}$ and ${\bm A}^{\dagger}$ represent the inverse and the Moore–Penrose pseudoinverse matrices of ${\bm A}$, respectively. Elements of matrices and tensors are denoted in parentheses, as in Matlab notation. For example, $\calA(i_1, i_2, i_3)$ denotes the element in position
$(i_1, i_2, i_3)$ of the order-3 tensor $\calA$, $\mA(i_1,:)$ denotes the $i_1$-th row of $\mA$, and $\mA(:,J)$ denotes the submatrix of $\mA$ obtained by taking the columns indexed by $J$.
$\|\mA\|$ and $\|\mA\|_F$ respectively represent the spectral norm and Frobenius norm of $\mA$.

\section{Cross Approximation for Matrices}
\label{CAFM}

As a tensor can be viewed as a generalization of a matrix, it is instructive to introduce the principles behind cross approximation for matrices. Cross approximation, also known as \emph{skeleton decomposition} or \emph{CUR decomposition}, approximates a low-rank matrix using a small number of its rows and columns. Formally, following the conventional notation for the CUR decomposition, we construct a cross approximation for any matrix $\mA\in\R^{m\times n}$ as
\begin{eqnarray}
    \label{CAWME_8Mar22_2}
    \mA \approx {\bm C} {\bm U}^{\dagger} {\bm R},
\end{eqnarray}
where $\dagger$ denotes the pseudoinverse (not to be confused with the Hermitian conjugate widely used in quantum physics literature), ${\bm C}={{\bm A}}(:,J)$, ${\bm U}={\bm A}(I,J)$, and ${\bm R}={\bm A}(I,:)$ with  $I\in [m]$ and $J\in [n]$ denoting the indices of the selected rows and columns, respectively. 
See \Cref{fig:ca-matrix} for an illustration of matrix cross approximation. While the number of selected rows and columns (i.e., $|I|$ and $|J|$) can be different, they are often the same since the rank of ${\bm C} {\bm U}^{\dagger} {\bm R}$ is limited by the smaller number if they are different. When $\mU$ is square and non-singular, the cross approximation \eqref{CAWME_8Mar22_2} takes the standard form $ \mA \approx {\bm C} {\bm U}^{-1} {\bm R}$. Without loss of generality, we assume an equal number of selected rows and columns in the sequel. We note that one may replace $\mU^{\dagger}$ in \eqref{CAWME_8Mar22_2} by $\mC^\dagger \mA\mR^\dagger$ to improve the approximation, resulting in the CUR approximation $\mC\mC^\dagger \mA\mR^\dagger \mR$, where $\mC\mC^\dagger$ and $\mR^\dagger \mR$  are orthogonal projections onto
the subspaces spanned by the given columns and rows, respectively \cite{boutsidis2017optimal,HammACH20}. However, such a CUR approximation depends on knowledge of the entire matrix $\mA$. As we mainly focus on scenarios where such knowledge is prohibitive, in the sequel we only consider the cross approximation \eqref{CAWME_8Mar22_2} with $\mU = \mA(I,J)$.

\begin{wrapfigure}{r}{0.65\textwidth}
   \vspace{-.15in}
   \centering
   \includegraphics[width = 0.65\textwidth]
   {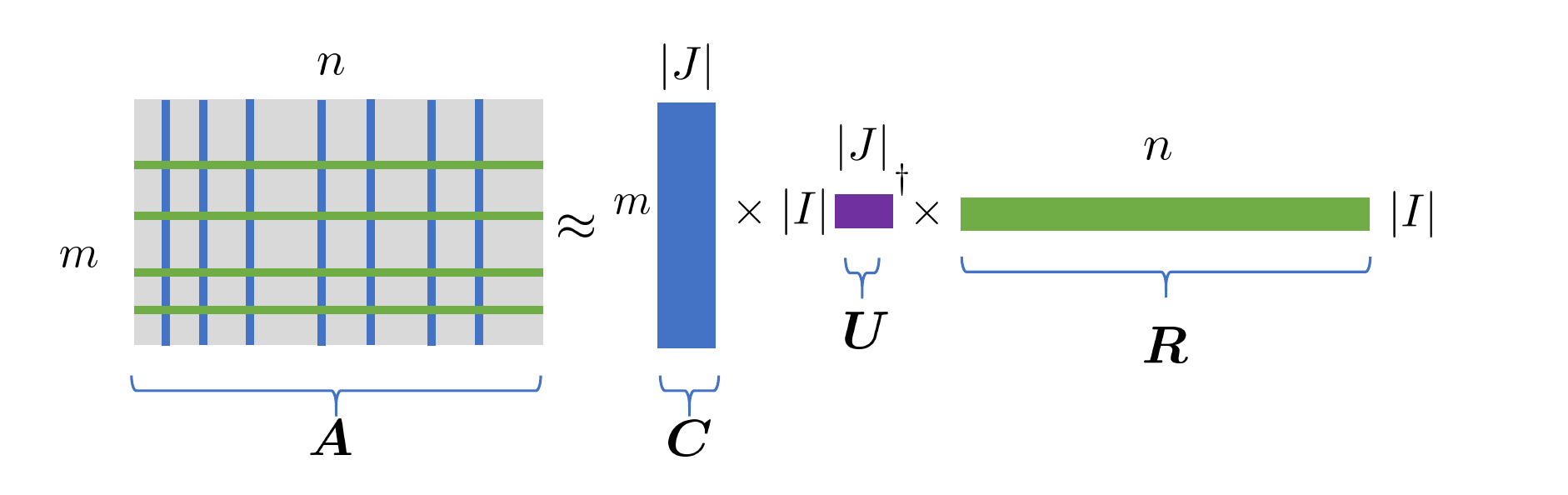}
   \vspace{-.25in}
   \caption{Cross approximation for a matrix.}
   \label{fig:ca-matrix}
   \vspace{-.1in}
\end{wrapfigure}

While in general the singular value decomposition (SVD) gives the best low-rank approximation to a matrix in terms of Frobenius norm or spectral norm, the cross approximation shown in \Cref{fig:ca-matrix} has several notable advantages: $(i)$ it is more interpretable since it is constructed with the rows and columns of the matrix, and hence could preserve structure such as sparsity, non-negativity, etc., and $(ii)$ it only requires selected rows and columns while other factorizations often require the entire matrix. As such, cross approximation has been widely used for data analysis \cite{mahoney2009cur,thurau2012deterministic}, subspace clustering \cite{aldroubi2019cur}, active learning \cite{li2018joint}, approximate SVDs \cite{drineas2006fast1,drineas2006fast2}, data compression and sampling \cite{drineas2006fast3,xu2015cur,mitrovic2013cur}, and so on; see \cite{HammACH20,Hamm21} for a detailed history of the use of cross approximation.


The following result shows that the cross approximation is
exact (i.e., \eqref{CAWME_8Mar22_2} becomes equality) as long as the submatrix $\mU$ has the same rank as $\mA$.
\begin{theorem} For any $\mA\in\R^{m\times n}$, suppose we select rows and columns  ${\bm C}={{\bm A}}(:,J)$, ${\bm U}={\bm A}(I,J)$, ${\bm R}={\bm A}(I,:)$ from $\mA$ such that $\rank(\mU) = \rank(\mA)$. Then
\[
\mA = \mC  \mU^{\dagger} \mR.
\]
\label{thm:cross-matrix-exact-rank}\end{theorem}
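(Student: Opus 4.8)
The plan is to exploit the rank hypothesis to show that $\mC$ already captures the entire column space of $\mA$, and then to reconstruct $\mA$ from $\mC$, $\mU$, and $\mR$ using the defining properties of the Moore--Penrose pseudoinverse. First I would record the elementary rank facts. Because $\mC = \mA(:,J)$ is a column submatrix, $\rank(\mC) \le \rank(\mA)$; because $\mU = \mA(I,J) = \mC(I,:)$ is a row submatrix of $\mC$, we have $\rank(\mU) \le \rank(\mC)$. Combined with the hypothesis $\rank(\mU) = \rank(\mA) =: r$, these force $\rank(\mC) = r$. Since $\col(\mC) \subseteq \col(\mA)$ and both subspaces have dimension $r$, I conclude $\col(\mC) = \col(\mA)$.

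Next I would use this to write $\mA = \mC\mX$ for some matrix $\mX$, which is possible precisely because every column of $\mA$ lies in $\col(\mC)$. Restricting this identity to the rows indexed by $I$ and using $\mC(I,:) = \mA(I,J) = \mU$ together with $\mA(I,:) = \mR$ gives the companion identity $\mR = \mU\mX$. Substituting into the claimed expression yields
\[
\mC\,\mU^{\dagger}\mR = \mC\,\mU^{\dagger}\mU\,\mX,
\]
so it remains only to show $\mC\,\mU^{\dagger}\mU = \mC$, after which $\mC\,\mU^{\dagger}\mR = \mC\mX = \mA$ follows at once.

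The heart of the argument---and the step I expect to be the main obstacle---is this identity $\mC\,\mU^{\dagger}\mU = \mC$, which is where the pseudoinverse (rather than a genuine inverse) must be handled with care. The matrix $\mU^{\dagger}\mU$ is the orthogonal projector onto $\Null(\mU)^{\perp}$, so $\mId - \mU^{\dagger}\mU$ projects onto $\Null(\mU)$; the claim is therefore equivalent to $\Null(\mU) \subseteq \Null(\mC)$. The reverse inclusion $\Null(\mC) \subseteq \Null(\mU)$ is immediate, since the rows of $\mU$ form a subset of the rows of $\mC$. But $\mC$ and $\mU$ share the same number of columns $|J|$ and, by the first step, the same rank $r$, so $\dim\Null(\mC) = \dim\Null(\mU) = |J| - r$; hence the two null spaces coincide. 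Consequently $\mC(\mId - \mU^{\dagger}\mU) = \mzero$, giving $\mC\,\mU^{\dagger}\mU = \mC$ and completing the proof.

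As a sanity check, I would note that in the common case $|I| = |J| = r$ the matrix $\mU$ is square and invertible, $\mU^{\dagger} = \mU^{-1}$, and the argument collapses to $\mX = \mU^{-1}\mR$ and $\mA = \mC\mX = \mC\,\mU^{-1}\mR$; the pseudoinverse analysis above is exactly what is needed to cover the general case in which more than $r$ rows or columns are selected. An alternative route would permute the selected indices to the leading positions, write $\mA$ in $2\times 2$ block form with leading block $\mU$, and verify the four blocks of $\mC\,\mU^{\dagger}\mR$ separately; there the only nontrivial block is the Schur-complement-type term for the unselected rows and columns, and matching it to the corresponding block of $\mA$ again reduces to the same rank and null-space bookkeeping.
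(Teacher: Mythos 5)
Your proof is correct and is essentially the paper's argument run in mirror image: the paper factors $\mA$ through the selected rows (showing $\rank(\mU)=\rank(\mR)$, hence $\mR = \mU\mU^{\dagger}\mR$, and then writing the rows of $\mA$ in terms of the rows of $\mR$), while you factor through the selected columns (showing $\rank(\mU)=\rank(\mC)$, hence $\mA = \mC\mX$ with $\mR = \mU\mX$, and then $\mC\mU^{\dagger}\mU = \mC$ via the null-space comparison). Both proofs rest on the identical rank chain and a single Moore--Penrose projector identity, so the mathematical content coincides.
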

\vspace{-.2in}
\begin{proof} As $\mR$ is a submatrix of $\mA$ and $\mU$ is a submatrix of $\mR$, $\rank(\mU)\le \rank(\mR)\le \rank(\mA)$ always holds.
Now if $\rank(\mU) = \rank(\mA) $, we have $\rank(\mU) = \rank(\mR)$ and thus any columns in $\mR$ can be linearly represented by the columns in $\mU$. Using this observation and noting that $\mU = \mU\mU^{\dagger} \mU$, we can obtain that $\mR = \mU\mU^{\dagger} \mR$. Likewise, since $ \rank(\mR) = \rank(\mA)$, any rows in $\mA$ can be linearly represented by the rows in $\mR$. Thus, we can conclude that $\mA = \mC \mU^{\dagger} \mR$.
\end{proof}

In words, \Cref{thm:cross-matrix-exact-rank} implies that for an exactly low-rank matrix $\mA$ with rank $r$, we can sample $r$ (or more) rows and columns to ensure exact cross approximation. In practice, however, $\mA$ is generally not exactly low-rank but only approximately low-rank. In this case, the performance of cross approximation depends crucially on the selected rows and columns. Column subset selection is a classical problem in numerical linear algebra, and many methods have been proposed based on the maximal volume principle \cite{goreinov2001maximal,civril2009selecting} or related ideas \cite{deshpande2006matrix,deshpande2010efficient,cortinovis2020low}; see \cite{cortinovis2020low} for more references representing this research
direction.

Efforts have also been devoted to understand the robustness of cross approximation since it only yields a low-rank approximation to $\mA$ when $\mA$ is not low-rank. The work \cite{Goreinov01} provides an element-wise accuracy guarantee for
the rows and columns selected
via the maximal volume principle. The recent work \cite{Zamarashkin18} shows there exist a stable cross approximation with accuracy guaranteed in the Frobenius norm, while the work \cite{Zamarashkin21} establishes a similar guarantee for cross approximations of random matrices using the maximum projective volume principle. A derandomized algorithm is proposed in \cite{Cortinovis20Low} that finds a cross approximation with a similar performance guarantee. The work \cite{Hamm21} establishes an approximation guarantee for cross approximation using any sets of selected rows and columns. See {Appendix} \ref{ReviewMatrixCrossApproximation} for a detailed description of these results.


\section{Cross Approximation for Tensor Train Decomposition}
\label{TEBTTCA}

In this section, we consider cross approximation for tensor train (TT) decomposition. We begin by introducing tensor notations. For an $N$-th order tensor $\mathcal{T}\in\R^{d_1\times\dots\times d_N}$, the $k$-th {\it separation} or {\it unfolding} of $\mathcal{T}$, denoted by ${\bm T}^{\<k\>}$, is a matrix of size $(d_1\cdots d_k)\times (d_{k+1}\cdots d_N)$ with its $(i_1\dots i_k, i_{k+1}\cdots i_N)$-th element   defined by
\begin{eqnarray}
    \label{TEBTTCA_8Mar22_2}
    {\bm T}^{\<k\>}(i_1\cdots i_k, i_{k+1}\cdots i_N)=\mathcal{T}(i_1,\dots,i_N).
\end{eqnarray}
Here, $i_1\cdots i_k$ is a single multi-index that combines indices $i_1,\ldots,i_k$. As for the matrix case, we will select rows and columns from $\mT^{\<k\>}$ to construct a cross approximation for all $k = 1,\ldots,N-1$. To simplify the notation, we use $I^{\leq k}$ and $I^{> k}$ to denote the positions of the selected $r_k'$ rows and columns in the $k$-th unfolding. With abuse of notation, $I^{\leq k}$ and $I^{> k}$ are also used as the positions in the original tensor. Thus, both $\mT^{\<k\>}(I^{\leq k},I^{> k})$ and $\mT(I^{\leq k},I^{> k})$ represent the sampled $r_k'\times r_k'$ intersection matrix.  Likewise, $\mT( I^{\leq k-1},i_k,I^{> k})$ represents an $r_{k-1}'\times r_k'$ matrix whose $(s,t)$-th element is given by $\mT(I^{\leq k-1}_{s},i_k,I^{> k}_{t})$ for $s = 1,\ldots, r_{k-1}'$ and $t = 1,\ldots, r_k'$.

\subsection{Tensor Train Decomposition}
\label{TEBTTCA_Section3.1}
We say that $\mathcal{T}\in\R^{d_1\times\dots\times d_N}$ is in the \emph{TT format} if we can express its $(i_1,\dots,i_N)$-th element as the following matrix product form \cite{Oseledets11}
\begin{eqnarray}
    \label{TEBTTCA_8Mar22_1}
    \mathcal{T}(i_1,\dots,i_N)={\bm X}_{1}(:,i_1,:)\cdots{\bm X}_{N}(:,i_N,:),
\end{eqnarray}
where ${\bm X}_j \in\R^{r_{j-1}\times d_j \times r_j}, j=1,\dots,N$ with $r_0=r_N=1$ and ${\bm X}_j(:,i_j,:)\in\R^{r_{j-1}\times r_{j}}, j=1,\dots,N$ denotes one ``slice'' of ${\bm X}_j$ with the second index being fixed at $i_j$.
To simplify notation, we often write the above TT format in short as $\mathcal{T} = [{\bm X}_1,\dots,{\bm X}_N ]$. In quantum many-body physics, (\ref{TEBTTCA_8Mar22_1}) is equivalent to a matrix product state or matrix product operator, where $\mathcal{T}(i_1,\dots,i_N)$ denotes an element of the many-body wavefunction or density matrix, $N$ is the number of spins/qudits, and ${r_k}$ ($k=1,2,\cdots, N-1)$ are known as the bond dimensions. Note that in general the decomposition \eqref{TEBTTCA_8Mar22_1} is not invariant to dimensional permutation
due to strictly sequential multilinear products over the latent cores. The tensor ring decomposition generalizes \eqref{TEBTTCA_8Mar22_1} by taking the trace of the RHS; this allows $r_0, r_N\ge 1$ and is invariant to circular dimensional permutation~\cite{zhao2016tensor}.

While there may exist infinitely many ways to decompose a tensor $\calT$ as in \eqref{TEBTTCA_8Mar22_1}, we say the decomposition is \emph{minimal} if the rank of the left unfolding of each $\mX_k$ (i.e., $L(\mX_k) =\begin{bmatrix}\mX_k(:,1,:) \\ \vdots\\  \mX_k(:,d_k,:) \end{bmatrix}$) is $r_k$ and the rank of the right unfolding of each $\mX_k$ (i.e., $R(\mX_k) = \begin{bmatrix}\mX_k(:,1,:) &  \cdots &  \mX_k(:,d_k,:) \end{bmatrix}$) is $r_{k-1}$. The dimensions ${\bm r}=(r_1,\dots, r_{N-1})$ of such a minimal decomposition are called the \emph{TT ranks} of $\mathcal{T}$. According to \cite{holtz2012manifolds}, there is exactly one set of ranks $\vr$ that $\calT$ admits a minimal TT decomposition. Moreover, $r_k$ equals the rank of the $k$-th unfolding matrix ${\bm T}^{\<k\>}$, which can serve as an alternative way to define the TT rank.

\paragraph{Efficiency of Tensor Train Decomposition} The number of elements in the $N$-th order tensor $\calT$ grows exponentially in $N$, a phenomenon known as the \emph{curse of dimensionality}.
In contrast, a TT decomposition of the form \eqref{TEBTTCA_8Mar22_1} can represent a tensor $\calT$ with $d_1\cdots d_N$ elements using only $O(d N r^2)$ elements, where $d = \max\{d_1,\ldots,d_N\}$ and $r = \max\{r_1,\ldots,r_{N-1}\}$. This makes the TT decomposition remarkably efficient in addressing the curse of dimensionality as its number of parameters scales only linearly in $N$. For this reason, the TT decomposition has been widely used for image compression \cite{latorre2005image,bengua2017efficient},
analyzing theoretical properties of deep networks \cite{khrulkov2018expressive}, network compression (or tensor networks)~\cite{stoudenmire2016supervised,novikov2015tensorizing,yang2017tensor,tjandra2017compressing,yu2017long,ma2019tensorized}, recommendation systems \cite{frolov2017tensor}, probabilistic model estimation \cite{novikov2021tensor}, learning of Hidden Markov Models \cite{kuznetsov2019tensor}, and so on. The work \cite{novikov2020tensor} presents a library for TT decomposition based on TensorFlow. Notably, TT decomposition is equivalent to the \emph{matrix product state (MPS)} introduced in the quantum physics community to efficiently and concisely represent quantum states; here $N$ represents the number of qubits of the many–body system~\cite{verstraete2006matrix,verstraete2008matrix,schollwock2011density}. The concise representation by MPS is remarkably useful in quantum state tomography since it allows us to observe a quantum state with both experimental and computational resources that are only \emph{polynomial} rather than \emph{exponential} in the number of qubits $N$~\cite{ohliger2013efficient}.

\paragraph{Comparison with Other Tensor Decompositions}  Other commonly used tensor decompositions include the Tucker decomposition, CP decomposition, etc. The Tucker decomposition is suitable only for small-order tensors (such as $N = 3$) since its parameterization scales exponentially with $N$ \cite{Tucker66}. Like the TT decomposition, the canonical polyadic (CP) decomposition \cite{Bro97} also represents a tensor with a linear (in $N$) number of parameters. However, TT has been shown to be exponentially more expressive than CP for almost any tensor \cite{khrulkov2018expressive,GrasedyckSIAM10}.
Moreover, computing the CP decomposition is computationally challenging; even computing the canonical rank can be NP-hard and ill-posed \cite{johan1990tensor,de2008tensor}. In contrast, one can use a sequential SVD-based algorithm \cite{Oseledets11} to compute a quasioptimal TT decomposition (the accuracy differs from that of the best possible approximation by at most a factor of $\sqrt{N}$). We refer to \cite{cichocki2016tensor} for a detailed comparison.


\subsection{Cross Approximation For Tensor Train Decomposition}
\label{CAWMETT}
While the SVD-based algorithm \cite{Oseledets11} can be used to find a TT decomposition, that algorithm requires the entire tensor and is inefficient for large tensor order $N$. More importantly, in applications like low-rank embeddings of visual data \cite{Usvyatsov21},  density estimation~\cite{chertkov2021solution}, surrogate visualization modeling~\cite{Ballester-Ripoll16}, and quantum state tomography \cite{ohliger2013efficient}, it is desirable to measure a small number of tensor elements since the experimental cost is proportional to the number of elements measured. 
In such scenarios, we can use TT cross approximation to reconstruct the full tensor using only a small number of known elements, similar to the spirit of matrix cross approximation.


To illustrate the main ideas behind TT cross approximation, we first consider the case where a tensor $\calT$ is in the exact TT format \eqref{TEBTTCA_8Mar22_1}, i.e.,
the unfolding matrix ${\bm T}^{\<k\>}$ is exactly low-rank with rank $r_k$. In this case, by \Cref{thm:cross-matrix-exact-rank},
we can represent ${\bm T}^{\<1\>}$ with a cross approximation by choosing $r_1'$ rows (with $r_1' \ge r_1$) indexed by $I^{\leq 1}$ and $r_1'$ columns indexed by $I^{> 1}$ such that
\[
{\bm T}^{\<1\>}  = {\bm T}^{\<1\>}(:,I^{>1})[{\bm T}^{\<1\>}(I^{\leq 1},I^{>1})]^{\dag}{\bm T}^{\<1\>}(I^{\leq 1},:).
\]

\begin{wrapfigure}{r}{0.68\textwidth}
\vspace{-.3in}
   \centering
   \includegraphics[width = 0.68\textwidth]
   {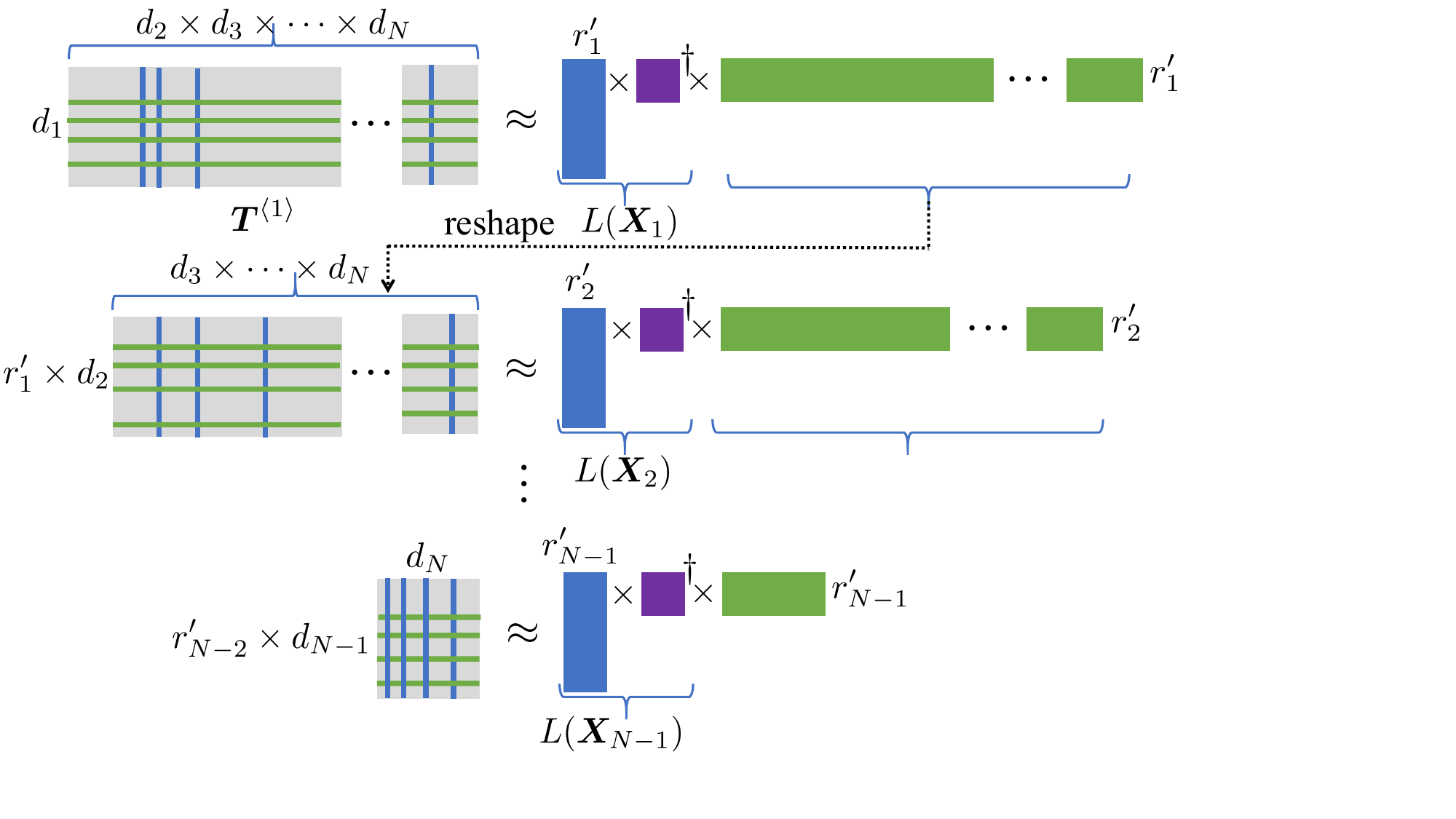}
 \vspace{-.2in}
   \caption{Cross approximation for TT format tensors with nested interpolation sets.  }
   \label{Figure_11May_2}
\vspace{-.2in}
\end{wrapfigure}

Here ${\bm T}^{\<1\>}(:,I^{>1})$ has $d_1 r_1'$ entries, while ${\bm T}^{\<1\>}(I^{\leq 1},:)$ is an extremely wide matrix. However, \emph{one only needs to sample the elements in ${\bm T}^{\<1\>}(:,I^{>1})$ and can avoid sampling the entire elements of ${\bm T}^{\<1\>}(I^{\leq 1},:)$ by further using cross approximation}. In particular, one can reshape ${\bm T}^{\<1\>}(I^{\leq 1},:)$ into another matrix of size $r_1' d_2 \times d_3\cdots d_N$ and then represent it with a cross approximation by choosing $r_2'$ rows and columns. This procedure can be applied recursively $N-1$ times, and only at the last stage does one sample both $r'_{N-1}$ rows and columns of an $r'_{N-2}d_{N-1} \times d_N$ matrix. This is the procedure developed in \cite{OseledetsLAA10} for TT cross approximation.
The process of TT cross approximation is shown in Fig.~\ref{Figure_11May_2}.
%
%
%
%

The procedure illustrated in \Cref{Figure_11May_2} produces nested interpolation sets as one recursively applies cross approximation on the reshaped versions of previously selected rows. However, the interpolation sets $\{I^{\leq k},I^{> k}\}$ are not required to be nested. In general, one can slightly generalize this procedure and obtain a TT cross approximation $\wh \calT$ with elements given by \cite{Osinsky19}
\begin{eqnarray}
    \label{TEBTTCA3_1}
    \wh{\mathcal{T}}(i_1,\dots,i_N)=\prod_{k=1}^{N} {\bm T}(I^{\leq k-1},i_k,I^{>k})[{\bm T}(I^{\leq k},I^{>k})]_{\tau_k}^{\dag},
\end{eqnarray}
where $I^{\leq k}$ denotes the selected row indices from the multi-index $\{d_1\cdots d_k\}$, 
$I^{>k}$  is the selection of the column indices from the multi-index $\{d_{k+1}\cdots d_N\}$, 
 and $I^{\le 0} = I^{> N} =\emptyset$. Here $\mA_{\tau}^\dagger$ denotes the $\tau$-pseudoinversion of $\mA$ that sets the singular values less than $\tau$ to zero before calculating the pseudoinversion. 

Similar to \Cref{thm:cross-matrix-exact-rank},  the TT cross approximation $\wh \calT$ equals $\calT$ when $\calT$ is in the TT format \eqref{TEBTTCA_8Mar22_1} and the intersection matrix ${\bm T}(I^{\leq k},I^{>k})$ has rank $r_k$ for $k=1,\dots,N-1$ \cite{OseledetsLAA10}. Nevertheless, in practical applications, tensors may be only approximately but not exactly in the TT format \eqref{TEBTTCA_8Mar22_1}, i.e., the unfolding matrices $\mT^{\<k\>}$ may be only approximately low-rank. Moreover, in applications such as quantum tomography, one can only observe noisy samples of the tensor entries. We provide guarantees in terms of $\|\mathcal{T}-\wh{\mathcal{T}}\|_F$ for these cases in the following subsections.

\subsection{Error Analysis of TT Cross Approximation with Exact Measurements}

Assume each unfolding matrix $\mT^{\<k\>}$ is approximately low-rank with residual captured by $\| \mT^{\<k\>} - \mT_{r_k}^{\<k\>} \|_F$, where $\mT_{r_k}^{\<k\>}$ denotes the best rank-$r_k$ approximation of $\mT^{\<k\>}$. For convenience, we define
\e
r = \max_{k=1,\dots, N-1}r_k, \quad \epsilon = \max_{k=1,\dots, N-1}\| \mT^{\<k\>} - \mT_{r_k}^{\<k\>} \|_F,
\label{eq:epsilon}\ee
where $\epsilon$ captures the largest low-rank approximation error in all the unfolding matrices.

The work \cite{Savostyanov14,Osinsky19} develops an element-wise error bound for TT cross approximation based on the maximum-volume choice of the interpolation sets. However, if we are interested in an approximation guarantee for the entire tensor (say in the Frobenius norm) instead of each entry, then directly applying the above result leads to a loose bound with scaling at least proportional to $\sqrt{d_1\cdots d_N}$. Our goal is to provide guarantees directly for $\|\calT - \wh\calT\|_{F}$. Note that~\eqref{eq:epsilon} implies that a fundamental lower bound
\e
\|\calT - \wh \calT\|_F \ge \epsilon
\quad \text{since} \quad \|\calT - \wh \calT\|_F = \| \mT^{\<k\>} - \wh \mT^{\<k\>} \|_F \ge \| \mT^{\<k\>} - \mT_{r_k}^{\<k\>} \|_F
\label{eq:lower-bound}\ee
holds for {\em any} TT format tensor  $\wh\calT$ with ranks $\vr$ (not only those constructed by cross approximation). Fortunately, tensor models in practical applications such as image and video
\cite{latorre2005image,bengua2017efficient} and quantum states \cite{verstraete2006matrix,verstraete2008matrix,schollwock2011density}  can be well represented in the TT format with very small $\epsilon$.

Our first main result proves the existence of stable TT cross approximation with exact measurements of the tensor elements:
\begin{theorem}
\label{TEBTTCA3_2_5May22_1}
Suppose $\calT$ can be approximated by a TT format tensor with ranks $\vr$ and approximation error $\epsilon $ defined in \eqref{eq:epsilon}. For sufficiently small $\epsilon$, there exists a cross approximation $\wh \calT$ of the form \eqref{TEBTTCA3_1} with $\tau_k = 0$ that satisfies
\e
\norm{\calT - \wh\calT}{F} \le \frac{(3\kappa)^{\lceil \log_2 N \rceil}  - 1}{3\kappa -1} (r+1) \epsilon,
\label{eq:CA-tensor-fro-bound}\ee
where $\kappa: = \max_k\left\{ \norm{  \mT^{\langle k \rangle}(:, I^{>k})  \cdot \mT^{\langle k \rangle}(I^{\le k}, I^{>k})^{-1} }{}, \norm{\mT^{\langle k \rangle}(I^{\le k}, I^{>k})^{-1} \cdot \mT^{\langle k \rangle}(I^{\le k}, :) }{} \right\}$.
\label{thm:CA-tensor-error}\end{theorem}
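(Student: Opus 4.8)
The plan is to prove \eqref{eq:CA-tensor-fro-bound} by induction on the tensor order $N$ using a divide-and-conquer argument: rather than peeling off one mode at a time (which would produce a factor growing like $(3\kappa)^{N}$), I would split the mode set $\{1,\dots,N\}$ near its midpoint, so that the recursion has depth $\lceil \log_2 N\rceil$. This logarithmic depth is precisely what turns the per-level amplification factor $3\kappa$ into the geometric sum $\frac{(3\kappa)^{\lceil\log_2 N\rceil}-1}{3\kappa-1}$.

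First I would expose the recursive structure of \eqref{TEBTTCA3_1}. Fix a split point $m=\lceil N/2\rceil$ and write $\mM=\mT^{\<m\>}$, $\mC=\mT^{\<m\>}(:,I^{>m})$, $\mU=\mT(I^{\le m},I^{>m})$, $\mR=\mT^{\<m\>}(I^{\le m},:)$. Grouping the matrix product in \eqref{TEBTTCA3_1} into the factors $k=1,\dots,m$ and $k=m+1,\dots,N$ writes the $m$-th unfolding of $\wh\calT$ as $\wh{\mT}^{\<m\>}=\mF\mG$. Using $\mU^\dagger\mU=\mId$ (valid since $\tau_k=0$ and $\mU$ has full rank) one verifies that $\mF=\wh{\mC}\,\mU^\dagger$ and $\mG=\wh{\mR}$, where $\wh{\mC}$ and $\wh{\mR}$ are themselves cross approximations of the form \eqref{TEBTTCA3_1} applied to the reshaped subtensors associated with $\mC$ and $\mR$, of order at most $\lceil N/2\rceil$ each (the trailing, resp.\ leading, selection mode of size $r_m'$ sits at the boundary and does not increase the recursion depth). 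Hence
\[
\wh{\mT}^{\<m\>}=\wh{\mC}\,\mU^\dagger\,\wh{\mR},
\]
reducing the order-$N$ problem to one matrix cross approximation at the middle split plus two subproblems of order $\approx N/2$.

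Next I would set up the error recursion. Let $E(N)$ be the worst-case value of $\norm{\calT-\wh\calT}{F}$ over order-$N$ tensors with parameters $r,\epsilon,\kappa$. Setting $\delta_\mC=\wh{\mC}-\mC$, $\delta_\mR=\wh{\mR}-\mR$ and expanding,
\[
\mM-\wh{\mT}^{\<m\>}=(\mM-\mC\mU^\dagger\mR)-\delta_\mC\mU^\dagger\mR-\mC\mU^\dagger\delta_\mR-\delta_\mC\mU^\dagger\delta_\mR.
\]
The first term is a single matrix cross approximation error, bounded by $(r+1)\epsilon$ using the robust matrix guarantees reviewed in \Cref{CAFM} and Appendix~\ref{ReviewMatrixCrossApproximation}, since the best rank-$r$ error of $\mM=\mT^{\<m\>}$ is at most $\epsilon$. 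For the two linear terms, the definition of $\kappa$ gives directly $\norm{\mU^\dagger\mR}{2}=\norm{\mT^{\<m\>}(I^{\le m},I^{>m})^{-1}\mT^{\<m\>}(I^{\le m},:)}{2}\le\kappa$ and $\norm{\mC\mU^\dagger}{2}=\norm{\mT^{\<m\>}(:,I^{>m})\,\mT^{\<m\>}(I^{\le m},I^{>m})^{-1}}{2}\le\kappa$, so by submultiplicativity they are bounded by $\kappa\norm{\delta_\mC}{F}$ and $\kappa\norm{\delta_\mR}{F}$. The crucial point—and where the hypothesis that $\epsilon$ be \emph{sufficiently small} enters—is the quadratic term: since $\norm{\delta_\mC}{F},\norm{\delta_\mR}{F}=O(\epsilon)$, for $\epsilon$ small enough $\norm{\delta_\mC\mU^\dagger\delta_\mR}{F}\le\norm{\mU^{-1}}{2}\norm{\delta_\mC}{F}\norm{\delta_\mR}{F}$ is dominated by $\kappa\max\{\norm{\delta_\mC}{F},\norm{\delta_\mR}{F}\}$. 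One must also check that the subproblems inherit the parameters $(r,\epsilon,\kappa)$: because $\mC$ and $\mR$ are column/row submatrices of the unfoldings of $\calT$, their best rank-$r$ errors are at most $\epsilon$, and—using nested interpolation sets as in \Cref{Figure_11May_2}—their interpolation factors coincide with submatrices of the originals, hence are bounded by $\kappa$. Combining these bounds with $\norm{\delta_\mC}{F},\norm{\delta_\mR}{F}\le E(\lceil N/2\rceil)$ yields the recursion
\[
E(N)\le 3\kappa\,E(\lceil N/2\rceil)+(r+1)\epsilon,\qquad E(1)=0.
\]

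Finally I would unroll this recursion over its $\lceil\log_2 N\rceil$ levels, so that the level-$j$ additive term $(r+1)\epsilon$ is weighted by $(3\kappa)^j$, giving
\[
E(N)\le(r+1)\epsilon\sum_{j=0}^{\lceil\log_2 N\rceil-1}(3\kappa)^j=\frac{(3\kappa)^{\lceil\log_2 N\rceil}-1}{3\kappa-1}(r+1)\epsilon,
\]
which is exactly \eqref{eq:CA-tensor-fro-bound}. I expect the main obstacle to be the factorization step: rigorously justifying $\wh{\mT}^{\<m\>}=\wh{\mC}\,\mU^\dagger\,\wh{\mR}$ and confirming that $\wh{\mC},\wh{\mR}$ are bona fide cross approximations of submatrices to which the inductive hypothesis applies, together with the index bookkeeping that keeps the recursion on the \emph{order} (so the depth is $\lceil\log_2 N\rceil$ rather than $N$). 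A secondary delicate point is making the ``sufficiently small $\epsilon$'' threshold quantitative enough to absorb the quadratic term into the clean constant $3\kappa$ while keeping $(r,\epsilon,\kappa)$ uniform across all levels of the recursion.
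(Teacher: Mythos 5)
Your proposal is correct and follows essentially the same route as the paper's proof in Appendix~\ref{ARFP_1_3_5May22_1}: a balanced midpoint split of the modes giving recursion depth $\lceil\log_2 N\rceil$, the expansion of $\mT^{\<m\>}-\wh{\mT}^{\<m\>}$ into a matrix cross-approximation term bounded by $(r+1)\epsilon$ via the existence result (\Cref{thm:CP-matrix-fro-exsit}), two linear cross terms bounded by $\kappa$ through the submatrix spectral-norm argument, and a quadratic term absorbed using the sufficiently-small-$\epsilon$ hypothesis. Unrolling the resulting recursion $e_{l+1}\le 3\kappa\, e_l+(r+1)\epsilon$ with $e_0=0$ is exactly the paper's final step, so this is the same argument, not a genuinely different one.
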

We use sufficiently small $\epsilon$ to simplify the presentation. This refers to the value of $\epsilon$ that $\norm{  \parans{\wh\mT^{\langle k \rangle}(:, I^{>k}) - \mT^{\langle k \rangle}(:, I^{>k})} \cdot \mT^{\langle k \rangle}(I^{\le k}, I^{>k})^{-1} }{} \le \kappa$, which can always be guaranteed since by \eqref{eq:CA-tensor-fro-bound} the distance between $\wh\calT$ and $\calT$ (and hence the distance between $\wh\mT^{\langle k \rangle}(:, I^{>k})$ and $\mT^{\langle k \rangle}(:, I^{>k})$) approaches zero when $\epsilon$ goes to zero.
The full proof is provided in {Appendix} \ref{ARFP_1_3_5May22_1}.

On one hand, the right hand side of \eqref{eq:CA-tensor-fro-bound} grows only linearly with the approximation error $\epsilon$, with a scalar that increases only logarithmically in terms of the order $N$. This shows the TT cross approximation can be stable for well selected interpolation sets. Recall that by \eqref{eq:lower-bound}, for any cross approximation $\wh \calT$ of the form \eqref{TEBTTCA3_1} with $r_k$ selected rows and columns in the sets $\{I^{\leq k},I^{> k}\}$, $\|\calT - \wh\calT\|_F \ge \epsilon$ necessarily holds since $\wh \mT^{\langle k \rangle}$ has rank at most $r_k$. While it may be unfair to compare the element-wise error bounds~\cite{Savostyanov14,Osinsky19}, we present \cite[Theorem 1]{Savostyanov14} below which also involves  $\epsilon$ and a similar quantity to $\kappa$:
 \[
 \|\calT - \wh\calT\|_{\max} \le (2r + r\wt\kappa +1)^{\lceil \log_2 N \rceil}  (r+1) \epsilon, \quad \text{where} \ \norm{\calT}{\max} = \max_{i_1,\dots,i_N}\abs{\mathcal{T}(i_1,\dots,i_N)}
 \]
 and $\wt \kappa = \max_k r_k \|\mT^{\langle k \rangle}\|_{\max}  \cdot \|\mT^{\langle k \rangle}(I^{\le k}, I^{>k})^{-1}\|_{\max}$.
 Loosely speaking, $\kappa$ and $\wt \kappa$ can be approximately viewed as the condition number of the submatrix $\mT^{\langle k \rangle}(I^{\le k}, I^{>k})$ with respect to the spectral norm and the Chebyshev norm, respectively. Again, we note that our bound \eqref{eq:CA-tensor-fro-bound} is for the entire tensor, while $\|\calT - \wh\calT\|_{\max}$ only concerns the largest element-wise error.

On the other hand, \Cref{thm:CA-tensor-error} only shows the existence of one such stable cross approximation, but it does not specify which one. As a consequence, the result may not hold for the nested sampling strategy illustrated in \Cref{Figure_11May_2} and the parameter $\kappa$ maybe out of one's control. The following result addresses these issues by providing guarantees for any cross approximation.

\begin{theorem}
\label{TEBTTCA3_4_3}
Suppose $\calT$ can be approximated by a TT format tensor with rank $r$ and approximation error $\epsilon $ defined in \eqref{eq:epsilon}. Let $\mT_{r_k}^{\<k\>}$ be the best rank $r_k$ approximation of the $k$-th unfolding matrix $\mT^{\<k\>}$ and  $\mT_{r_k}^{\<k\>} = {\bm W}_{(k)}{\bm \Sigma}_{(k)}{{\bm V}_{(k)}}^T$ be its compact SVD. For any  interpolation sets $\{I^{\leq k},I^{> k}\}$ such that
${\rm rank}({\bm T}^{\<k\>}_{r_k}(I^{\leq k},I^{>k}))=r_k, k=1,\dots,N-1$,
denote by
\begin{align*}
    & a=\max_{k=1,\dots, N-1}\left\{ \norm{[{\bm W}_{(k)}(I^{\leq k},:)]^{\dag}}{},\ \norm{[{{\bm V}_{(k)}}(I^{>k},:)]^{\dag}}{}\right\}, \  c=\max_{k=1,\dots, N-1} \norm{ [{{\bm T}^{\<k\>}}(I^{\leq k},I^{>k})]^{\dag}}{}.  
\end{align*}
Then the cross approximation $\wh{\mathcal{T}}$ in \eqref{TEBTTCA3_1}  with appropriate thresholding parameters $\tau_k$ for the truncated pseudo-inverse satisfies
\begin{eqnarray}
    \label{TEBTTCA3_4_4}
    \|\mathcal{T}-\wh{\mathcal{T}}\|_F &\lesssim& (a^2 r + a^2 c r \epsilon + a^2 c^2 \epsilon^2)^{{\lceil \log_2N \rceil} -1} \cdot \parans{a^2\epsilon + a^2 c\epsilon^2 + a^2c^2 \epsilon^3},
\end{eqnarray}
where $\lesssim$ means less than or equal to up to some universal constant.
\end{theorem}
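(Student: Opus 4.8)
The plan is to reduce the tensor estimate to a single matrix cross approximation bound that I then apply recursively through a \emph{balanced binary} splitting of the mode set, so that the recursion depth is $\lceil\log_2 N\rceil$ rather than $N$. I would first isolate the base case. Abbreviate $\mM:=\mT^{\langle k\rangle}$, $\mM_r:=\mT_{r_k}^{\langle k\rangle}=\mW_{(k)}\mSigma_{(k)}\mV_{(k)}^{\T}$, $I:=I^{\le k}$, $J:=I^{>k}$, and write $\mM=\mM_r+\mE_k$ with $\|\mE_k\|_F\le\epsilon$. Since $\rank(\mM_r(I,J))=r_k$, \Cref{thm:cross-matrix-exact-rank} gives the exact identity $\mM_r(:,J)[\mM_r(I,J)]^{\dagger}\mM_r(I,:)=\mM_r$, which lets me expand $\mM-\mM(:,J)[\mM(I,J)]_{\tau_k}^{\dagger}\mM(I,:)$ in powers of $\mE_k$. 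Using $[\mM_r(I,J)]^{\dagger}=(\mV_{(k)}(J,:)^{\T})^{\dagger}\mSigma_{(k)}^{-1}(\mW_{(k)}(I,:))^{\dagger}$ together with the collapses $\mW_{(k)}(I,:)^{\dagger}\mW_{(k)}(I,:)=\mId_{r_k}$ and $\mV_{(k)}(J,:)^{\T}(\mV_{(k)}(J,:)^{\T})^{\dagger}=\mId_{r_k}$, the zeroth-order term cancels, and a Wedin/Stewart perturbation estimate for the truncated pseudoinverse (the threshold $\tau_k$ chosen at the scale of $\sigma_{r_k}(\mM(I,J))$ so that $\|[\mM(I,J)]_{\tau_k}^{\dagger}\|\lesssim c$ and $\|[\mM(I,J)]_{\tau_k}^{\dagger}-[\mM_r(I,J)]^{\dagger}\|\lesssim c^2\epsilon$) organizes the remainder by order in $\epsilon$ into the base estimate $A:=a^2\epsilon+a^2c\epsilon^2+a^2c^2\epsilon^3$.

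Next I would set up the recursion by splitting the modes at the middle index $m\approx N/2$. The cross approximation factorizes through $\mM=\mT^{\langle m\rangle}$ as $\mT^{\langle m\rangle}\approx \mT^{\langle m\rangle}(:,I^{>m})[\mT^{\langle m\rangle}(I^{\le m},I^{>m})]_{\tau_m}^{\dagger}\mT^{\langle m\rangle}(I^{\le m},:)$, where the reshaped column factor $C:=\mT^{\langle m\rangle}(:,I^{>m})$ and row factor $R:=\mT^{\langle m\rangle}(I^{\le m},:)$ are themselves order-$(\le\lceil N/2\rceil+1)$ subtensors that get cross-approximated, yielding $\wh C,\wh R$. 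Writing $\wh\calT$ through $\wh C,\wh R$ and applying the triangle inequality gives
\[
\|\calT-\wh\calT\|_F \le \|\mT^{\langle m\rangle}-C[\mU_m]_{\tau_m}^{\dagger}R\|_F + \|[\mU_m]_{\tau_m}^{\dagger}R\|\,\|C-\wh C\|_F + \|\wh C[\mU_m]_{\tau_m}^{\dagger}\|\,\|R-\wh R\|_F,
\]
with $\mU_m:=\mT^{\langle m\rangle}(I^{\le m},I^{>m})$. The first term is bounded by $A$; each remaining term transports a child error by a one-sided factor that I bound through $C[\mM_r(I,J)]^{\dagger}=\mW_{(k)}\mW_{(k)}(I,:)^{\dagger}$ (norm $\le a$) and $(\mV_{(k)}(J,:)^{\T})^{\dagger}\mV_{(k)}^{\T}$ (norm $\le a$), whose composition through the rank-$r$ bottleneck carrying the Frobenius-measured child error produces the leading $a^2r$, while replacing $\mM_r(I,J)$ by the true $\mU_m$ and its truncated pseudoinverse contributes the corrections $a^2cr\epsilon+a^2c^2\epsilon^2$; this defines $B:=a^2r+a^2cr\epsilon+a^2c^2\epsilon^2$. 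A supporting lemma I would prove first is that the reshaped factors $C,R$ inherit a rank-$r$ approximation error of the same order $\epsilon$ (up to constants absorbed in $a$), so the recursion closes with unchanged $a,c,r,\epsilon$ at every level.

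Finally I would unroll. Both subtensors have order $\le\lceil N/2\rceil+1$ and inherited error $\lesssim\epsilon$, so with $E(N):=\|\calT-\wh\calT\|_F$ the two-child split yields $E(N)\lesssim A+B\,E(\lceil N/2\rceil)$, the factor $2$ from the two branches being absorbed into $B$ (or into $\lesssim$); iterating to the base case after $\lceil\log_2 N\rceil$ levels gives the geometric sum $E(N)\lesssim A\sum_{j=0}^{\lceil\log_2 N\rceil-1}B^{j}\lesssim B^{\lceil\log_2 N\rceil-1}A$, which is exactly \eqref{TEBTTCA3_4_4}. The step I expect to be the main obstacle is the recursion bookkeeping in the second part: proving that the selected-column and selected-row subtensors remain $\epsilon$-approximately rank $r$ so that the same constants recur, bounding the transport factors with the correct higher-order $\epsilon$ terms coming from the truncated pseudoinverse, and ensuring the two-child split accumulates geometrically in the \emph{depth} $\lceil\log_2 N\rceil$ rather than paying a spurious $2^{\lceil\log_2 N\rceil}=N$ factor across the $2^{\text{depth}}$ leaves. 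Getting this exponent right is the delicate point.
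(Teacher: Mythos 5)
Your overall architecture is the same as the paper's: a balanced binary splitting of the modes with recursion depth $\lceil\log_2 N\rceil$, a Hamm--Huang-type matrix CUR perturbation bound \cite{Hamm21} at each node (your base estimate $A=a^2\epsilon+a^2c\epsilon^2+a^2c^2\epsilon^3$ matches the paper's $\beta_1$, and your transport recursion $E(N)\lesssim A+B\,E(\lceil N/2\rceil)$ matches the paper's $e_{l+1}\le\beta_1+\alpha_1 e_l$), followed by the same geometric unrolling. Your worry about paying a $2^{\mathrm{depth}}$ factor over the leaves is resolved exactly as in the paper: each parent's error is controlled by the \emph{maximum}, not the sum, of its two children's errors, so defining $e_l$ as the layer-wise maximum keeps the accumulation geometric in the depth.

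The genuine gap sits precisely where you defer it as ``the main obstacle'': the choice of the thresholds $\tau_k$ inside the recursion, which is what the theorem's phrase ``appropriate thresholding parameters'' is doing. After telescoping, one term is bilinear in the child errors --- in the paper's splitting it is $\|\mE_C[\mU]^{\dag}_{\tau_k}\mE_R\|_F\le\frac{1}{\tau_k}\|\mE_C\|_F\|\mE_R\|_F$, and in yours it is hidden inside $\|\wh C[\mU_m]^{\dag}_{\tau_m}\|\,\|R-\wh R\|_F$ once you write $\wh C=C-\mE_C$. Your spectral choice $\tau_k\sim\sigma_{r_k}(\mU)$ only yields $\|[\mU]^{\dag}_{\tau_k}\|\lesssim c$, so this term is bounded by $c\,e_l^2$, quadratic in the child error; since $e_l$ grows like $B^{l-1}A$, the recursion $e_{l+1}\le A+Be_l+ce_l^2$ closes to the stated bound only under an extra smallness condition on $\epsilon$ (roughly $c\,B^{L-2}A\lesssim B$ with $L=\lceil\log_2 N\rceil$), which is not among the hypotheses. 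The paper's device is to make the threshold level-dependent and \emph{error-matched}: $\tau_k=e_l$ for nodes at level $l$. Then the bilinear term is $\le e_l$, and the truncation penalty $\|\mU-[\mU]_{\tau_k}\|_F\le\sqrt{\|\mF^{\langle k\rangle}(I^{\le k},I^{>k})\|_F^2+r_k^2\tau_k^2}\lesssim\epsilon+r e_l$ is also linear in $e_l$; this penalty, multiplied by the two transport factors of size $a$ and $b$, is in fact the actual source of the leading $a^2 r$ term in your $B$ --- not a ``rank-$r$ bottleneck'' in the transport itself, which only contributes $(a+b)e_l$ with no factor of $r$. Without this (or an equivalent) choice, your recursion does not close unconditionally, and the $r$ in the base of the exponent cannot be obtained the way you sketch it.
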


The proof of \Cref{TEBTTCA3_4_3} is given in {Appendix} \ref{ARFP_1_3}. As in \eqref{eq:CA-tensor-fro-bound}, the right hand side of \eqref{TEBTTCA3_4_4} scales only polynomially in $\epsilon$ and logarithmically in terms of the order $N$. Compared with \eqref{eq:CA-tensor-fro-bound}, the guarantee in \eqref{TEBTTCA3_4_4} holds for any cross approximation, but the quality of the cross approximation depends on the tensor $\calT$ as well as the selected rows and columns as reflected by the parameters $a$ and $c$. On one hand, $\norm{[{\bm W}_{(k)}(I^{\leq k},:)]^{\dag}}{}$  and $\norm{[{{\bm V}_{(k)}}(I^{>k},:)]^{\dag}}{}$ can achieve their smallest possible value 1 when the singular vectors ${\bm W}_{(k)}$ and ${\bm V}_{(k)}$ are the canonical basis. On the other hand, one may construct examples with large $\norm{[{\bm W}_{(k)}(I^{\leq k},:)]^{\dag}}{}$  and $\norm{[{{\bm V}_{(k)}}(I^{>k},:)]^{\dag}}{}$. Nevertheless, these terms can be upper bounded by selecting appropriate rows and columns. For example, for any orthonormal matrix $\mW\in\R^{m\times r}$, if we select $I$ such that $\mW(I,:)$ has maximal volume among all $|I|\times r$ submatrices of $\mW$, then $\norm{\mW(I,:)^{\dag}}{} \le \sqrt{1 + \frac{r(m - |I|)}{|I| - r +1}}$ \cite[Proposition 5.1]{Hamm21}. In practice, without knowing ${\bm W}_{(k)}$ and ${\bm V}_{(k)}$, we may instead find the interpolation sets by maximizing the volume of $\mT^{\langle k \rangle}(I^{\le k}, I^{>k})$ using algorithms such as
greedy restricted cross interpolation (GRCI) \cite{Savostyanov14}. We may also exploit derandomized algorithms  \cite{Cortinovis20Low} to find interpolation sets that could potentially yield small quantities $a,b,c$.
We finally note that \Cref{TEBTTCA3_4_3} uses the truncated pseudo-inverse in \eqref{TEBTTCA3_1} with appropriate $\tau_k$. We also observe from the experiments in \Cref{SIMRESULTS} that the truncated pseudo-inverse could lead to better performance than the inverse (or pseudo-inverse) for TT cross approximation \eqref{TEBTTCA3_1}.


\subsection{Error Analysis for TT Cross Approximation with Noisy Measurements}
\label{CAWithMETT}

We now consider the case where the measurements of tensor elements are not exact, but noisy. This happens, for example, in the measurements of a quantum state where each entry of the many-body wavefunction or density matrix can only be measured up to some statistical error that depends on the number of repeated measurements. In this case, the measured entries ${\bm T}(I^{\leq k-1},i_k,I^{>k})$ and $[{\bm T}(I^{\leq k},I^{>k})]_{\tau_k}$ in the cross approximation \eqref{TEBTTCA3_1} are noisy. Our goal is to understand how such noisy measurements will affect the quality of cross approximation.

To simplify the notation, we let $\mathcal{E}$ denote the measurement error though it has non-zero values only at the selected interpolation sets $\{I^{\leq k},I^{> k}\}$. Also let $\wt\calT= {\mathcal{T}}+\mathcal{E}$ and $\wt{\bm T}^{\<k\>}= {{\bm T}}^{\<k\>}+{\bm E}^{\<k\>}$, where ${\bm E}^{\<k\>}$ denotes the $k$-th unfolding matrix of the noise. Then, the cross approximation of ${\mathcal{T}}\in\R^{d_1\times\cdots\times d_N}$ with noisy measurements, denoted by $\wh\calT$, has entries given by
\begin{eqnarray}
    \label{TEBTTME2_1_1_1}
    \wh{\mathcal{T}}(i_1,\dots,i_N)=\prod_{k=1}^{N}\widetilde{\bm T}(I^{\leq k-1},i_k,I^{>k})[\widetilde{\bm T}(I^{\leq k},I^{>k})]_{\tau_k}^{\dag}.
\end{eqnarray}
One can view $\wh \calT$ as a cross approximation of $\wt \calT$ with exact measurements of $\wt \calT$. However, we cannot directly apply \Cref{TEBTTCA3_4_3} since it would give a bound for $\|\wt \calT-\wh{\mathcal{T}}\|_F$ and that would also depend on the singular vectors of $\wt{\bm T}^{\<k\>}$. The following result addresses this issue.
\begin{theorem}
\label{TEBTTME2_1_1}
Suppose $\calT$ can be approximated by a TT format tensor with rank $r$ and approximation error $\epsilon $ defined in \eqref{eq:epsilon}. Let $\mT_{r_k}^{\<k\>}$ be the best rank $r_k$ approximation of the $k$-th unfolding matrix $\mT^{\<k\>}$ and  $\mT_{r_k}^{\<k\>} = {\bm W}_{(k)}{\bm \Sigma}_{(k)}{{\bm V}_{(k)}}^T$ be its compact SVD. Let $\wt\calT= {\mathcal{T}}+\mathcal{E}$ denote the noisy tensor where $\calE$ represents the measurement error. For any interpolation sets $\{I^{\leq k},I^{> k}\}$ such that
${\rm rank}({\bm T}^{\<k\>}_{r_k}(I^{\leq k},I^{>k}))=r_k, k=1,\dots,N-1$,
denote by
\begin{align*}
    & a=\max_{k=1,\dots, N-1}\left\{ \norm{[{\bm W}_{(k)}(I^{\leq k},:)]^{\dag}}{},\ \norm{[{{\bm V}_{(k)}}(I^{>k},:)]^{\dag}}{}\right\}, \  c=\max_{k=1,\dots, N-1} \norm{ [{ \wt{\bm T}^{\<k\>}}(I^{\leq k},I^{>k})]^{\dag}}{}.  
\end{align*}
Denote by $\ol \epsilon = \epsilon + \|\mathcal{E}\|_F$. Then the noisy cross approximation $\wh{\mathcal{T}}$ in \eqref{TEBTTME2_1_1_1} with appropriate thresholding parameters $\tau_k$ for the truncated pseudo-inverse satisfies
\begin{eqnarray}
    \label{TEBTTME2_1_2}
    \|\mathcal{T}-\wh{\mathcal{T}}\|_F &\lesssim& (a^2 r + a^2 c r \ol\epsilon + a^2 c^2 \ol\epsilon^2)^{{\lceil \log_2N \rceil} -1} \cdot \parans{a^2\ol\epsilon + a^2 c\ol\epsilon^2 + a^2c^2 \ol\epsilon^2}.
\end{eqnarray}
\end{theorem}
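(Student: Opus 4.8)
The plan is to mirror the proof of \Cref{TEBTTCA3_4_3}, viewing the noisy cross approximation $\wh\calT$ as the \emph{exact}-measurement cross approximation of the perturbed tensor $\wt\calT = \calT + \calE$, while taking care that the singular vectors of the \emph{perturbed} unfoldings never enter the final bound. A naive reduction---applying \Cref{TEBTTCA3_4_3} verbatim to $\wt\calT$---fails precisely because its geometry parameter $a$ would then be built from the singular vectors of $\wt{\bm T}^{\<k\>}$, and reconnecting those to the clean singular vectors $\mW_{(k)},\mV_{(k)}$ would require a Davis--Kahan/Wedin argument and hence an unwanted dependence on spectral gaps. Instead I would keep the clean rank-$r_k$ projections $\mW_{(k)}\mW_{(k)}^{\T}$ and $\mV_{(k)}\mV_{(k)}^{\T}$ as the fixed reference subspaces throughout, so that $a$ remains exactly the quantity in the statement, and route the whole noise contribution into an enlarged residual and into the pseudoinverse of the \emph{noisy} intersection matrices (which is how $c$ is defined here).

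Two pieces of bookkeeping drive everything. First, I would bound the low-rank residual of each noisy unfolding by
\[
\norm{\wt{\bm T}^{\<k\>} - \wt{\bm T}_{r_k}^{\<k\>}}{F} \le \norm{\wt{\bm T}^{\<k\>} - \mT_{r_k}^{\<k\>}}{F} \le \norm{\mT^{\<k\>} - \mT_{r_k}^{\<k\>}}{F} + \norm{{\bm E}^{\<k\>}}{F} \le \epsilon + \norm{\calE}{F} = \ol\epsilon,
\]
where $\wt{\bm T}_{r_k}^{\<k\>}$ is the best rank-$r_k$ approximation of the noisy unfolding (so the first inequality is optimality of the SVD truncation) and $\norm{{\bm E}^{\<k\>}}{F} = \norm{\calE}{F}$ because unfolding preserves the Frobenius norm. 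Thus every appearance of $\epsilon$ in the single-step estimate is promoted to $\ol\epsilon$. Second, I would re-derive the matrix-level cross-approximation estimate underlying \Cref{TEBTTCA3_4_3} in the perturbed setting: for a generic reshaped matrix $\mA$ with clean best rank-$r$ approximation $\mW {\bm \Sigma} \mV^{\T}$ and noisy sampled blocks $\wt\mA(:,J)$, $[\wt\mA(I,J)]^{\dag}_{\tau}$, $\wt\mA(I,:)$, I would split the error against $\mW\mW^{\T}$ and $\mV\mV^{\T}$, expand, and collect terms. The factors $\norm{\mW(I,:)^{\dag}}{}$ and $\norm{\mV(J,:)^{\dag}}{}$ supply the powers of $a$, the noisy intersection pseudoinverse supplies the powers of $c$, and $\ol\epsilon$ supplies the residual powers, producing the per-level factor $a^2 r + a^2 c r\,\ol\epsilon + a^2 c^2\ol\epsilon^2$ and the base term $a^2\ol\epsilon + a^2 c\,\ol\epsilon^2 + a^2 c^2\ol\epsilon^2$.

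With this perturbed single-step estimate in hand, I would run the same balanced binary divide-and-conquer recursion over the $N$ tensor modes as in \Cref{TEBTTCA3_4_3}: split the mode index set into two halves, bound the cross-approximation error of the whole in terms of the errors of the two sub-tensors together with one interface term controlled by the per-level factor above, and iterate. The recursion tree has depth $\lceil\log_2 N\rceil$, so the per-level factor is raised to the power $\lceil\log_2 N\rceil - 1$ and multiplied by the base term, yielding \eqref{TEBTTME2_1_2}.

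The main obstacle is exactly the decoupling flagged above: keeping the geometry parameter $a$ free of noise by performing every error split against the clean singular subspaces $\mW_{(k)},\mV_{(k)}$ rather than the perturbed ones, so that the noise is quarantined inside $\ol\epsilon$ and the noisy pseudoinverse $c$. A secondary technical point is to make ``appropriate $\tau_k$'' precise: the rank hypothesis $\rank(\mT_{r_k}^{\<k\>}(I^{\le k},I^{>k})) = r_k$ is stated for the \emph{clean} intersection, so I must verify that for sufficiently small $\ol\epsilon$ the truncated pseudoinverse of the \emph{noisy} intersection matrix still isolates the intended $r_k$ directions and keeps $c$ finite---this is where the truncation in \eqref{TEBTTME2_1_1_1} is essential.
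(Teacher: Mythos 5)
Your proposal follows essentially the same route as the paper's proof: the paper likewise compares $\wh\calT$ directly to the clean $\calT$ (rather than to $\wt\calT$), keeps the clean singular subspaces $\mW_{(k)},\mV_{(k)}$ as references so that $a$ stays noise-free, re-derives the matrix-level cross-approximation lemmas with noisy sampled blocks and the noisy intersection pseudoinverse $c$ (its Theorem~\ref{TEBTTME1_1} and Lemmas~\ref{ARFP_1_4_9Mar22_1}--\ref{ARFP_1_4_9Mar22_3}, with every $\epsilon$ promoted to $\xi+\epsilon=\ol\epsilon$), and then runs the balanced-tree recursion of depth $\lceil\log_2 N\rceil$. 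The only detail you leave open---the precise choice of thresholds---is resolved in the paper by setting $\tau_k = e_l$ at level $l$, which turns the quadratic cross term $\tfrac{1}{\tau_k}\|\mE_C\|_F\|\mE_R\|_F$ into a term linear in $e_l$ so the recursion closes.
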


The proof of \Cref{TEBTTME2_1_1} is given in {Appendix} \ref{ARFP_1_4}. In words, \Cref{TEBTTME2_1_1} provides a similar guarantee to that in \Cref{TEBTTCA3_4_3} for TT cross approximation with noisy measurements and shows that it is also stable with respect to measurement error.
The parameter $c$ in \Cref{TEBTTME2_1_1} depends on the spectral norm of $[{ \wt{\bm T}^{\<k\>}}(I^{\leq k},I^{>k})]^{\dag}$ and is defined in this form for simplicity. It can be bounded by $[{{\bm T}^{\<k\>}}(I^{\leq k},I^{>k})]^{\dag}$ and the noise and is expected to be close to $[{{\bm T}^{\<k\>}}(I^{\leq k},I^{>k})]^{\dag}$ for random noise. We finally note that the measurement error $\calE$ only has $O(Nr^2)$ non-zero elements, and thus $\|\calE\|_F$ is not exponentially large in terms of $N$.

\section{Numerical Experiments}
\label{SIMRESULTS}

In this section, we conduct numerical experiments to evaluate the performance of the cross approximation \eqref{TEBTTME2_1_1_1} with noisy measurements. We generate an $N$-th order tensor $\calT\in\R^{d_1\times \cdots \times d_N}$ approximately in the TT format as $
\calT = \calT_r + \eta \calF,$
where $\mathcal{T}_{r}$ is in the TT format  with mode ranks $[r_1,\dots,r_{N-1}]$, which is generated from truncating a random Gaussian tensor using a sequential SVD \cite{Oseledets11}, and $\mathcal{F}$ is a random tensor with independent entries generated from the normal distribution. We then normalize $\calT_r$ and $\calF$ to unit Frobenius norm. Thus $\eta$ controls the low-rank approximation error. To simplify the selection of parameters, we let $d = d_1 = \cdots = d_N = 2, r = r_1 = \cdots = r_{N-1}$, and
$\tau=\tau_1=\dots=\tau_{N-1}$ for the cross approximation \eqref{TEBTTME2_1_1_1}.

We apply the greedy restricted cross interpolation (GRCI) algorithm in \cite{Savostyanov14} for choosing the interpolation sets $I^{\leq k}$ and $I^{>k}$, $k=1,\dots,N$. For convenience, we set $r' = r'_1 = \cdots = r'_{N-1}$ for the number of selected rows and columns indexed by $I^{\leq k}$ and $I^{>k}$. During the procedure of cross approximation, for each of the $(N-2)dr'^2 + 2 dr' - r'^2$ observed entries, we add measurement error randomly generated from the Gaussian distribution of mean 0 and variance $\tfrac{1}{(N-2)dr'^2 + 2 dr' - r'^2}$.
As in \Cref{CAWithMETT}, for convenience, we model the measurement error as a tensor $\calE$ such that $\E\|\calE\|_F^2 =1$. To control the signal-to-noise level, we scale the noise by a factor $\mu$. 
Thus, we have
\begin{eqnarray}
    \label{SIMRESULTS1_1}
    \wt{\mathcal{T}}=\mathcal{T}+\mu\mathcal{E}=\mathcal{T}_{r} + \eta\mathcal{F} + \mu\mathcal{E},
\end{eqnarray}
where $\norm{\calT_r}{F}^2 = \norm{\calF}{F}^2 = \E\|\calE\|_F^2 = 1$.

We measure the normalized mean square error (MSE) between $\mathcal{T}$ and the cross approximation $\widehat{\mathcal{T}}$ by
\begin{eqnarray}
    \label{SIMRESULTS1_2}
    \text{MSE}=10\text{log}_{10}\frac{\|\mathcal{T}-\widehat{\mathcal{T}}\|_F^2}{\|\mathcal{T}\|_F^2}.
\end{eqnarray}

\begin{figure}
\centering
\vspace{-0.7in}
\begin{subfigure}{4.6cm}
\includegraphics[width=4.6cm,keepaspectratio]{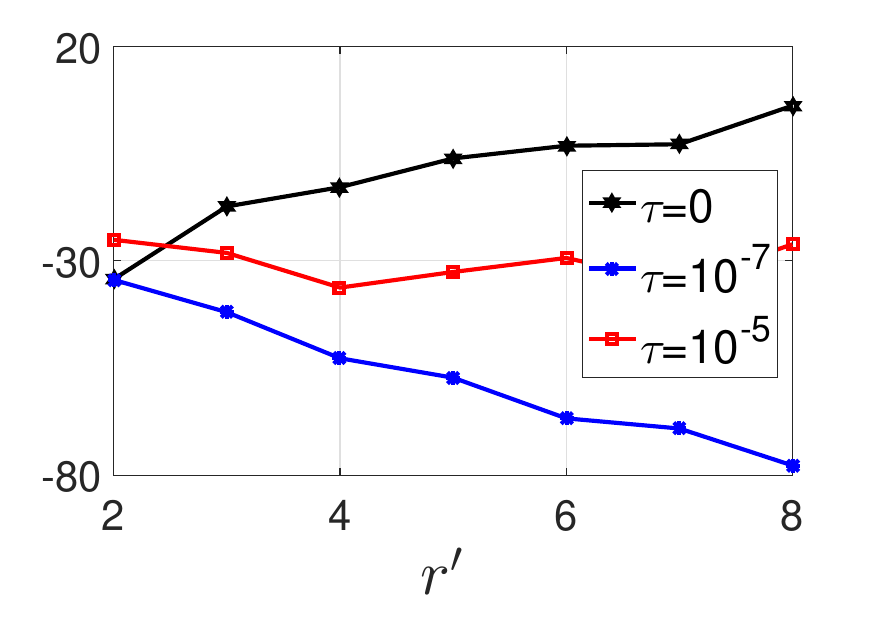}
\vspace{-.15in}
\caption{}
\label{Figure1}
\end{subfigure}
\begin{subfigure}{4.6cm}
\includegraphics[width=4.6cm,keepaspectratio]{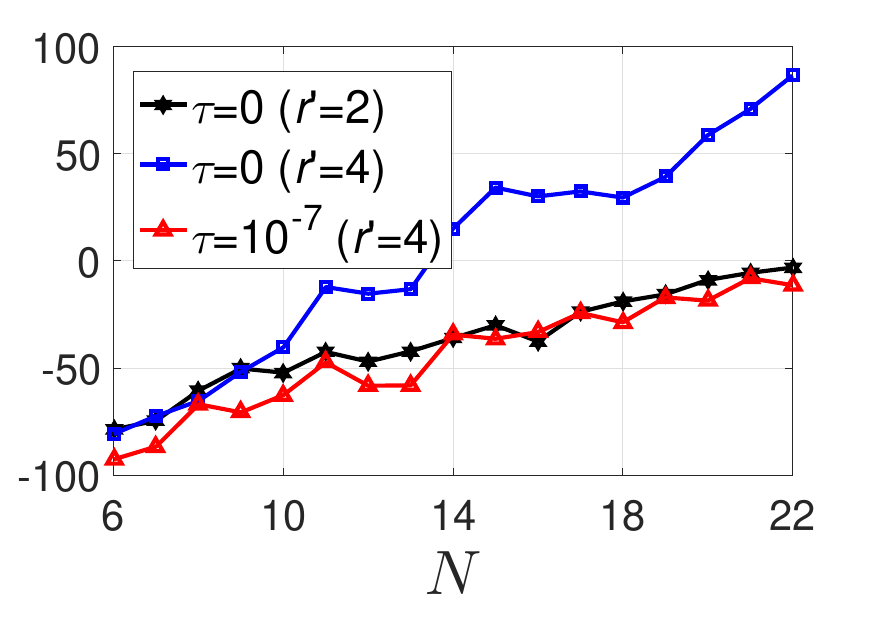}
\vspace{-.15in}\caption{}
\label{Figure4}
\end{subfigure}
\begin{subfigure}{4.6cm}
\includegraphics[width=4.6cm,keepaspectratio]{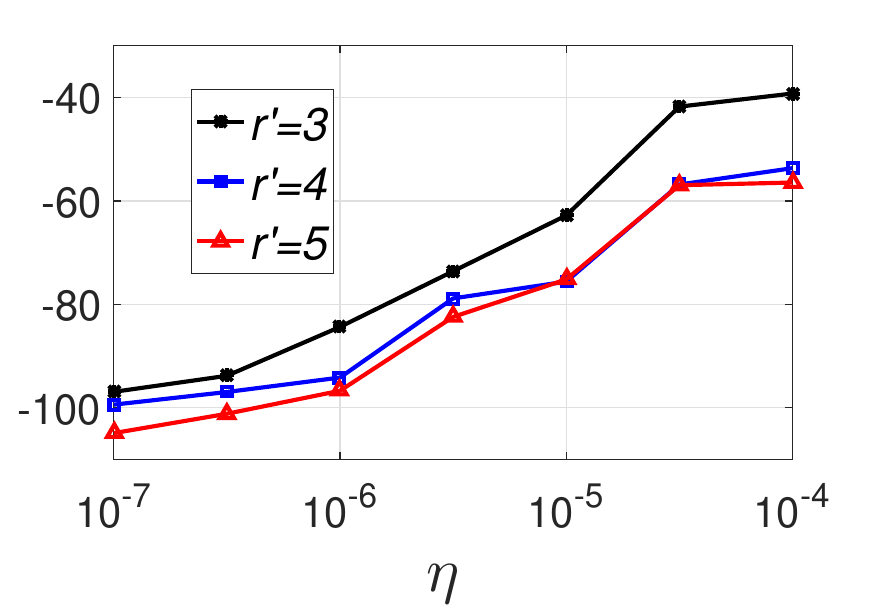}
\vspace{-.15in}\caption{}
\label{Figure2}
\end{subfigure}
\begin{subfigure}{4.6cm}
\includegraphics[width=4.6cm,keepaspectratio]{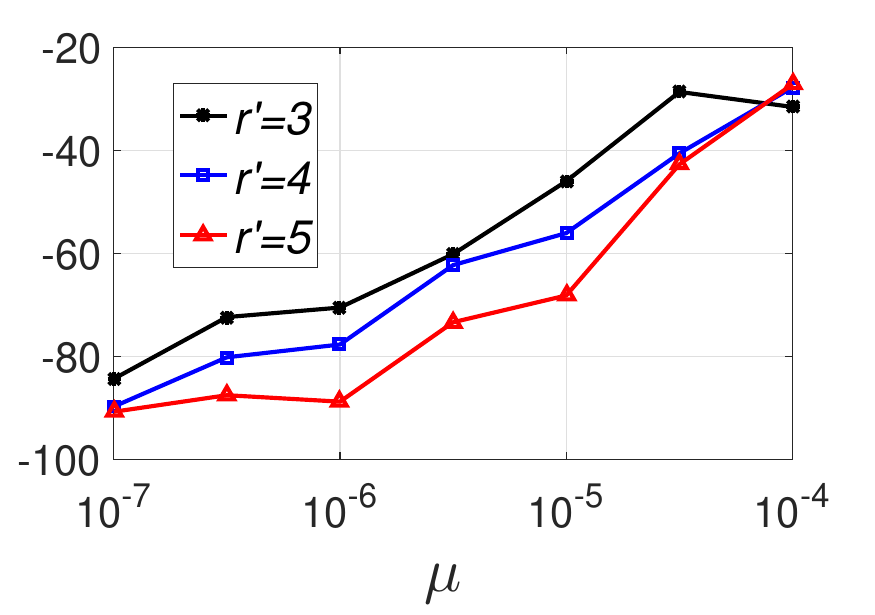}
\vspace{-.15in}\caption{}
\label{Figure3}
\end{subfigure}
\begin{subfigure}{4.6cm}
\includegraphics[width=4.6cm,keepaspectratio]{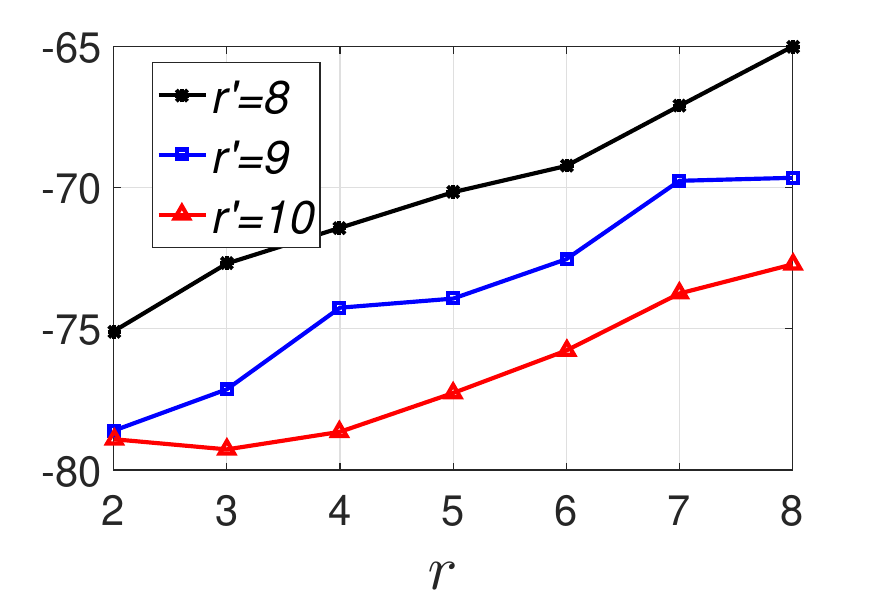}
\vspace{-.15in}\caption{}
\label{Figure1_1}
\end{subfigure}
\begin{subfigure}{4.6cm}
\includegraphics[width=4.6cm,keepaspectratio]{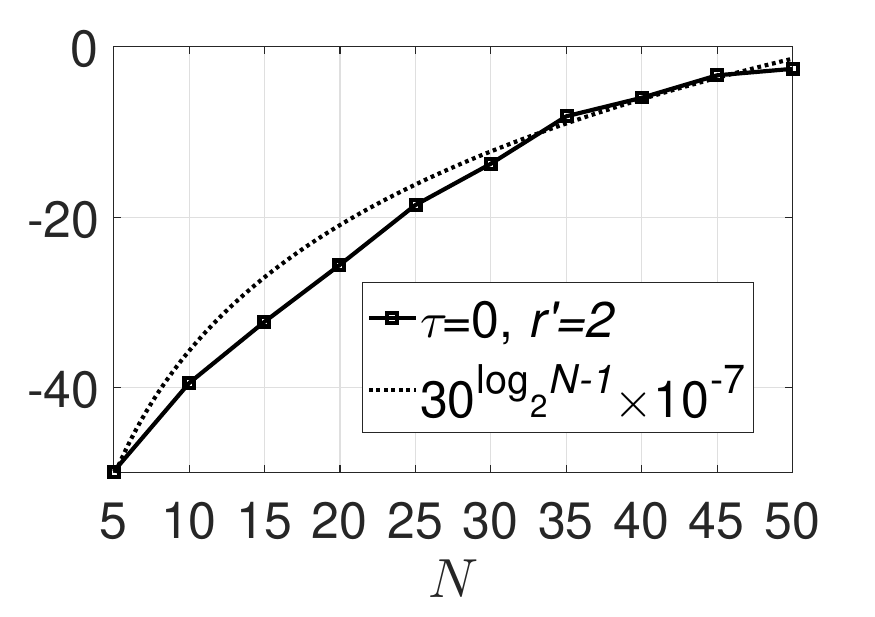}
 \vspace{-.15in}\caption{}
 \label{fig:large N}
 \end{subfigure}
\vspace{-.1in}
\caption{Performance (in MSE (dB) defined in \eqref{SIMRESULTS1_2}) of tensor cross approximations with truncated pseudo-inverse $\tau>0$ and exact pseudo-inverse $\tau = 0$ for $(a)$ different estimated rank $r'$ with $N=10,r=2,\mu=\eta=10^{-5}$, $(b)$ different tensor order $N$ with $r=2,\mu=\eta=10^{-5}$, $(c)$ different level $\eta$ of deviation from TT format with $N=10,r=2,\mu=10^{-5}$, $(d)$ different noise level $\mu$ with $N=10,r=2,\eta=10^{-5}$, $(e)$ different rank $r$ with $N=10,\mu=\eta=10^{-5}$, $(f)$ different tensor order $N$ with $r=2,\mu=0, \eta=10^{-5}$.}
\label{fig:TTCA-performance}
\vspace{-.2in}
\end{figure}

In the first experiment, we set $N=10$, $d=2$, $r=2$ and $\mu=\eta=10^{-5}$ and compare the performance of tensor cross approximations with different $\tau$ and $r'$. From Fig.~\ref{Figure1}, we observe that when $r'$ is specified as the exact rank (i.e., $r'=r$), all of the TT cross approximations are very accurate except when $\tau$ is relatively large and thus useful information is truncated. On the other hand, when $r'>r$, the performance of cross approximations with the exact pseudo-inverse ($\tau =0$) degrades. This is because the intersection matrices $\mT(I^{\leq k},I^{>k})$ in \eqref{TEBTTCA3_1} or $\widetilde{\bm T}(I^{\leq k},I^{>k})$ in \eqref{TEBTTME2_1_1_1} become ill-conditioned. Fortunately, since the last $r'-r$ singular values of $\mT(I^{\leq k},I^{>k})$ and $\widetilde{\bm T}(I^{\leq k},I^{>k})$ only capture information about the approximation error and measurement error, this issue can be addressed by using the truncated pseudo-inverse with suitable $\tau>0$ which can make TT cross approximation stable. We notice that larger $r'$ gives better cross approximation due to more measurements.

In the second experiment, we test the performance of tensor cross approximations with different tensor orders $N$. For $N$ ranging from $6$ to $22$, we set all mode sizes $d=2$ and $\mu=\eta=10^{-5}$. Here $d = 2$ corresponds to a quantum state, and while $d = 2$ is small, the tensor is still huge for large $N$ (e.g., $N = 22$). We also set all ranks $r=2$.
From Fig.~\ref{Figure4}, we can observe that the TT cross approximation with the pseudo-inverse is stable as $N$ increases when $r' = r$ (black curve) but becomes unstable when $r' > r$ (blue curve). Again, for the latter case when $r' > r$, the TT cross approximation achieves stable performance by using the truncated pseudo-inverse.

In the third experiment, we demonstrate the performance for different $\eta$, which controls the level of low-rank approximation error. Here, we set $\mu = 10^{-7}$, $r = 2$,
$\tau=10^{-2}\eta$, and consider the over-specified case where $r'$ ranges from 3 to 5. Fig. ~\ref{Figure2} shows that the performance of TT cross approximation with the truncated pseudo-inverse is stable with the increase of $\eta$ and different  $r'$.


In the fourth experiment, we demonstrate the performance for different measurement error levels $\mu$.
Similar to the third experiment, we set $\eta = 10^{-7}$, $r = 2$,
$\tau=10^{-2}\mu$, and consider the over-specified case where $r'$ ranges from 3 to 5.
We observe from Fig.~\ref{Figure3} that tensor train cross approximation still provides stable performance with increasing $\mu$ and different $r'$.

In the fifth experiment, we test the performance for different $r$. We set $\mu=\eta=10^{-5}$, $d=2$, $\tau=10^{-2}\eta$ and consider the case where $r'$ ranges from $8$ to $10$.
Fig. 3e shows that the peformance of TT cross approximation is stable with the increase of $r$, with recovery error in the curves roughly increasing as $O(r^3)$ which is consistent with \Cref{TEBTTME2_1_1}. Note that in practice $r$ is often unknown a priori. For this case, the results in the above figures show that we can safely use a relatively large estimate $r'$ for tensor train cross approximation with the truncated pseudo-inverse.

In the final experiment, we test the performance with further larger tensor orders $N$. To achieve this goal, we work on tensors in the exact TT format (i.e., $\eta = 0$) since in this case we only need to store the tensor factors with $O(Nr^2)$ elements rather than the entire tensor. We set $r' = r=2$, $\mu=10^{-5}$ and consider one case where $N$ varies from $5$ to $50$. We observe from Fig.~\ref{fig:large N} that the approximation error curve increases only logarithmically in terms of the order $N$, which is consistent with \eqref{TEBTTME2_1_2}.

\section{Conclusion and Outlook}
\label{conclusion}

While TT cross approximation has achieved excellent performance in practice, existing theoretical results only provide element-wise approximation accuracy guarantees. In this paper, we provide the first error analysis in terms of the entire tensor for TT cross approximation. For a tensor in an approximate TT format, we prove that the cross approximation is stable in the sense that the approximation error for the entire tensor does not grow exponentially with the tensor order, and we extend this guarantee to the scenario of noisy measurements.   Experiments verify our results.

An important future direction we will pursue is to study the implications of our results for quantum tomography. In particular, we aim to find out whether our results imply that the statistical errors encountered in quantum measurements will not proliferate if we perform cross approximation of the quantum many-body state. Another future direction of our work is to design optimal strategies for the selections of the tensor elements to be measured. One possible approach is to extend methods developed for matrix cross approximation \cite{Cortinovis20Low}, and another approach is to incorporate leverage score sampling \cite{chen2015completing,eftekhari2018mc2,rudi2018fast}. It is also interesting to use local refinement to improve the performance of TT cross approximation.
We leave these explorations to future work.

\section*{Acknowledgment} We acknowledge funding support from NSF Grants No. CCF-1839232, PHY-2112893, CCF-2106834 and CCF-2241298, as well as the W. M. Keck Foundation.  We also thank the four anonymous reviewers for their constructive comments.



\newpage
\appendices


\section{Review of error analysis for matrix cross approximation}
\label{ReviewMatrixCrossApproximation}

We first review the following result which establishes an element-wise approximation guarantee for a particular cross approximation.
\begin{theorem}\cite{Goreinov01} For any $\mA\in\R^{m\times n}$, if $\mU = \mA(I,J)$ is an $r\times r$ submatrix of maximal volume (maximum absolute value of the determinant), then
\[
\norm{\mA - \mC  \mU^{\dagger} \mR}{\max} \le (r+1)\sigma_{r+1}(\mA),
\]
where $\norm{\mA}{\max} = \max_{ij}\abs{a_{ij}}$.
\label{thm:CUR-elementwise-bound}\end{theorem}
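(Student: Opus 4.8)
The plan is to bound the error matrix $\mF := \mA - \mC\mU^{\dagger}\mR$ entrywise and then take the maximum. First I would note that the cross approximation reproduces the selected rows and columns exactly, so (as in the reasoning behind \Cref{thm:cross-matrix-exact-rank}) $\mF(i,:)=0$ for $i\in I$ and $\mF(:,j)=0$ for $j\in J$; it therefore suffices to bound $|\mF(i,j)|$ for $i\notin I$ and $j\notin J$. Fix such a pair and assume $\mU$ is nonsingular (its volume is positive, otherwise $\rank(\mA)<r$ and the statement degenerates). I would then form the $(r+1)\times(r+1)$ submatrix $\mB := \mA(I\cup\{i\}, J\cup\{j\})$ whose leading $r\times r$ block is $\mU$, and record the Schur-complement identity $\mF(i,j)=\det(\mB)/\det(\mU)$, equivalently that the bottom-right entry of $\mB^{-1}$ equals $1/\mF(i,j)$.

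The crux of the argument—and the step I expect to require the most care—is converting the maximal-volume hypothesis into a statement about $\mB^{-1}$. Writing each entry through the adjugate (cofactor) formula $(\mB^{-1})_{pq}=(-1)^{p+q}M_{qp}/\det(\mB)$, I would observe that every minor $M_{qp}$ is, up to sign, the determinant of an $r\times r$ submatrix of $\mA$ obtained by deleting one row and one column of $\mB$. Since $\mU$ has maximal volume among \emph{all} $r\times r$ submatrices of $\mA$, we get $|M_{qp}|\le|\det(\mU)|$ for every $p,q$, and hence $|(\mB^{-1})_{pq}|\le |\det(\mU)|/|\det(\mB)| = |(\mB^{-1})_{r+1,r+1}|$. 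In other words, the bottom-right entry dominates all entries of $\mB^{-1}$ in magnitude, so $\norm{\mB^{-1}}{\max}=1/|\mF(i,j)|$. This entrywise-domination consequence of maxvol is the heart of the proof; the remaining steps are norm bookkeeping.

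Finally I would pass from this entrywise bound to a spectral one. Using $\norm{\mB^{-1}}{2}\le\norm{\mB^{-1}}{F}\le (r+1)\norm{\mB^{-1}}{\max}$ for the $(r+1)\times(r+1)$ matrix $\mB^{-1}$, together with $\norm{\mB^{-1}}{2}=1/\sigma_{r+1}(\mB)$, gives $1/\sigma_{r+1}(\mB)\le (r+1)/|\mF(i,j)|$, i.e. $|\mF(i,j)|\le (r+1)\sigma_{r+1}(\mB)$. The proof then closes by invoking the submatrix singular-value inequality $\sigma_{r+1}(\mB)\le\sigma_{r+1}(\mA)$ (valid because $\mB$ is a submatrix of $\mA$), yielding $|\mF(i,j)|\le (r+1)\sigma_{r+1}(\mA)$ uniformly in $i,j$ and hence the claimed $\max$-norm bound. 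The degenerate situation where $\mB$ is singular is consistent with the bound, since then $\det(\mB)=0$ forces $\mF(i,j)=0$.
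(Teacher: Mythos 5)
Your proof is correct, and since the paper states \Cref{thm:CUR-elementwise-bound} without proof (it is quoted from \cite{Goreinov01}), the right comparison is to that source: your argument --- the Schur-complement identity $\mF(i,j)=\det(\mB)/\det(\mU)$, the cofactor/maxvol observation that the corner entry of $\mB^{-1}$ dominates all entries, the norm step $\norm{\mB^{-1}}{2}\le\norm{\mB^{-1}}{F}\le(r+1)\norm{\mB^{-1}}{\max}$, and singular-value monotonicity $\sigma_{r+1}(\mB)\le\sigma_{r+1}(\mA)$ --- is precisely the classical Goreinov--Tyrtyshnikov proof. One small caution on your parenthetical: when every $r\times r$ volume of $\mA$ vanishes (i.e.\ $\rank(\mA)<r$), the statement does not merely ``degenerate'' but can actually fail (a zero block is then a legitimate maximal-volume choice, yet $\mC\mU^{\dagger}\mR=\mzero$ while $\sigma_{r+1}(\mA)=0$), so nonsingularity of $\mU$ is a genuine standing hypothesis of the theorem rather than a recoverable special case.
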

On one hand, similar to \Cref{thm:cross-matrix-exact-rank}, \Cref{thm:CUR-elementwise-bound} ensures no approximation error when $\mA$ is low-rank. On the other hand, when $\mA$ is approximately low-rank, \Cref{thm:CUR-elementwise-bound} ensures that cross approximation via the maximal volume principle is stable as each entry is perturbed at most proportionally to $\sigma_{r+1}(\mA)$ which is expected to be small. However, if we are interested in an approximation guarantee for the entire matrix (say in the Frobenius norm) instead of each entry, then directly applying the above result leads to the loose bound
\[
\norm{\mA - \mC  \mU^{\dagger} \mR}{F} \le (r+1)\sqrt{mn} \cdot \sigma_{r+1}(\mA),
\]
which could be much worse than the best rank-$r$ approximation. The recent work \cite{Zamarashkin18} provides a much tighter approximation guarantee in the Frobenius norm for cross approximation.
\begin{theorem}\cite{Zamarashkin18}
For any $\mA\in\R^{m\times n}$, there exist indices $I\in[m], J\in[n]$ such that the cross approximation \eqref{CAWME_8Mar22_2} satisfies
\[
\norm{\mA - \mC  \mU^{\dagger} \mR}{F} \le (r+1)\norm{\mA - \mA_r}{F},
\]
where $\mA_r$ denotes the best rank-$r$ approximation of $\mA$ measured by the Frobenius norm.
\label{thm:CP-matrix-fro-exsit}\end{theorem}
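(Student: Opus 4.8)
The plan is to reduce the bound to controlling a single Schur complement and then to select the index sets $I,J$ so that this Schur complement inherits only the residual energy $\norm{\mA - \mA_r}{F}$. First I would record the \emph{interpolation property} of the cross approximation: after permuting the selected rows $I$ and columns $J$ to the leading positions and writing $\mA = \begin{bmatrix} \mU & \mB \\ \mD & \mG\end{bmatrix}$ with $\mU = \mA(I,J)\in\R^{r\times r}$, whenever $\mU$ is invertible one has $\mA - \mC\mU^{-1}\mR = \begin{bmatrix} \mzero & \mzero \\ \mzero & \mG - \mD\mU^{-1}\mB\end{bmatrix}$, so that $\norm{\mA - \mC\mU^\dagger\mR}{F}$ equals the Frobenius norm of the Schur complement $\mG - \mD\mU^{-1}\mB$. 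This is exactly the matrix analogue of the interpolation identity underlying the TT estimates, and it isolates the entire error into one block.

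Next I would parametrize the blocks through the compact SVD $\mA_r = \mW\mSigma\mV^\T$ of the best rank-$r$ approximation, writing $\mW_1 = \mW(I,:)$, $\mV_1 = \mV(J,:)$, and $\mE = \mA - \mA_r$ with its four blocks. A direct computation shows that if $\mE$ were absent the Schur complement would vanish (using $(\mW_1\mSigma\mV_1^\T)^{-1} = \mV_1^{-\T}\mSigma^{-1}\mW_1^{-1}$), which recovers \Cref{thm:cross-matrix-exact-rank}. Hence $\mG - \mD\mU^{-1}\mB$ is a function purely of the residual $\mE$, and expanding it yields a sum of terms each carrying exactly one factor of an $\mE$-block together with spectral factors built from $\mW_1^{-1}$, $\mV_1^{-1}$ and $\mU^{-1}$. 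Applying the submultiplicative bound $\norm{\mX\mY}{F}\le\norm{\mX}{}\norm{\mY}{F}$ then lets me pull out $\norm{\mE}{F}\le\norm{\mA-\mA_r}{F}$ and leaves a scalar depending only on $\norm{\mW_1^{-1}}{}$, $\norm{\mV_1^{-1}}{}$ and $\norm{\mU^{-1}}{}$.

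Finally I would choose $I$ and $J$ by a \emph{maximal-volume} principle so that these spectral factors are as small as possible: selecting $\mW_1$ and $\mV_1$ as the maximal-volume $r\times r$ submatrices of the orthonormal factors $\mW$ and $\mV$ guarantees invertibility and controls their inverses, or equivalently one maximizes the volume of $\mA(I,J)$ directly and expresses each Schur-complement entry as the determinant ratio $\det\mA(I\cup i, J\cup j)/\det\mA(I,J)$. I expect the \textbf{main obstacle} to be obtaining the sharp constant $(r+1)$ rather than a crude dimension- or condition-number-dependent factor: the naive submultiplicative estimate over-counts, so the tight result requires either a Cramer/volume-ratio argument that bounds the augmented $(r+1)\times(r+1)$ determinants against the maximal $r\times r$ volume and then sums squares over the whole block, or an averaging (volume-sampling) argument in which the expected squared Schur-complement energy is shown to be at most $(r+1)\norm{\mA-\mA_r}{F}^2$, forcing a good deterministic choice to exist. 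Reconciling the skeleton structure, which uses only the intersection $\mU$ and not full orthogonal projections, with the clean factor $(r+1)$ is the crux of the argument.
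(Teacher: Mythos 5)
There is a genuine gap, and it sits exactly where you flagged the ``main obstacle.'' Your primary route --- select $I,J$ by the maximal-volume principle and bound the Schur complement by submultiplicative estimates involving $\norm{\mW(I,:)^{-1}}{}$, $\norm{\mV(J,:)^{-1}}{}$, $\norm{\mU^{-1}}{}$ --- cannot yield the stated bound. First, the submultiplicative expansion inevitably produces a constant that depends on the conditioning of the selected submatrices (this is precisely what \Cref{TEBTTCA1_1} in the appendix delivers, with factors $\|{\bm W}^{\dagger}(I,:)\|_2$, $\|{\bm V}^{\dagger}(J,:)\|_2$, $\|{\bm U}^{\dagger}\|_2$, and it is also not linear in the residual: quadratic terms in $\norm{\mA-\mA_r}{F}$ appear), not the universal factor $(r+1)$. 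Second, and more fundamentally, no refinement of the max-volume selection can rescue this route: as the paper notes immediately after the theorem, \cite{Zamarashkin18} exhibits a matrix for which the cross approximation built from a maximum-volume submatrix has Frobenius error \emph{larger} than $\sqrt{\max(m,n)}\norm{\mA-\mA_r}{F}$. Maximal volume gives the element-wise (Chebyshev-norm) guarantee of \Cref{thm:CUR-elementwise-bound}; it provably does not give the Frobenius-norm guarantee of \Cref{thm:CP-matrix-fro-exsit}.

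The correct argument is the one you mention only in passing as a fallback. The proof in \cite{Zamarashkin18}, as the paper summarizes, treats the index sets $I\in[m]$ and $J\in[n]$ with $|I|=|J|=r$ as random variables and studies the \emph{expectation} of $\norm{\mA-\mC\mU^{\dagger}\mR}{F}$ over all pairs $(I,J)$ (an averaging of volume-sampling type, with weights tied to the squared volumes of the intersection submatrices); once the average is shown to be at most $(r+1)\norm{\mA-\mA_r}{F}$, some deterministic pair must achieve it, which is exactly the existence statement --- note the theorem only asserts existence and names no selection rule. Your Schur-complement reduction and the determinant-ratio identity for its entries are legitimate ingredients of that computation, but the conclusion comes from averaging, not from any explicit choice of $I,J$. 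As written, your proposal hedges between a route that provably fails and the correct one, without committing to or carrying out the latter.
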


\Cref{thm:CP-matrix-fro-exsit} shows that cross approximation could be stable and have approximation error comparable to the best rank-$r$ approximation up to a factor of $(r+1)$. \Cref{thm:CP-matrix-fro-exsit} is proved by viewing $I\in[m]$ and $J\in[n]$ with $|I| = |J| = r$ as random variables and studying the expectation of $\norm{\mA - \mC  \mU^{\dagger} \mR}{F}$ over all $(I,J)$. Note that \Cref{thm:CP-matrix-fro-exsit} is not valid for cross approximations constructed using submatrices of maximum volume. In other words, one \cite{Zamarashkin18} could construct a counter-example $\mA$ for which the cross approximations $\mC\mU^{-1}\mR$ constructed using submatrices of maximum volume have approximation error larger than $\sqrt{\max(m,n)}\norm{\mA - \mA_r}{F}$. On the other hand, ignoring the worst-case examples,
the work \cite{Zamarashkin21} establishes a similar guarantee for cross approximations constructed using submatrices of maximum projective volume for random matrices. Finally, we note that a derandomized algorithm is proposed in \cite{Cortinovis20Low} that finds a cross approximation achieving the bound in \Cref{thm:CP-matrix-fro-exsit}. The work \cite[Corollary 4.3]{Hamm21} studies the approximation guarantee of cross approximation for the best rank-$r$ $\mA_r$ with any selected rows and columns. Below we extend this result for $\mA$.

\begin{theorem}
\label{TEBTTCA1_1}
Let $\mA\in \R^{m\times n}$ be an approximately low-rank matrix that can be decomposed as $\mA= {\bm A}_r+{\bm F}$. Then the cross approximation \eqref{CAWME_8Mar22_2} with ${\rm rank}(\mA_r(I,J))=r$ satisfies
\begin{eqnarray}
    \label{TEBTTCA1_2}
    &&\hspace{-1.4cm}\|{\bm A}- {\bm C} {\bm U}^{\dagger} {\bm R}\|_F\leq(\|{\bm W}^{\dagger}(I,:)\|_2+\|{\bm V}^{\dagger}(J,:)\|_2+3\|{\bm W}^{\dagger}(I,:)\|_2\|{\bm V}^{\dagger}(J,:)\|_2+1)\|{\bm F}\|_F\nonumber\\
    &&\hspace{1.1cm}+(\|{\bm W}^{\dagger}(I,:)\|_2+\|{\bm V}^{\dagger}(J,:)\|_2+\|{\bm W}^{\dagger}(I,:)\|_2\|{\bm V}^{\dagger}(J,:)\|_2+1)\|{\bm U}^{\dagger}\|_2\|{\bm F}\|_F^2,
\end{eqnarray}
where ${\bm A}_r={\bm W}{\bm \Sigma}{\bm V}^\top$ is the compact SVD of ${\bm A}_r$.
\end{theorem}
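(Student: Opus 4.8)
The plan is to reduce the desired bound for $\mA$ to the known perturbation bound for its best rank-$r$ approximation $\mA_r$, and then absorb the difference by a single triangle inequality. Writing $\mA = \mA_r + \mF$ and observing that the cross factors $\mC = \mA(:,J)$, $\mU = \mA(I,J)$, $\mR = \mA(I,:)$ are determined only by the selected indices, I would first split
\[
\|\mA - \mC\mU^\dagger\mR\|_F \le \|\mA - \mA_r\|_F + \|\mA_r - \mC\mU^\dagger\mR\|_F = \|\mF\|_F + \|\mA_r - \mC\mU^\dagger\mR\|_F .
\]
Thus it suffices to control $\|\mA_r - \mC\mU^\dagger\mR\|_F$, in which the \emph{same} cross factors built from $\mA$ are now compared against the rank-$r$ target $\mA_r$.

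Next I would record the perturbation structure that makes this term amenable to \cite{Hamm21}. With $\mC_r = \mA_r(:,J) = \mW\mSigma\mV(J,:)^\top$, $\mU_r = \mA_r(I,J) = \mW(I,:)\mSigma\mV(J,:)^\top$, and $\mR_r = \mA_r(I,:) = \mW(I,:)\mSigma\mV^\top$, one has $\mC = \mC_r + \mF(:,J)$, $\mU = \mU_r + \mF(I,J)$, $\mR = \mR_r + \mF(I,:)$, each perturbation block having Frobenius norm at most $\|\mF\|_F$. Since $\rank(\mA_r(I,J)) = r = \rank(\mA_r)$, \Cref{thm:cross-matrix-exact-rank} gives the exact identity $\mA_r = \mC_r\mU_r^\dagger\mR_r$, which serves as the unperturbed baseline against which the perturbed product $\mC\mU^\dagger\mR$ is measured. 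In this setting \cite[Corollary 4.3]{Hamm21} furnishes exactly the bound \eqref{TEBTTCA1_2} but with the constant $1$ removed from the first parenthesis (its left-hand side being $\|\mA_r - \mC\mU^\dagger\mR\|_F$ rather than $\|\mA - \mC\mU^\dagger\mR\|_F$). Substituting this estimate into the split above and merging the extra $\|\mF\|_F$ into the linear coefficient reproduces \eqref{TEBTTCA1_2}; the added $\|\mF\|_F$ is precisely the origin of the ``$+1$'' in the first parenthesis, while the quadratic term is inherited unchanged.

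The only substantive ingredient is the perturbation estimate for $\|\mA_r - \mC\mU^\dagger\mR\|_F$ borrowed from \cite{Hamm21}. Were one to seek a self-contained derivation, the main obstacle is the perturbation of the pseudo-inverse of the intersection matrix, i.e.\ passing from $\mU_r^\dagger$ to $\mU^\dagger$. The way through is to expand $\mC\mU^\dagger\mR$ about $\mC_r\mU_r^\dagger\mR_r$, exploit the factorizations $\mC_r = \mW(\mSigma\mV(J,:)^\top)$ and $\mR_r = (\mW(I,:)\mSigma)\mV^\top$ to route the error through the restricted pseudo-inverses $\mW(I,:)^\dagger$ and $\mV(J,:)^\dagger$ (which is where $\|\mW^\dagger(I,:)\|_2$ and $\|\mV^\dagger(J,:)\|_2$ enter), and then carefully separate the genuinely first-order terms, scaling like $\|\mF\|_F$, from the second-order terms, which carry the additional factor $\|\mU^\dagger\|_2\|\mF\|_F$. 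Keeping these orders apart, rather than collapsing everything into a single Neumann-series bound, is what produces the clean split between the $\|\mF\|_F$ and $\|\mF\|_F^2$ contributions.
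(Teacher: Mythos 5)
Your proposal is correct and takes essentially the same route as the paper: the identical triangle-inequality split $\|\mA-\mC\mU^{\dagger}\mR\|_F \le \|\mF\|_F + \|\mA_r-\mC\mU^{\dagger}\mR\|_F$, followed by invoking \cite[Corollary 4.3]{Hamm21} for the second term, with the absorbed $\|\mF\|_F$ producing the ``$+1$'' in the linear coefficient exactly as you describe. Your closing sketch of a self-contained perturbation argument is sound but unnecessary, since the paper likewise treats the cited corollary as a black box.
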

Note that \eqref{TEBTTCA1_2} holds for any cross approximation as long as $\mA_r(I,J)$ has rank $r$, but the quality of the cross approximation depends on the matrix $\mA$ as well as the selected rows and columns as reflected by $\|{\bm W}^{\dagger}(I,:)\|_2, \|{\bm V}^{\dagger}(J,:)\|_2,$ and $\|{\bm U}^{\dagger}\|_2$. For example, on one hand, $\|{\bm W}^{\dagger}(I,:)\|_2$ and $\|{\bm V}^{\dagger}(J,:)\|_2$ can achieve their smallest possible value 1 when the singular vectors $\mW$ and $\mV$ are the canonical basis. On the other hand, one may construct examples with large $\|{\bm W}^{\dagger}(I,:)\|_2$ and $\|{\bm V}^{\dagger}(J,:)\|_2$. Nevertheless, these quantities can be upper bounded by selecting appropriate rows and columns \cite[Proposition 5.1]{Hamm21}. In particular, for any orthonormal matrix $\mW$, if we select $I$ such that $\mW(I,:)$ has maximal volume among all $|I|\times r$ submatrices of $\mW$, then we can always upper bound $\|{\bm W}^{\dagger}(I,:)\|_2$ by $\sqrt{1 + \frac{r(m - |I|)}{|I| - r +1}}$. A similar result also holds for $\|{\bm V}^{\dagger}(J,:)\|_2$. Likewise, according to \cite[Proposition 5.1]{Hamm21}, $\|{\bm U}^{\dagger}\|_2$ can be upper bounded by $\sqrt{1 + \frac{r(m - |I|)}{|I| - r +1}} \sqrt{1 + \frac{r(n - |J|)}{|J| - r +1}}\|\mA^\dagger\|_2$.

\begin{proof}(of \Cref{TEBTTCA1_1})
First note that
\[
\|{\bm A}-{\bm C}{\bm U}^{\dagger}{\bm R}\|_F \le \|{\bm A}-{\bm A}_r\|_F+\|{\bm A}_r-{\bm C}{\bm U}^{\dagger}{\bm R}\|_F = \|\mF\|_F + \|{\bm A}_r-{\bm C}{\bm U}^{\dagger}{\bm R}\|_F .
\]
The proof is then completed by invoking
 \cite[Corollary 4.3]{Hamm21}:
\begin{align*}
\label{UpperboundoFA_CUR}
&\hspace{-0.2cm}\|{\bm A}_r-{\bm C}{\bm U}^{\dagger}{\bm R}\|_F\leq(\|{\bm W}^{\dagger}(I,:)\|_2+\|{\bm V}^{\dagger}(J,:)\|_2+3\|{\bm W}^{\dagger}(I,:)\|_2\|{\bm V}^{\dagger}(J,:)\|_2)\|{\bm F}\|_F\nonumber\\
    &\hspace{2.5cm}+(\|{\bm W}^{\dagger}(I,:)\|_2+\|{\bm V}^{\dagger}(J,:)\|_2+\|{\bm W}^{\dagger}(I,:)\|_2\|{\bm V}^{\dagger}(J,:)\|_2+1)\|{\bm U}^{\dagger}\|_2\|{\bm F}\|_F^2.
\end{align*}

\end{proof}

\section{Proof of {Theorem} \ref{TEBTTCA3_2_5May22_1}}
\label{ARFP_1_3_5May22_1}

\subsection{Overview of the Analysis}
\label{sec:overview-proof}


To bound the difference between $\calT$ and $\wh \calT$, we use a similar approach as in \cite{Savostyanov14} that exploits the approximate low-rankness in $\calT$ and the same structures within $\calT$ and
$\wh \calT$. The point of departure is the fact that according to the expression for $\wh \calT$ in \eqref{TEBTTCA3_1}, if we let $\wh\mT^{\la k\ra}$ be the $k$-th unfolding of $\wh \calT$, then there exist $\wh \mC$ and $\wh \mR$ such that  $\widehat{{\bm T}}^{\<k\>}=\widehat{\bm C}[{\bm T}^{\<k\>}(I^{\leq k},I^{>k})]^{\dag}_{\tau_k}\widehat{\bm R}$. Note that $\wh \mC$ and $\wh \mR$ depend on $k$, but we omit such dependence to simplify the notation. On the other hand, $\mT^{\la k\ra}$ (the $k$-th unfolding matrix of $\calT$) is approximately low-rank, and thus can be approximated by the cross approximation in the form of $\mC[{\bm T}^{\<k\>}(I^{\leq k},I^{>k})]^{\dag}_{\tau_k}\mR$. Therefore, the difference between $\widehat{{\bm T}}^{\<k\>}$ and $\mT^{\la k\ra}$ is controlled by the differences between $\mC$ and $\wh \mC$ and $\mR$ and $\wh \mR$. We can then adopt the same approach to bound the difference between $\mC$ and $\wh \mC$ by noting that $\mC = \mT^{\la k\ra}(:,I^{>k})$ contains selected columns of $\mT^{\la k\ra}$, which if reshaped to another matrix (corresponding to another unfolding matrix $\wh\mT^{\la k'\ra}$ with $k'>k$) is also low-rank as shown in \Cref{Figure_11May_2}. The difference between $\mR$ and $\wh \mR$ can also be analyzed by the same approach. We can repeat the above step several times until the ground level where the associated matrices $\mC$ and $\mR$ are equal to $\wh \mC$ and $\wh \mR$, respectively.

Each step of the recursive procedure may amplify the approximation error. To reduce the total number of steps, we use the balanced canonical dimension tree \cite{Grasedyck2011Hierar,Savostyanov14}. As an example, in Figure~\ref{Balanced_Tree} (a modification of \cite[Figure 1]{Savostyanov14}), we illustrate the above mentioned interpolation steps with balanced canonical dimension tree.
Since $\norm{\calT - \wh \calT}{F} = \norm{{{\bm T}}^{\<4\>} -  \widehat{{\bm T}}^{\<4\>} }{F}$,
in the top level (i.e., $3$-rd level) of the figure, we split the multi-index $\{d_1,\dots,d_8\}$ in two parts $\{d_1,\dots,d_4\}$ and $\{d_5,\dots,d_8\}$. Recall that $I^{\leq k}$ and $\{d_1\cdots d_k\}$ (both have size $r_4'$) denote the selected row indices from the multi-index  and  column indices from the multi-index $\{d_{k+1}\cdots d_N\}$, respectively. Since $\widehat{{\bm T}}^{\<4\>}=\widehat \mC_{(4)}[{\bm T}^{\<4\>}(I^{\leq k},I^{>k})]^{\dag}_{\tau_k}\wh \mR_{(4)}$ and ${{\bm T}}^{\<4\>} \approx  \mC_{(4)}[{\bm T}^{\<4\>}(I^{\leq k},I^{>k})]^{\dag}_{\tau_k} \mR_{(4)}$, where $\wh \mC_{(4)} = \wh \mT^{\<4\>}(:,I^{>4})$ and $\wh \mR_{(4)} = \wh \mT^{\<4\>}(I^{\le 4},:)$ (similarly for $\mC_{(4)}$ and $\mR_{(4)}$), the task of analyzing $\norm{{{\bm T}}^{\<4\>} -  \widehat{{\bm T}}^{\<4\>} }{F}$ can be reduced to analyzing $\|\wh \mC_{(4)} - \mC_{(4)}\|_F$ and $\|\wh \mR_{(4)} - \mR_{(4)}\|_F$. Taking $\|\wh \mC_{(4)} - \mC_{(4)}\|_F$ as an example, we can represent $\mC_{(4)}$ through the 2-nd unfolding matrix $\wh \mT^{\la 2\ra}$, i.e., $\mC_{(4)} = \mT^{\<4\>}(:,I^{>4})$ contains the same entries as $\mT^{\la 2\ra}_{:,I>4}$, where $\mT^{\la 2\ra}_{:,I>4}$ is a submatrix of $\mT^{\la 2\ra}$ with the multi-index $d_{5}\cdots d_8$ restricted to $I^{>4}$. Here $\mT^{\la 2\ra}_{:,I>4}$ has size $d_1d_2 \times d_3 d_4 r_4'$ but is low-rank, and thus can be approximated by $\mC_{(2)}[{\bm T}^{\<2\>}_{:,I>4}(I^{\leq 2},I^{>2})]^{\dag}_{\tau_k} \mR_{(2)}$, where $\mC_{(2)} = {\bm T}^{\<2\>}_{:,I>4}(:,I^{>2})$ and $\mR_{(2)} = {\bm T}^{\<2\>}_{:,I>4}(I^{\le 2},:)$. Therefore, $\|\wh \mC_{(4)} - \mC_{(4)}\|_F$  can also be bounded through $\|\wh \mC_{(2)} - \mC_{(2)}\|_F$ and $\|\wh \mR_{(2)} - \mR_{(2)}\|_F$, where
$\wh\mC_{(2)} = \wh{\bm T}^{\<2\>}_{:,I>4}(:,I^{>2})$ and $\wh\mR_{(2)} = \wh{\bm T}^{\<2\>}_{:,I>4}(I^{\le 2},:)$. We can further analyze $\|\wh \mC_{(2)} - \mC_{(2)}\|_F$ by connecting these matrices to $\mT^{\la 1 \ra}$ and $\wh\mT^{\la 1 \ra}$, which have the same entries in the sampled locations. A similar approach can be applied for $\|\wh \mR_{(2)} - \mR_{(2)}\|_F$ by connecting to the 2-nd unfolding of the tensors.  Thus, as shown in Figure~\ref{Balanced_Tree}, the analysis for an 8-th order tensor involves a three-step decomposition until the ground level with no errors in the sampled locations. For a general $N$-th order tensor, the analysis will involve ${\lceil \log_2N \rceil}$ steps of such a decomposition. The main task is to study how the approximation error depends on the previous layer.

\begin{figure}[!ht] 
   \centering
   \includegraphics[width = 1\textwidth]
   {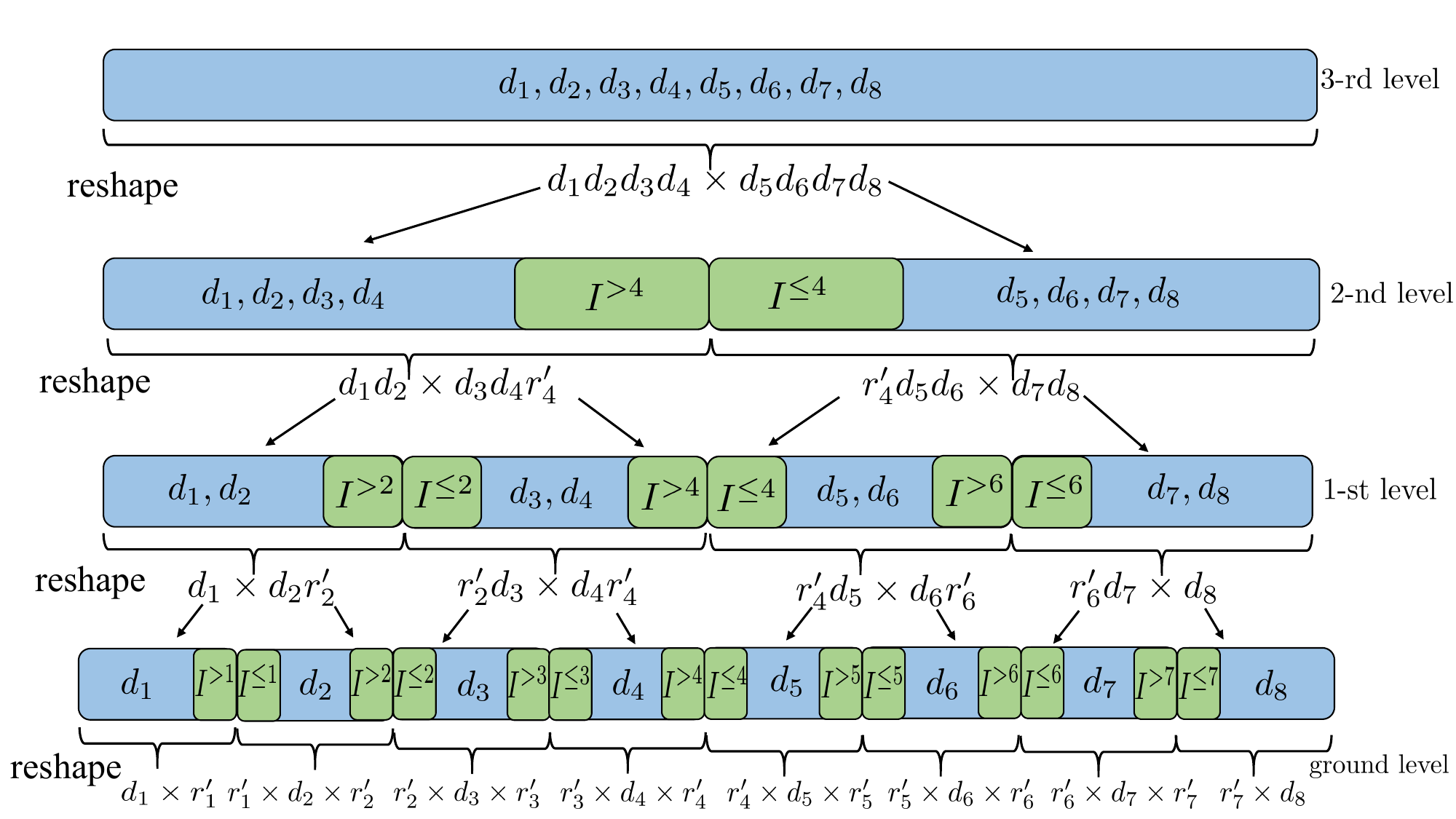}
 \vspace{0in}
   \caption{Interpolation steps on the balanced canonical dimension tree.} 
   \label{Balanced_Tree}
\end{figure}

 In a nutshell, the analysis mainly involves the following two procedures:

\begin{itemize}
[leftmargin=*]
    \item \noindent{\bf Error bound for how approximation error transfers to next level:} For any $1\le p < k <q \le N$, we first define $\mT^{\la k\ra}_{I^{\leq p-1},I^{>q}}$ as a submatrix of $\mT^{\la k\ra}$ when row and column indices $I^{\leq p-1}$ and $I^{>q}$ are respectively chosen from the multi-index $\{d_1\cdots d_{p-1} \}$ and $\{d_{q+1}\cdots d_N  \}$. Following the above discussion, ${{\bm T}}^{\<k\>}_{I^{\leq p-1},I^{>q}}\approx {\bm C} [{\bm U}]^{\dag}_{\tau_k} {\bm R}$ since $\mT^{\la k\ra}_{I^{\leq p-1},I^{>q}}$ is low-rank and  $\widehat{{\bm T}}^{\<k\>}_{I^{\leq p-1},I^{>q}}=\widehat{\bm C}[{\bm U}]^{\dag}_{\tau_k}\widehat{\bm R}$ according to \eqref{TEBTTCA3_1}, where $\mC = \mT^{\la k\ra}_{I^{\leq p-1},I^{>q}}(:,I^{>k})$, $\mR = \mT^{\la k\ra}_{I^{\leq p-1},I^{>q}}(I^{\le k},:) $,
$\mU = \mT^{\la k\ra}(I^{\le k}, I^{>k})$ as
$\mT^{\la k\ra}(I^{\le k}, I^{>k})$ is the same as $\mT^{\la k\ra}_{I^{\leq p-1},I^{>q}}(I^{\le k}, I^{>k})$, and same notation holds for $\wh \mC$ and $\wh \mR$. We will bound $ \|{\bm T}^{\<k\>}_{I^{\leq p-1},I^{>q}}-\widehat{{\bm T}}^{\<k\>}_{I^{\leq p-1},I^{>q}}\|_F$ from above in the form of
    \begin{eqnarray}
       \label{TEBTTCA_8Mar22_3}
        \|{\bm T}^{\<k\>}_{I^{\leq p-1},I^{>q}}-\widehat{{\bm T}}^{\<k\>}_{I^{\leq p-1},I^{>q}}\|_F\leq h_1 + h_2(\|{\bm E}_C\|_F,\|{\bm E}_R\|_F),
        \end{eqnarray}
          where
          \begin{equation}
\label{ARFP_1_3_5May22_1_3}
\mE_{C} = \mC - \wh\mC , \quad \mE_{R} = \mR - \wh\mR.
\end{equation}
          In \eqref{TEBTTCA_8Mar22_3}, $h_1$ is independent to ${\bm E}_C$ and ${\bm E}_R$, while $h_2(\|{\bm E}_C\|_F,\|{\bm E}_R\|_F)$ highlights how the approximation error depends on the previous layer, or how the error transfers to next layer.

    \item \noindent{\bf Error bound for the entire tensor:} We can then recursively apply \eqref{TEBTTCA_8Mar22_3} at most ${\lceil \log_2N \rceil}$ times to get the bound for $\|\mathcal{T}-\wh{\mathcal{T}}\|_F$. In particular, let $e_\ell$ denote the largest approximation error of $\|{\bm E}_C\|_F$ and $\|{\bm E}_R\|_F$ in the $l$-th layer as in \Cref{Balanced_Tree}. Then \eqref{TEBTTCA_8Mar22_3} implies that
    \e
    e_{l+1} \le h_1 + h_2(e_l,e_l).
    \label{TEBTTCA_8Mar22_4}\ee
   Using $e_0 = 0$, we recursively apply the above equation to get the bound for $e_{\lceil \log_2N \rceil}$, which corresponds to $\norm{\calT - \wh \calT}{F}$.
\end{itemize}

\subsection{Main Proofs}
\begin{proof}
We now prove \Cref{TEBTTCA3_2_5May22_1} by following the above two procedures.

\noindent{\bf Error bound for how approximation error transfers to next level:}
Our goal is to derive \eqref{TEBTTCA_8Mar22_3} with $\tau_k = 0$.  Since $\mT^{\la k\ra}_{I^{\leq p-1},I^{>q}}$ is a submatrix of $\mT^{\la k\ra}$ and the latter satisfies $\epsilon_k=\|\mT^{\<k\>} - {\bm T}^{\<k\>}_{r_k}\|_F$,
it follows that $\mT^{\la k\ra}_{I^{\leq p-1},I^{>q}}$ is also approximately low-rank with $\norm{\mT^{\la k\ra}_{I^{\leq p-1},I^{>q}} - {{\bm T}}^{\<k\>}_{r_k,I^{\leq p-1},I^{>q}} }{F} \le \epsilon_k$. Thus, \Cref{thm:CP-matrix-fro-exsit} ensures that there exist indices $I^{\le k}$ and $I^{>k}$ such that
\begin{equation}
\label{ARFP_1_3_5May22_1_1}
\norm{ \mT^{\la k\ra}_{I^{\leq p-1},I^{>q}}
-  \mC  \mU^{-1} \mR
}{F} \le (r+1)\epsilon,
\end{equation}
where $\epsilon = \max_{k=1,\dots, N-1}\epsilon_k$ and $r = \max_{k=1,\dots, N-1}r_k$, $\mC = \mT^{\la k\ra}_{I^{\leq p-1},I^{>q}}(:,I^{>k})$.

We now bound the difference between $\mT^{\la k\ra}_{I^{\leq p-1},I^{>q}}$ and $\wh\mT^{\la k\ra}_{I^{\leq p-1},I^{>q}}$ by
\begin{eqnarray}
\label{eq:prf-ca-tensor-approx-one-folder}
    &&\norm{ \mT^{\la k\ra}_{I^{\leq p-1},I^{>q}} - \wh\mT^{\la k\ra}_{I^{\leq p-1},I^{>q}} }{F} \nonumber\\
    &=& \norm{ \mT^{\la k\ra}_{I^{\leq p-1},I^{>q}} -  \parans{\mC - \mE_{C}} \mU^{-1} \parans{\mR - \mE_{R}} }{F} \nonumber\\
    &\le& \norm{ \mT^{\la k\ra}_{I^{\leq p-1},I^{>q}} -  \mC\mU^{-1}\mR}{F} + \norm{\mE_{C}\mU^{-1}\mR}{F} + \norm{\mC\mU^{-1}\mE_{R}}{F}+ \norm{\mE_{C} \mU^{-1}\mE_{R}}{F} \nonumber\\
    &\le& (r+1)\epsilon + \norm{ \mE_{C}}{F}  \|\mU^{-1}\mR\|_2 + \|\mC\mU^{-1}\|_2 \norm{\mE_{R} }{F}  + \|\mE_{C} \mU^{-1}\|_2 \norm{ \mE_{R} }{F}\nonumber\\
    &\le& (r+1)\epsilon + \kappa \norm{ \mE_{C}}{F} + \kappa \norm{\mE_{R} }{F} + \kappa \norm{\mE_{R} }{F} \nonumber\\
    &\le& (r+1)\epsilon +  3\kappa \max\{\norm{ \mE_{C}}{F}, \norm{ \mE_{R}}{F}\},
\end{eqnarray}
where the third inequality follows because $\mC = \mT^{\la k\ra}_{I^{\leq p-1},I^{>q}}(:,I^{>k})$ is a submatrix of $\mT^{\la k\ra}(:,I^{>k})$ which implies that $\|\mC\mU^{-1}\|_2 \le \norm{\mT^{\la k\ra}_{I^{\leq p-1},I^{>q}}(:,I^{>k}) \cdot  \mT^{\la k\ra}(I^{\le k}, I^{>k})^{-1}}{2} \le \kappa$ and by a similar argument we have $\norm{\mU^{-1}\mR}{2}\le \kappa$ and $\norm{\mE_{C}\mU^{-1}}{2}\le \kappa$.

\noindent{\bf Error bound for the entire tensor:} Following the same notation as \eqref{TEBTTCA_8Mar22_4}, \eqref{eq:prf-ca-tensor-approx-one-folder} shows that
\begin{eqnarray}
\label{ARFP_1_3_5May22_1_4}
    e_{l+1}\leq (r+1)\epsilon +  3\kappa e_l,
\end{eqnarray}
which together with $e_0 = 0$ implies that
\begin{eqnarray}
\label{ARFP_1_3_5May22_1_5}
    \norm{\calT - \wh\calT}{F} =  e_{\lceil \log_2N \rceil} \le \frac{(3\kappa)^{\lceil \log_2 N \rceil}  - 1}{3\kappa -1} (r+1) \epsilon.
\end{eqnarray}

\end{proof}

\section{Proof of {Theorem} \ref{TEBTTCA3_4_3}}
\label{ARFP_1_3}

Before proving \Cref{TEBTTCA3_4_3}, we provide several useful results.

\begin{lemma}\cite[Proposition 6.4]{Hamm21} \label{lem:CU-to-W}
For any rank-$r$ $\mA\in\R^{m\times n}$ with compact SVD $\mA = \mW\mSigma\mV^\top$ where $\mW\in\R^{m\times r}, \mSigma \in\R^{r\times r}$, and $\mV\in\R^{n\times r}$, suppose $\mC\mU^\dagger\mR$ is its CUR decomposition with selected row indices $I$ and column indices $J$. Then
\[
\norm{\mC\mU^\dagger}{2} = \norm{\mW^\dagger(I,:)}{2}, \ \norm{\mU^\dagger \mR}{2} = \norm{\mV^\dagger(J,:)}{2}.
\]
\end{lemma}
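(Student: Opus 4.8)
The plan is to reduce both identities to a single clean closed form for $\mC\mU^\dagger$ (and symmetrically for $\mU^\dagger\mR$), exploiting the exact rank-$r$ structure of $\mA$. First I would write all three matrices in terms of the compact SVD factors. Since $\mA = \mW\mSigma\mV^\top$ with $\mW,\mV$ having orthonormal columns and $\mSigma$ invertible, and since $\mC = \mA(:,J)$, $\mU = \mA(I,J)$, $\mR = \mA(I,:)$, setting $\mW_I := \mW(I,:)$ and $\mV_J := \mV(J,:)$ gives
\[
\mC = \mW\mSigma\mV_J^\top, \quad \mU = \mW_I\mSigma\mV_J^\top, \quad \mR = \mW_I\mSigma\mV^\top.
\]
Because $\mC\mU^\dagger\mR$ is a genuine CUR decomposition of $\mA$ (so that $\rank(\mU)=\rank(\mA)=r$ by \Cref{thm:cross-matrix-exact-rank}), both $\mW_I$ and $\mV_J$ must have full column rank $r$. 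I would record this at the outset, since it is exactly what makes the later cancellations valid.

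Next I would compute $\mU^\dagger$ using the reverse-order law for a full-rank factorization. Writing $\mU = \mW_I\,(\mSigma\mV_J^\top)$ with $\mW_I$ of full column rank and $\mSigma\mV_J^\top$ of full row rank, this yields $\mU^\dagger = (\mSigma\mV_J^\top)^\dagger\mW_I^\dagger$, and a short calculation gives $(\mSigma\mV_J^\top)^\dagger = (\mV_J^\dagger)^\top\mSigma^{-1}$, so that $\mU^\dagger = (\mV_J^\dagger)^\top\mSigma^{-1}\mW_I^\dagger$. Substituting into $\mC\mU^\dagger$ and using $\mV_J^\top(\mV_J^\dagger)^\top = (\mV_J^\dagger\mV_J)^\top = \mId_r$ (full column rank of $\mV_J$) together with $\mSigma\mSigma^{-1}=\mId_r$, the middle factors telescope and I obtain the key identity $\mC\mU^\dagger = \mW\mW_I^\dagger$. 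Symmetrically, using $\mW_I^\dagger\mW_I = \mId_r$, I get $\mU^\dagger\mR = (\mV_J^\dagger)^\top\mV^\top = (\mV\mV_J^\dagger)^\top$.

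Finally, I would pass to spectral norms via the isometry property of the SVD factors. Since $\mW$ has orthonormal columns, $\norm{\mW\mtx{B}}{2} = \norm{\mtx{B}}{2}$ for any matrix $\mtx{B}$ with $r$ rows (because $(\mW\mtx{B})^\top(\mW\mtx{B}) = \mtx{B}^\top\mtx{B}$), so left multiplication by $\mW$ preserves the spectral norm; applied to $\mW_I^\dagger$ this gives $\norm{\mC\mU^\dagger}{2} = \norm{\mW\mW_I^\dagger}{2} = \norm{\mW_I^\dagger}{2}$, which is precisely $\norm{\mW^\dagger(I,:)}{2}$ in the lemma's notation. The same argument for $\mV$, combined with the transpose-invariance of $\norm{\cdot}{2}$, yields $\norm{\mU^\dagger\mR}{2} = \norm{\mV\mV_J^\dagger}{2} = \norm{\mV_J^\dagger}{2}$.

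I expect the only delicate point to be the use of the reverse-order law $\mU^\dagger = (\mSigma\mV_J^\top)^\dagger\mW_I^\dagger$, which fails for general products and holds here only because $\mU = \mW_I(\mSigma\mV_J^\top)$ is a full-rank factorization (first factor full column rank, second factor full row rank). If I preferred to avoid quoting that fact, the fallback is to verify directly that $(\mV_J^\dagger)^\top\mSigma^{-1}\mW_I^\dagger$ satisfies the four Moore--Penrose conditions for $\mU$, which is routine given the full-rank properties of $\mW_I$ and $\mV_J$. Everything else is bookkeeping with orthonormality and the telescoping cancellations.
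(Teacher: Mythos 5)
Your proof is correct. Note, however, that the paper does not actually prove this lemma: it is stated as an imported result, cited verbatim from \cite[Proposition 6.4]{Hamm21}, so your self-contained derivation fills in an argument the paper delegates entirely to the reference. Your route is the natural one and (as far as the identity $\mC\mU^\dagger = \mW\,[\mW(I,:)]^\dagger$, $\mU^\dagger\mR = \bigl(\mV\,[\mV(J,:)]^\dagger\bigr)^\top$ is concerned) essentially reconstructs what Hamm and Huang do: express $\mC,\mU,\mR$ through the compact SVD, use the full-rank factorization $\mU = \mW(I,:)\cdot\mSigma\mV(J,:)^\top$ to get the reverse-order pseudoinverse, telescope, and then invoke the isometry of the orthonormal factors to pass to spectral norms. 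Two small remarks. First, you correctly identify that the whole argument hinges on $\rank(\mU)=\rank(\mA)=r$ (without it the claim is false: for $\mA=\mId$, $I=\{1\}$, $J=\{2\}$ one gets $\mU=0$, so $\norm{\mC\mU^\dagger}{2}=0\neq 1=\norm{[\mW(I,:)]^\dagger}{2}$), and this is indeed the intended reading of ``CUR decomposition'' in \cite{Hamm21}, namely that $\mA=\mC\mU^\dagger\mR$ holds exactly. Second, your appeal to \Cref{thm:cross-matrix-exact-rank} for this rank equality uses the converse of what that theorem states (it proves rank equality $\Rightarrow$ exactness); the direction you need is the trivial chain $\rank(\mA)=\rank(\mC\mU^\dagger\mR)\le\rank(\mU)\le\rank(\mA)$, which is worth stating explicitly rather than attributing to the theorem. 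With that bookkeeping fixed, every step — the formula $(\mSigma\mV(J,:)^\top)^\dagger=([\mV(J,:)]^\dagger)^\top\mSigma^{-1}$, the cancellations via $[\mV(J,:)]^\dagger\mV(J,:)=\mId$ and $[\mW(I,:)]^\dagger\mW(I,:)=\mId$, and the norm-preservation under left multiplication by $\mW$ or $\mV$ — is sound.
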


Noting that ${\bm T}^{\<k\>}_{I^{\leq p-1},I^{>q}}$ is a submatrix of ${\bm T}^{\<k\>}$ which is approximately low-rank, we can approximate ${\bm T}^{\<k\>}_{I^{\leq p-1},I^{>q}}$ by cross approximation. In particular, using \Cref{TEBTTCA1_1} and \Cref{lem:CU-to-W} for ${\bm T}^{\<k\>}_{I^{\leq p-1},I^{>q}}$, we can obtain the following result.

\begin{lemma}
\label{ARFP_1_3_9Mar22_1}
Suppose $\mT_{r_k}^{\<k\>}$ is the best rank $r_k$ approximation of the $k$-th unfolding matrix $\mT^{\<k\>}$, such that ${\bm T}^{\<k\>}={{\bm T}}^{\<k\>}_{r_k} + {\bm F}^{\<k\>}$.
For any $1\leq p < k < q \leq N$ and indices $I^{\leq p-1}$ and $I^{>q}$, we have ${\bm T}^{\<k\>}_{I^{\leq p-1},I^{>q}}={{\bm T}}^{\<k\>}_{r_k,I^{\leq p-1},I^{>q}} + {\bm F}^{\<k\>}_{I^{\leq p-1},I^{>q}}$, where ${\bm T}^{\<k\>}_{I^{\leq p-1},I^{>q}}$, ${{\bm T}}^{\<k\>}_{r_k,I^{\leq p-1},I^{>q}}$ and ${\bm F}^{\<k\>}_{I^{\leq p-1},I^{>q}}$ are respectively submatrices of ${\bm T}^{\<k\>}$, ${{\bm T}}^{\<k\>}_{r_k}$ and ${\bm F}^{\<k\>}$.
Let ${{\bm T}}^{\<k\>}_{r_k}={\bm W}_{(k)}{\bm \Sigma}_{(k)}{{\bm V}_{(k)}}^T$ be the compact SVD of ${{\bm T}}^{\<k\>}_{r_k}$.
Then for any $(I^{\le k}, I^{>k})$ as long as ${\rm rank}({\bm T}^{\<k\>}_{r_k}(I^{\leq k},I^{>k}))=r_k$, we have
\begin{align*}
    &\norm{ {\bm T}^{\<k\>}_{I^{\leq p-1},I^{>q}}-{\bm T}^{\<k\>}_{I^{\leq p-1},I^{>q}}(:,I^{>k})[{\bm T}^{\<k\>}(I^{\leq k},I^{>k})]^{\dag}{\bm T}^{\<k\>}_{I^{\leq p-1},I^{>q}}(I^{\leq k},:) }{F} \\
\leq& \|{\bm F}^{\<k\>}\|_F \Bigg(\|[{\bm W}_{(k)}(I^{\leq k},:)]^{\dag}\|_2\!+\!\|[{{\bm V}_{(k)}}(I^{>k},:)]^{\dag}\|_2\!+\!3\|[{\bm W}_{(k)}(I^{\leq k},:)]^{\dag}\|_2\|[{{\bm V}_{(k)}}(I^{>k},:)]^{\dag}\|_2\!+\!1  \\
  &  +   d\Big(\|[{\bm W}_{(k)}(I^{\leq k},:)]^{\dag}\|_2\!+\!\|[{{\bm V}_{(k)}}(I^{>k},:)]^{\dag}\|_2\!+\!\|[{\bm W}_{(k)}(I^{\leq k},:)]^{\dag}\|_2\|[{{\bm V}_{(k)}}(I^{>k},:)]^{\dag}\|_2+1 \Big)\Bigg),
\end{align*}
where $d = \|[{\bm T}^{\<k\>}(I^{\leq k},I^{>k})]^{\dag}\|_2$.
\end{lemma}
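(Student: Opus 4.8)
The plan is to read the left-hand side as the Frobenius error of a \emph{matrix} cross approximation of the submatrix $\mA := {\bm T}^{\<k\>}_{I^{\leq p-1},I^{>q}}$ and then invoke the matrix bound \Cref{TEBTTCA1_1}. First I would restrict the decomposition ${\bm T}^{\<k\>} = {\bm T}^{\<k\>}_{r_k} + {\bm F}^{\<k\>}$ to the rows indexed by $I^{\leq p-1}$ and the columns indexed by $I^{>q}$, which yields $\mA = \mA_{r_k} + {\bm F}^{\<k\>}_{I^{\leq p-1},I^{>q}}$ with $\mA_{r_k} := {\bm T}^{\<k\>}_{r_k,I^{\leq p-1},I^{>q}}$. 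Since $\mA_{r_k}$ is a submatrix of the rank-$r_k$ matrix ${\bm T}^{\<k\>}_{r_k}$, it has rank at most $r_k$, so it plays the role of the low-rank term required by \Cref{TEBTTCA1_1} with the selected indices $I = I^{\leq k}$, $J = I^{>k}$.

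Two structural observations make the reduction work. First, because $p-1 < k < q$, restricting the outer modes to $I^{\leq p-1}$ and $I^{>q}$ does not disturb the already-selected rows $I^{\leq k}$ and columns $I^{>k}$, so the intersection of the submatrix coincides with the full intersection: $\mA(I^{\leq k},I^{>k}) = {\bm T}^{\<k\>}(I^{\leq k},I^{>k})$ and $\mA_{r_k}(I^{\leq k},I^{>k}) = {\bm T}^{\<k\>}_{r_k}(I^{\leq k},I^{>k})$. This identifies the matrix $\mU$ of \Cref{TEBTTCA1_1} with ${\bm T}^{\<k\>}(I^{\leq k},I^{>k})$, so that $\|\mU^\dagger\|_2 = d$, and shows the hypothesis ${\rm rank}({\bm T}^{\<k\>}_{r_k}(I^{\leq k},I^{>k})) = r_k$ is exactly the rank condition ${\rm rank}(\mA_{r_k}(I^{\leq k},I^{>k})) = r_k$ needed to apply the theorem (which also forces $\mA_{r_k}$ to have rank exactly $r_k$). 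Second, \Cref{TEBTTCA1_1} states its bound through the compact SVD $\mA_{r_k} = \wt{\bm W}\wt{\bm\Sigma}\wt{\bm V}^\top$ of the \emph{submatrix}, whereas the lemma is stated through the singular vectors ${\bm W}_{(k)},{\bm V}_{(k)}$ of the \emph{full} ${\bm T}^{\<k\>}_{r_k}$; to pass between them I would invoke \Cref{lem:CU-to-W} twice. Applied to $\mA_{r_k}$ it gives $\|[\wt{\bm W}(I^{\leq k},:)]^\dagger\|_2 = \|\mA_{r_k}(:,I^{>k})\,\mA_{r_k}(I^{\leq k},I^{>k})^\dagger\|_2$, and applied to ${\bm T}^{\<k\>}_{r_k}$ it gives $\|[{\bm W}_{(k)}(I^{\leq k},:)]^\dagger\|_2 = \|{\bm T}^{\<k\>}_{r_k}(:,I^{>k})\,{\bm T}^{\<k\>}_{r_k}(I^{\leq k},I^{>k})^\dagger\|_2$. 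Because $\mA_{r_k}(:,I^{>k})$ is a row-restriction of ${\bm T}^{\<k\>}_{r_k}(:,I^{>k})$ and both share the same intersection by the first observation, deleting rows cannot increase the spectral norm, giving $\|[\wt{\bm W}(I^{\leq k},:)]^\dagger\|_2 \le \|[{\bm W}_{(k)}(I^{\leq k},:)]^\dagger\|_2$; the symmetric column-restriction argument yields $\|[\wt{\bm V}(I^{>k},:)]^\dagger\|_2 \le \|[{\bm V}_{(k)}(I^{>k},:)]^\dagger\|_2$.

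With these pieces, I would apply \Cref{TEBTTCA1_1} to $\mA$, substitute $\|\mU^\dagger\|_2 = d$, upper-bound the two submatrix SVD quantities by their full-matrix counterparts ${\bm W}_{(k)},{\bm V}_{(k)}$, and finally replace the submatrix residual by the full residual through $\|{\bm F}^{\<k\>}_{I^{\leq p-1},I^{>q}}\|_F \le \|{\bm F}^{\<k\>}\|_F$, so that the first-order and second-order (coefficient-of-$d$) terms collapse into the stated form. I expect the main obstacle to be the second observation: the whole point of the lemma is that the bound depend only on the fixed singular subspaces of ${\bm T}^{\<k\>}_{r_k}$, not on the $(p,q)$-dependent singular vectors of the submatrix, and making this rigorous requires pairing \Cref{lem:CU-to-W} with the monotonicity of the spectral norm under deletion of rows/columns, justified by the intersection identity. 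A lesser bookkeeping point is that \Cref{TEBTTCA1_1} carries a $\|{\bm F}\|_F^2$ factor in its quadratic term, which I would control via $\|{\bm F}^{\<k\>}_{I^{\leq p-1},I^{>q}}\|_F \le \|{\bm F}^{\<k\>}\|_F$ so that $d$ appears multiplied by a single power of $\|{\bm F}^{\<k\>}\|_F$ as written.
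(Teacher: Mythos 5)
Your proposal takes essentially the same route as the paper: the paper's entire proof of this lemma is the one-line reduction ``apply \Cref{TEBTTCA1_1} and \Cref{lem:CU-to-W} to the submatrix ${\bm T}^{\<k\>}_{I^{\leq p-1},I^{>q}}$,'' and your two structural observations --- that nestedness makes the intersection matrices of the submatrix and the full unfolding coincide, and that the submatrix singular-vector quantities are dominated by $\|[{\bm W}_{(k)}(I^{\leq k},:)]^{\dag}\|_2$ and $\|[{\bm V}_{(k)}(I^{>k},:)]^{\dag}\|_2$ via \Cref{lem:CU-to-W} together with monotonicity of the spectral norm under row/column deletion --- are exactly the details the paper leaves implicit, so the approach matches and is sound. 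One caveat on your final bookkeeping step: substituting $\|{\bm F}^{\<k\>}_{I^{\leq p-1},I^{>q}}\|_F\le\|{\bm F}^{\<k\>}\|_F$ into the quadratic term of \Cref{TEBTTCA1_1} yields $d\,\|{\bm F}^{\<k\>}\|_F^2$, not $d\,\|{\bm F}^{\<k\>}\|_F$, and no such substitution can remove the extra power (the single-power form printed in the lemma is in fact dimensionally inconsistent and appears to be a typo in the paper); this is a defect of the lemma's statement rather than of your argument, since when the paper actually invokes the lemma in \eqref{TEBTTCA3_11} it uses the quadratic form $(a+b+ab+1)c\epsilon^2$, which is precisely what your derivation delivers.
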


The following result extends \Cref{lem:CU-to-W} to the case where the matrix is only approximately low-rank.
\begin{lemma}
\label{ARFP_1_3_9Mar22_6}
Suppose $\mA$ is approximately low-rank of the form $\mA=\mA_r+\mF=\mW\mSigma\mV^\top+\mF$, where $\mA_r$ has rank $r$ with compact SVD $\mA_r = \mW\mSigma\mV^\top$. Then for any selected row and column indices $I,J$, and $\tau\geq 0$, we have
\begin{eqnarray}
    \label{ARFP_1_3_9Mar22_7}
    \left\|\mA(:,J)[\mA(I,J)]^{\dag}_{\tau} \right\|_2\leq \left\|{\bm W}^{\dag}(I,:) \right\|_2 + \left(1+\|{\bm W}^{\dag}(I,:)\|_2 \right) \left\|[\mA(I,J)]^{\dag}_{\tau}\right\|_2 \left\|{\bm F}(:,J)\right\|_F,
\end{eqnarray}
and
\begin{eqnarray}
    \label{ARFP_1_3_9Mar22_8}
    \left\|[\mA(I,J)]_{\tau}^{\dag}\mA(I,:)\right\|_2\leq\|{{\bm V}^{\dag}}(J,:)\|_2+ \left(1+\|{{\bm V}^{\dag}}(J,:)\|_2\right)\left\|[\mA(I,J)]^{\dag}_{\tau}\right\|_2\|{\bm F}(I,:)\|_F.
\end{eqnarray}

\end{lemma}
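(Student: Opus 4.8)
The plan is to prove the first inequality \eqref{ARFP_1_3_9Mar22_7} and then deduce the second \eqref{ARFP_1_3_9Mar22_8} by symmetry. For the symmetry step I would apply \eqref{ARFP_1_3_9Mar22_7} to the transpose $\mA^\top = \mV\mSigma\mW^\top + \mF^\top$, selecting rows $J$ and columns $I$: since the truncated pseudoinverse commutes with transposition, $\mA^\top(:,I)[\mA^\top(J,I)]^\dagger_\tau = ([\mA(I,J)]^\dagger_\tau\mA(I,:))^\top$, whose spectral norm is unchanged, while the left singular factor of $\mA_r^\top$ is now $\mV$. Together with $\mF^\top(:,I) = \mF(I,:)^\top$ this reproduces \eqref{ARFP_1_3_9Mar22_8}.

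For \eqref{ARFP_1_3_9Mar22_7}, I would first set $\mP = \mSigma\,\mV(J,:)^\top$, so that $\mA(:,J) = \mW\mP + \mF(:,J)$ and its row submatrix indexed by $I$ is $\mA(I,J) = \mW(I,:)\mP + \mF(I,J)$ (here $\mW^\dagger(I,:)$ denotes the pseudoinverse $(\mW(I,:))^\dagger$ of the row submatrix, matching the lemma statement). The key algebraic step is to re-express the rank-$r$ part $\mW\mP$ through the sampled block $\mA(I,J)$: when $\mW(I,:)$ has full column rank — which holds throughout our applications, since $\rank(\mT^{\<k\>}_{r_k}(I^{\le k}, I^{>k})) = r_k$ forces $\mW_{(k)}(I^{\le k},:)$ to have rank $r_k$ — we have $\mW^\dagger(I,:)\mW(I,:) = \mId$, hence $\mW\mP = \mW\,\mW^\dagger(I,:)\,(\mA(I,J) - \mF(I,J))$. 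Substituting this together with $\mF(:,J)$ gives the three-term expansion
\[
\mA(:,J)[\mA(I,J)]^\dagger_\tau = \mW\mW^\dagger(I,:)\mA(I,J)[\mA(I,J)]^\dagger_\tau - \mW\mW^\dagger(I,:)\mF(I,J)[\mA(I,J)]^\dagger_\tau + \mF(:,J)[\mA(I,J)]^\dagger_\tau .
\]

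I would then bound the three terms by submultiplicativity of the spectral norm, $\norm{\cdot}{2} \le \norm{\cdot}{F}$, and $\norm{\mW}{2} = 1$. The crucial ingredient is that $\mA(I,J)[\mA(I,J)]^\dagger_\tau$ is the orthogonal projection onto the span of the left singular vectors of $\mA(I,J)$ whose singular values exceed $\tau$, so $\norm{\mA(I,J)[\mA(I,J)]^\dagger_\tau}{2} \le 1$; this bounds the first term by $\norm{\mW^\dagger(I,:)}{2}$. The second term is at most $\norm{\mW^\dagger(I,:)}{2}\norm{[\mA(I,J)]^\dagger_\tau}{2}\norm{\mF(I,J)}{F}$ and the third at most $\norm{[\mA(I,J)]^\dagger_\tau}{2}\norm{\mF(:,J)}{F}$; since $\mF(I,J)$ is a submatrix of $\mF(:,J)$ I may replace $\norm{\mF(I,J)}{F}$ by $\norm{\mF(:,J)}{F}$, and collecting terms yields exactly the stated bound.

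The step I expect to require the most care is the treatment of the truncated pseudoinverse: verifying that $\mA(I,J)[\mA(I,J)]^\dagger_\tau$ is genuinely a projection with spectral norm at most $1$ even when some singular values are zeroed out, and confirming the full-column-rank identity $\mW^\dagger(I,:)\mW(I,:) = \mId$ — which is precisely where the quality of the sampling set enters. Beyond these two points the argument is routine norm bookkeeping.
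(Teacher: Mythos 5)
Your proof is correct, and its skeleton is the same as the paper's: both re-express the rank-$r$ part of $\mA(:,J)$ through the sampled block $\mA(I,J)-\mF(I,J)$, expand into the same three terms, bound the leading term via $\|\mA(I,J)[\mA(I,J)]^{\dag}_{\tau}\|_2\le 1$ (the projection fact you isolate), and finish with submultiplicativity and $\|\mF(I,J)\|_F\le\|\mF(:,J)\|_F$; the paper also dispatches \eqref{ARFP_1_3_9Mar22_8} with a one-word ``likewise,'' which your transposition argument makes precise. The genuine differences are two. First, where you produce the factor $\|\mW^{\dag}(I,:)\|_2$ from the identity $\mA_r(:,J)=\mW\mW^{\dag}(I,:)\bigl(\mA(I,J)-\mF(I,J)\bigr)$, valid once $\mW(I,:)$ has full column rank, the paper instead writes $\mA_r(:,J)=\mA_r(:,J)\mA_r^{\dag}(I,J)\mA_r(I,J)$ and cites \Cref{lem:CU-to-W} for $\|\mA_r(:,J)\mA_r^{\dag}(I,J)\|_2=\|\mW^{\dag}(I,:)\|_2$; these are the same identity reached two ways, and your route is self-contained. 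Second, your full-column-rank caveat is not optional bookkeeping: as literally stated (``for any $I,J$''), the lemma is false --- take $\mF=\mzero$, $\mA=\mA_r$ a $2\times 2$ orthogonal matrix with $\mA(1,1)=\epsilon$ small, the SVD choice $\mW=\mId$, $\mSigma=\mId$, $\mV=\mA^\top$, and $I=J=\{1\}$, $\tau=0$; then the left side of \eqref{ARFP_1_3_9Mar22_7} equals $1/\epsilon$ while the right side equals $1$. The paper's proof tacitly needs the same hypothesis (in this example the identity imported from \Cref{lem:CU-to-W} fails, since that proposition presumes an exact CUR decomposition, i.e.\ $\rank(\mA_r(I,J))=r$), and in every application it is supplied by the assumption $\rank(\mT^{\<k\>}_{r_k}(I^{\leq k},I^{>k}))=r_k$, exactly the justification you give; your version in fact needs slightly less, namely only $\rank(\mW(I,:))=r$ for \eqref{ARFP_1_3_9Mar22_7} and only $\rank(\mV(J,:))=r$ for \eqref{ARFP_1_3_9Mar22_8}.
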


\begin{proof}
Noting that $\mA(:,J)=\mA_r(:,J)+\mF(;,J)$ and $\mA(I,:)=\mA_r(I,:)+\mF(I,:)$, we have
\begin{eqnarray}
    \label{ARFP_1_3_9Mar22_9}
    &&\left\|\mA(:,J)[\mA(I,J)]^{\dag}_{\tau} \right\|_2\nonumber\\
    &=&\|(\mA_r(:,J)+\mF(;,J))[\mA(I,J)]^{\dag}_{\tau}\|_2\nonumber\\
    &\leq&\|\mA_r(:,J)[\mA(I,J)]^{\dag}_{\tau}\|_2+\|\mF(;,J)[\mA(I,J)]^{\dag}_{\tau}\|_2\nonumber\\
    &\leq&\|\mA_r(:,J)\mA_r^{\dag}(I,J)\mA_r(I,J)[\mA(I,J)]^{\dag}_{\tau}\|_2+\|\mF(;,J)\|_2\|[\mA(I,J)]^{\dag}_{\tau}\|_2\nonumber\\
    &\leq&\|\mA_r(:,J)\mA_r^{\dag}(I,J)\|_2\|(\mA(I,J)-\mF(I,J))[\mA(I,J)]^{\dag}_{\tau}\|_2+\|\mF(;,J)\|_2\|[\mA(I,J)]^{\dag}_{\tau}\|_2\nonumber\\
    &\leq&\|\mA_r(:,J)\mA_r^{\dag}(I,J)\|_2(1+\|\mF(I,J)\|_2\|[\mA(I,J)]^{\dag}_{\tau}\|_2)+\|\mF(;,J)\|_2\|[\mA(I,J)]^{\dag}_{\tau}\|_2\nonumber\\
    &\leq&\left\|{\bm W}^{\dag}(I,:) \right\|_2 + \left(1+\|{\bm W}^{\dag}(I,:)\|_2 \right) \left\|[\mA(I,J)]^{\dag}_{\tau}\right\|_2 \left\|{\bm F}(:,J)\right\|_F,
\end{eqnarray}
where the penultimate line follows because $\|\mA(I,J)[\mA(I,J)]^{\dag}_{\tau}\|_2 \le 1$ and in the last line we use \Cref{lem:CU-to-W}. Likewise, we can obtain \eqref{ARFP_1_3_9Mar22_8}  with a similar argument.

\end{proof}

We are now ready to prove \Cref{TEBTTCA3_4_3}.

\begin{proof} (of \Cref{TEBTTCA3_4_3})

Recall the following definitions that will be used to simplify the presentation:
\begin{align*}
    & a=\max_{k=1,\dots, N-1} \|[{\bm W}_{(k)}(I^{\leq k},:)]^{\dag}\|_2, \quad b= \max_{k=1,\dots, N-1} \|[{{\bm V}_{(k)}}(I^{>k},:)]^{\dag}\|_2 ,\\
    &c = \max_{k=1,\dots, N-1} \|[{{\bm T}^{\<k\>}}(I^{\leq k},I^{>k})]^{\dag}\|_2 \quad r = \max_{k=1,\dots, N-1} r_k, \quad \epsilon = \max_{k=1,\dots, N-1}\|{\bm F}^{\<k\>}\|_F.
\end{align*}

\noindent{\bf Error bound for how approximation error transfers to next level:}  To derive \eqref{TEBTTCA_8Mar22_3}, we first exploit the approximate low-rankness of ${\bm T}^{\<k\>}_{I^{\leq p-1},I^{>q}}$ to write it as
\begin{eqnarray}
    \label{TEBTTCA3_5}
    {\bm T}^{\<k\>}_{I^{\leq p-1},I^{>q}}={\bm C}[{\bm U}]^{\dag}_{\tau_k}{\bm R}+{\bm H}^{\<k\>}_{I^{\leq p-1},I^{>q}},
\end{eqnarray}
where  ${\bm H}^{\<k\>}_{I^{\leq p-1},I^{>q}}$ is the residual of the cross approximation.

Furthermore, by the construction of $\widehat{{\bm T}}^{\<k\>}_{I^{\leq p-1},I^{>q}}$, we can also express it by the following cross approximation
\begin{eqnarray}
    \label{TEBTTCA3_6}
    \widehat{{\bm T}}^{\<k\>}_{I^{\leq p-1},I^{>q}}=\widehat{\bm C}[{\bm U}]^{\dag}_{\tau_k}\widehat{\bm R}.
\end{eqnarray}
We now quantify the difference between ${\bm T}^{\<k\>}_{I^{\leq p-1},I^{>q}}$ and $\widehat{{\bm T}}^{\<k\>}_{I^{\leq p-1},I^{>q}}$ as
\begin{equation}
\begin{split}
    & \|{\bm T}^{\<k\>}_{I^{\leq p-1},I^{>q}}-\widehat{{\bm T}}^{\<k\>}_{I^{\leq p-1},I^{>q}}\|_F \\
    = & \|{\bm C}[{\bm U}]^{\dag}_{\tau_k}{\bm R}+{\bm H}^{\<k\>}_{I^{\leq p-1},I^{>q}}-\widehat{\bm C}[{\bm U}]^{\dag}_{\tau_k}\widehat{\bm R}\|_F \\
    \leq &\|{\bm H}^{\<k\>}_{I^{\leq p-1},I^{>q}}\|_F+\|{\bm C}[{\bm U}]^{\dag}_{\tau_k}{\bm R}-\widehat{\bm C}[{\bm U}]^{\dag}_{\tau_k}\widehat{\bm R}\|_F \\
    = &\|{\bm H}^{\<k\>}_{I^{\leq p-1},I^{>q}}\|_F+\|{\bm C}[{\bm U}]^{\dag}_{\tau_k}{\bm R}-({\bm C}-{\bm E}_{C})[{\bm U}]^{\dag}_{\tau_k}({\bm R}-{\bm E}_{R})\|_F \\
    \leq &\|{\bm H}^{\<k\>}_{I^{\leq p-1},I^{>q}}\|_F+\|{\bm C}[{\bm U}]^{\dag}_{\tau_k}{\bm E}_{R}\|_F+\|{\bm E}_{C}[{\bm U}]^{\dag}_{\tau_k}{\bm E}_{R}\|_F+\|{\bm E}_{C}[{\bm U}]^{\dag}_{\tau_k}{\bm R}\|_F.
\end{split}
\label{eq:Tensor-diff-first-stage}\end{equation}

Below we provide upper bounds for the four terms in the above equation. First, utilizing \Cref{ARFP_1_3_9Mar22_6} gives
\begin{align}
    \label{TEBTTCA3_8}
    &\hspace{0.46cm}\|{\bm C}[{\bm U}]^{\dag}_{\tau_k}{\bm E}_{R}\|_F\nonumber\\
    &\leq\|{\bm C}[{\bm U}]^{\dag}_{\tau_k}\|_2\|{\bm E}_{R}\|_F\leq\|{\bm T}^{\<k\>}(:,I^{>k})[{\bm T}^{\<k\>}(I^{\leq k},I^{>k})]^{\dag}_{\tau_k}\|_2\|{\bm E}_{R}\|_F\nonumber\\
    &\leq(\|[{\bm W}_{(k)}(I^{\leq k},:)]^{\dag}\|_2\!+\!(1\!+\!\|[{\bm W}_{(k)}(I^{\leq k},:)]^{\dag}\|_2)\|[{\bm T}^{\<k\>}(I^{\leq k},I^{>k})]^{\dag}_{\tau_k}\|_2\|{\bm F}^{\<k\>}_{r_k}(:,I^{>k})\|_F)\|{\bm E}_{R}\|_F\nonumber\\
    & \leq  (a+(1+a)c\epsilon)\|{\bm E}_{R}\|_F.
\end{align}
With the same argument, we have
\begin{align}
    \label{TEBTTCA3_9}
    &\hspace{0.46cm}\|{\bm E}_{C}[{\bm U}]^{\dag}_{\tau_k}{\bm R}\|_F
    \leq  (b+(1+b)c\epsilon)\|{\bm E}_{C}\|_F.
\end{align}
Also, noting that $\|[\mU]^{\dag}_{\tau_k}\|_2 \leq \frac{1}{\tau_k}$, we get
\begin{eqnarray}
    \label{TEBTTCA3_10}
    \|{\bm E}_{C}[{\bm U}]^{\dag}_{\tau_k}{\bm E}_{R}\|_F\leq\|[{\bm U}]^{\dag}_{\tau_k}\|_2\|{\bm E}_{C}\|_F\|{\bm E}_{R}\|_F\leq\frac{1}{\tau_k}\|{\bm E}_{C}\|_F\|{\bm E}_{R}\|_F.
\end{eqnarray}
The term $\|{\bm H}^{\<k\>}_{I^{\leq p-1},I^{>q}}\|_F$ can be upper bounded as
\begin{align}
    \label{TEBTTCA3_11}
    &\hspace{0.46cm}\|{\bm H}^{\<k\>}_{I^{\leq p-1},I^{>q}}\|_F\nonumber\\
    &=\|{\bm T}^{\<k\>}_{I^{\leq p-1},I^{>q}}-{\bm C}{\bm U}^{\dag}{\bm R}+{\bm C}{\bm U}^{\dag}{\bm R}-{\bm C}[{\bm U}]^{\dag}_{\tau_k}{\bm R}\|_F\nonumber\\
    &\leq\|{\bm T}^{\<k\>}_{I^{\leq p-1},I^{>q}}-{\bm C}{\bm U}^{\dag}{\bm R}\|_F+\|{\bm C}{\bm U}^{\dag}{\bm R}-{\bm C}[{\bm U}]^{\dag}_{\tau_k}{\bm R}\|_F\nonumber\\
    &=\|{\bm T}^{\<k\>}_{I^{\leq p-1},I^{>q}}-{\bm C}{\bm U}^{\dag}{\bm R}\|_F+\|{\bm C}{\bm U}^{\dag}{\bm U}({\bm U}^{\dag}-[{\bm U}]^{\dag}_{\tau_k}){\bm U}{\bm U}^{\dag}{\bm R}\|_F\nonumber\\
    &=\|{\bm T}^{\<k\>}_{I^{\leq p-1},I^{>q}}-{\bm C}{\bm U}^{\dag}{\bm R}\|_F+\|{\bm C}{\bm U}^{\dag}({\bm U}-[{\bm U}]_{\tau_k}){\bm U}^{\dag}{\bm R}\|_F\nonumber\\
    &\leq\|{\bm T}^{\<k\>}(:,I^{>k})[{\bm T}^{\<k\>}(I^{\leq k},I^{>k})]^{\dag}\|_2\|{\bm U}-[{\bm U}]_{\tau_k}\|_F\|[{{\bm T}^{\<k\>}}(I^{\leq k},I^{>k})]^{\dag}{\bm T}^{\<k\>}(I^{\leq k},:)\|_2\nonumber\\
    &\hspace{0.46cm}+\|{\bm T}^{\<k\>}_{I^{\leq p-1},I^{>q}}-{\bm C}{\bm U}^{\dag}{\bm R}\|_F\nonumber\\
    &\leq\sqrt{\|{\bm F}^{\<k\>}(I^{\leq k},I^{>k})\|_F^2+r_k^2\tau_k^2}\|{\bm T}^{\<k\>}(:,I^{>k})[{\bm T}^{\<k\>}(I^{\leq k},I^{>k})]^{\dag}\|_2\|[{{\bm T}^{\<k\>}}(I^{\leq k},I^{>k})]^{\dag}{\bm T}^{\<k\>}(I^{\leq k},:)\|_2\nonumber\\
    &\hspace{0.46cm}+\|{\bm T}^{\<k\>}_{I^{\leq p-1},I^{>q}}-{\bm C}{\bm U}^{\dag}{\bm R}\|_F\nonumber\\
    &\leq(\|{\bm F}^{\<k\>}\|_F+r_k\tau_k)\|{\bm T}^{\<k\>}(:,I^{>k})[{\bm T}^{\<k\>}(I^{\leq k},I^{>k})]^{\dag}\|_2\|[{{\bm T}^{\<k\>}}(I^{\leq k},I^{>k})]^{\dag}{\bm T}^{\<k\>}(I^{\leq k},:)\|_2\nonumber\\
    &\hspace{0.46cm}+\|{\bm T}^{\<k\>}_{I^{\leq p-1},I^{>q}}-{\bm C}{\bm U}^{\dag}{\bm R}\|_F\nonumber\\
    &\leq(a+b+3ab+1)\epsilon+(a+b+ab+1)c\epsilon^2+(\epsilon+r\tau_k)(a+(1+a)c\epsilon)(b+(1+b)c\epsilon),
\end{align}
where  ${\bm C}={\bm C}{\bm U}^{\dag}{\bm U}$, ${\bm R}={\bm U}{\bm U}^{\dag}{\bm R}$ and ${\bm U}[{\bm U}]^{\dag}_{\tau_k}{\bm U}=[{\bm U}]_{\tau_k}$  are respectively used in the second and third equality, the third inequality follows because
$\|{\bm U}-[{\bm U}]_{\tau_k}\|_F\leq\sqrt{\|{\bm F}^{\<k\>}(I^{\leq k},I^{>k})\|_F^2+r_k^2\tau_k^2}$, and
the last line uses {Lemma} \ref{ARFP_1_3_9Mar22_1} and {Lemma} \ref{ARFP_1_3_9Mar22_6}.

Plugging \eqref{TEBTTCA3_8}, \eqref{TEBTTCA3_9}, \eqref{TEBTTCA3_10}, and \eqref{TEBTTCA3_11} into \eqref{eq:Tensor-diff-first-stage} gives
\begin{align}
    \label{TEBTTCA3_21}
    &\|{\bm T}^{\<k\>}_{I^{\leq p-1},I^{>q}}-\widehat{{\bm T}}^{\<k\>}_{I^{\leq p-1},I^{>q}}\|_F\nonumber\\
    &\hspace{-0.46cm}\leq(a+b+3ab+1)\epsilon+(a+b+ab+1)c\epsilon^2+(\epsilon+r\tau_k)(a+(1+a)c\epsilon)(b+(1+b)c\epsilon)\nonumber\\
    &+(a+(1+a)c\epsilon)\|{\bm E}_{R}\|_F+(b+(1+b)c\epsilon)\|{\bm E}_{C}\|_F+\frac{1}{\tau_k}\|{\bm E}_{C}\|_F\|{\bm E}_{R}\|_F.
\end{align}

\noindent{\bf Error bound for the entire tensor:}
Following the same notation as \eqref{TEBTTCA_8Mar22_4}, \eqref{TEBTTCA3_21} shows that (by setting $\tau_k = e_l$ for the $l$-th level)
\begin{equation}
    \label{TEBTTCA3_22}
    \begin{split}
    e_{l+1}&\leq(a+b+3ab+1)\epsilon+(a+b+ab+1)c\epsilon^2+\epsilon(a+(1+a)c\epsilon)(b+(1+b)c\epsilon)\\
     & \quad +(1+a+b+(1+a)c\epsilon+(1+b)c\epsilon+r(a+(1+a)c\epsilon)(b+(1+b)c\epsilon))e_{l}\\
    &=(a+b+4ab+1)\epsilon+(2a+2b+3ab+1)c\epsilon^2 +(1+a+b+ab)c^2\epsilon^3
    \\
    &\quad +\bigg(1+a +b+rab+(2+a+b+ar+br+2abr)c\epsilon+r(1+a+b+ab)c^2\epsilon^2\bigg)e_{l},
\end{split}
\end{equation}
which together with $e_0 = 0$ implies that
\[
    \|\mathcal{T}-\widehat{\mathcal{T}}\|_F = e_{\lceil \log_2N \rceil} \le \frac{\alpha_1(a,b,c,\epsilon,r)^{{\lceil \log_2N \rceil}}-1}{\alpha_1(a,b,c,\epsilon,r)-1}\beta_1(a,b,c,\epsilon),
\]
where
\begin{align*}
    \alpha_1(a,b,c,\epsilon,r) & = 1+a+b+rab+(2+a+b+ar+br+2abr)c\epsilon +r(1+a+b+ab)c^2\epsilon^2, \\
    \beta_1(a,b,c,\epsilon) & =(a+b+4ab+1)\epsilon+(2a+2b+3ab+1)c\epsilon^2+(1+a+b+ab)c^2\epsilon^3.
\end{align*}

\end{proof}

\section{Proof of {Theorem} \ref{TEBTTME2_1_1}}
\label{ARFP_1_4}
The proof is similar to the proof of \Cref{TEBTTCA3_4_3}. We include the proof for the sake of completeness.

Before deriving {Theorem} \ref{TEBTTME2_1_1}, we first consider matrix cross approximation with measurement error. In this case, we obtain noisy columns $\wt\mC$, rows $\wt\mR$, and intersection matrix $\wt\mU$. To simplify the notation, we describe the measurement error in the selected rows $I$ and columns $J$ by $\mE \in \R^{m \times n}$ such that $\mE$ has non-zero elements only in the rows $I$ or columns $J$ corresponding to the noise. We can then rewrite
\begin{eqnarray}
    \label{CAWithME_9Mar22_1}
    \widetilde{{\bm A}}={\bm A}+{\bm E}, \ \wt\mC = \wt \mA(:,J), \wt\mU = \wt \mA(I,J), \wt\mR = \wt\mA(I,:)
\end{eqnarray}
and $\widetilde{\bm C}\widetilde{\bm U}^{\dagger}\widetilde{\bm R}$ can be viewed as cross approximation for $\wt\mA$. However, we note that here we want to ensure $\widetilde{\bm C}\widetilde{\bm U}^{\dagger}\widetilde{\bm R}$ is a stable approximation to $\mA$ rather than $\wt\mA$. Thus, we extend \Cref{TEBTTCA1_1} to this case.

\begin{theorem}
\label{TEBTTME1_1}
Let $\mA\in \R^{m\times n}$ be an approximately low-rank matrix that can be decomposed as $\mA= {\bm A}_r+{\bm F}$, where $\mA_r$ is rank-$r$. Let ${\bm A}_r={\bm W}{\bm \Sigma}{\bm V}^T$ be the compact SVD of ${\bm A}_r$.
Then the noisy cross approximation defined in $\widetilde{\bm C}\widetilde{\bm U}^{\dagger}\widetilde{\bm R}$ \eqref{CAWithME_9Mar22_1} with ${\rm rank}(\mA_r(I,J))=r$ satisfies
\begin{eqnarray}
    \label{TEBTTME1_2}
    &&\hspace{-1cm}\|{\bm A}-\widetilde{\bm C}\widetilde{\bm U}^{\dagger}\widetilde{\bm R}\|_F\nonumber\\
    &\hspace{-1.8cm}\leq&\hspace{-1cm}(\|{\bm W}^{\dagger}(I,:)\|_2+\|{\bm V}^{\dagger}(J,:)\|_2+3\|{\bm W}^{\dagger}(I,:)\|_2\|{\bm V}^{\dagger}(J,:)\|_2)\|{\bm E}\|_F+\nonumber\\
    &&\hspace{-1cm}(\|{\bm W}^{\dagger}(I,:)\|_2+\|{\bm V}^{\dagger}(J,:)\|_2+3\|{\bm W}^{\dagger}(I,:)\|_2\|{\bm V}^{\dagger}(J,:)\|_2+1)\|{\bm F}\|_F+\nonumber\\
    &&\hspace{-1cm}(\|{\bm W}^{\dagger}(I,:)\|_2+\|{\bm V}^{\dagger}(J,:)\|_2+\|{\bm W}^{\dagger}(I,:)\|_2\|{\bm V}^{\dagger}(J,:)\|_2+1)\|\widetilde{\bm U}^{\dagger}\|_2(\|{\bm E}\|_F+\|{\bm F}\|_F)^2.
\end{eqnarray}
\end{theorem}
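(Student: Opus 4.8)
The plan is to exploit the fact that the noisy cross approximation $\wt\mC\wt\mU^{\dagger}\wt\mR$ is assembled from the sampled entries of $\wt\mA = \mA + \mE = \mA_r + (\mF+\mE)$, so it is nothing but an exact-measurement cross approximation of the matrix $\wt\mA$ whose low-rank part is still $\mA_r$ but whose residual has become $\wt\mF := \mF + \mE$. The decisive modeling choice is to measure the error against the \emph{true} low-rank part $\mA_r$ rather than against $\wt\mA$: this keeps the singular vectors $\mW,\mV$ in the bound equal to those of $\mA_r$ and prevents the clean deviation $\|\mA-\mA_r\|_F = \|\mF\|_F$ from being inflated by the noise. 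Writing $w := \|\mW^{\dagger}(I,:)\|_2$ and $v := \|\mV^{\dagger}(J,:)\|_2$, I would first split
\[
\|\mA - \wt\mC\wt\mU^{\dagger}\wt\mR\|_F \;\le\; \|\mA - \mA_r\|_F + \|\mA_r - \wt\mC\wt\mU^{\dagger}\wt\mR\|_F \;=\; \|\mF\|_F + \|\mA_r - \wt\mC\wt\mU^{\dagger}\wt\mR\|_F ,
\]
so that the first summand contributes the isolated $+\|\mF\|_F$ visible in \eqref{TEBTTME1_2}.

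For the second summand I would invoke the interior estimate underlying \Cref{TEBTTCA1_1}, namely \cite[Corollary 4.3]{Hamm21}, which controls $\|\mA_r - \mC\mU^{\dagger}\mR\|_F$ for a cross approximation whose columns, rows, and intersection are sampled from a perturbation $\mA_r + (\text{residual})$ of the rank-$r$ matrix $\mA_r$. That estimate depends only on the compact SVD $\mA_r = \mW\mSigma\mV^{\top}$ (through $w$ and $v$), on the Frobenius norm of the residual, and on the spectral norm of the pseudoinverse of the \emph{sampled} intersection. Since in the present setting the residual is $\wt\mF = \mF+\mE$ and the sampled intersection is $\wt\mU = \wt\mA(I,J)$, the identical argument transfers verbatim and gives
\[
\|\mA_r - \wt\mC\wt\mU^{\dagger}\wt\mR\|_F \;\le\; (w + v + 3wv)\,\|\mF + \mE\|_F + (w + v + wv + 1)\,\|\wt\mU^{\dagger}\|_2\,\|\mF + \mE\|_F^2 .
\]
This is precisely the step that consumes the hypothesis $\rank(\mA_r(I,J)) = r$: it guarantees the exact skeleton identity $\mA_r = \mA_r(:,J)\,[\mA_r(I,J)]^{\dagger}\,\mA_r(I,:)$ from \Cref{thm:cross-matrix-exact-rank}, which anchors the whole perturbation expansion.

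Finally I would apply subadditivity $\|\mF+\mE\|_F \le \|\mF\|_F + \|\mE\|_F$ and collect terms. Merging the isolated $\|\mF\|_F$ with $(w+v+3wv)(\|\mF\|_F+\|\mE\|_F)$ yields the coefficient $(w+v+3wv+1)$ on $\|\mF\|_F$ and $(w+v+3wv)$ on $\|\mE\|_F$, while the quadratic term becomes $(w+v+wv+1)\,\|\wt\mU^{\dagger}\|_2\,(\|\mF\|_F+\|\mE\|_F)^2$, reproducing \eqref{TEBTTME1_2} exactly. I expect no genuine obstacle, as the content is bookkeeping; the one point demanding care — and the reason the result is \emph{not} simply \Cref{TEBTTCA1_1} applied to $\wt\mA$ as a black box — is that peeling off $\|\mA-\mA_r\|_F=\|\mF\|_F$ first lets me apply the sharper interior linear coefficient $(w+v+3wv)$ to $\mF+\mE$, so the noise enters the linear part only as $(w+v+3wv)\|\mE\|_F$ rather than with the inflated $(w+v+3wv+1)$ factor that a direct black-box use would produce.
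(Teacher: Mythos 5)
Your proposal is correct and takes essentially the same route as the paper's own proof: the paper likewise splits off $\|\mA-\mA_r\|_F=\|\mF\|_F$ first, then applies the interior estimate of \cite[Corollary 4.3]{Hamm21} (the same one underlying \Cref{TEBTTCA1_1}) to $\mA_r$ perturbed by the combined residual $\mF+\mE$ with intersection matrix $\widetilde{\bm U}$, and collects terms to reach \eqref{TEBTTME1_2}. Your closing observation about why this anchoring to $\mA_r$ (rather than a black-box application to $\wt\mA$) yields the sharper coefficient on $\|{\bm E}\|_F$ is also consistent with how the paper organizes the bound.
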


\begin{proof}
In order to bound $\|{\bm A}-\widetilde{\bm C}\widetilde{\bm U}^{\dagger}\widetilde{\bm R}\|_F$, similar to the derivation of \Cref{TEBTTCA1_1}, we first get
\begin{eqnarray}
\label{25May2022_1}
&&\hspace{-1cm}\|{\bm A}_r-\widetilde{\bm C}\widetilde{\bm U}^{\dagger}\widetilde{\bm R}\|_F\nonumber\\
&\hspace{-1.8cm}\leq&\hspace{-1cm}(\|{\bm W}^{\dagger}(I,:)\|_2+\|{\bm V}^{\dagger}(J,:)\|_2+3\|{\bm W}^{\dagger}(I,:)\|_2\|{\bm V}^{\dagger}(J,:)\|_2)\|{\bm E}\|_F+\nonumber\\
    &&\hspace{-1cm}(\|{\bm W}^{\dagger}(I,:)\|_2+\|{\bm V}^{\dagger}(J,:)\|_2+3\|{\bm W}^{\dagger}(I,:)\|_2\|{\bm V}^{\dagger}(J,:)\|_2)\|{\bm F}\|_F+\nonumber\\
    &&\hspace{-1cm}(\|{\bm W}^{\dagger}(I,:)\|_2+\|{\bm V}^{\dagger}(J,:)\|_2+\|{\bm W}^{\dagger}(I,:)\|_2\|{\bm V}^{\dagger}(J,:)\|_2+1)\|\widetilde{\bm U}^{\dagger}\|_2(\|{\bm E}\|_F+\|{\bm F}\|_F)^2.
\end{eqnarray}

Combing $\|\mA-\widetilde{\bm C}\widetilde{\bm U}^{\dagger}\widetilde{\bm R}\|_F\leq\|\mF\|_F+\|\mA_r-\widetilde{\bm C}\widetilde{\bm U}^{\dagger}\widetilde{\bm R}\|_F$ and \eqref{25May2022_1}, we have \eqref{TEBTTME1_2}.

\end{proof}

The following result extends \Cref{ARFP_1_3_9Mar22_1} to the case with measurement error.
\begin{lemma}
\label{ARFP_1_4_9Mar22_1}
For any $1\leq p < k < q \leq N$, a low-rank model with the measurement error $\wt{\bm T}^{\<k\>}_{I^{\leq p-1},I^{>q}}={{\bm T}}^{\<k\>}_{r_k,I^{\leq p-1},I^{>q}}+{\bm E}^{\<k\>}_{I^{\leq p-1},I^{>q}}+{\bm F}^{\<k\>}_{I^{\leq p-1},I^{>q}}$ is constructed where $\wt{\bm T}^{\<k\>}_{I^{\leq p-1},I^{>q}}$ is a submatrix of $\wt{\bm T}^{\<k\>}$. When ${\rm rank}({\bm T}^{\<k\>}_{r_k}(I^{\leq k},I^{>k}))=r_k$ is satisfied, we have
\begin{align}
    \label{ARFP_1_4_9Mar22_2}
    &\hspace{0.3cm}\|{\bm T}^{\<k\>}_{I^{\leq p-1},I^{>q}}-\wt{\bm T}^{\<k\>}_{I^{\leq p-1},I^{>q}}(:,I^{>k})[\wt{\bm T}^{\<k\>}(I^{\leq k},I^{>k})]^{\dag}\wt{\bm T}^{\<k\>}_{I^{\leq p-1},I^{>q}}(I^{\leq k},:)\|_F\nonumber\\
    &\hspace{-0.2cm}\leq(\|[{\bm W}_{(k)}(I^{\leq k},:)]^{\dag}\|_2+\|[{{\bm V}_{(k)}}(I^{>k},:)]^{\dag}\|_2+3\|[{\bm W}_{(k)}(I^{\leq k},:)]^{\dag}\|_2\|[{{\bm V}_{(k)}}(I^{>k},:)]^{\dag}\|_2)\|{\bm E}^{\<k\>}\|_F\nonumber\\
    &\hspace{0.3cm}\!+\!(\|[{\bm W}_{(k)}(I^{\leq k},:)]^{\dag}\|_2\!+\!\|[{{\bm V}_{(k)}}(I^{>k},:)]^{\dag}\|_2\!+\!3\|[{\bm W}_{(k)}(I^{\leq k},:)]^{\dag}\|_2\|[{{\bm V}_{(k)}}(I^{>k},:)]^{\dag}\|_2\!+\!1)\|{\bm F}_{r_k}^{\<k\>}\|_F\nonumber\\
    &\hspace{0.3cm}\!+\!(\|[{\bm W}_{(k)}(I^{\leq k},:)]^{\dag}\|_2+\|[{{\bm V}_{(k)}}(I^{>k},:)]^{\dag}\|_2+\|[{\bm W}_{(k)}(I^{\leq k},:)]^{\dag}\|_2\|[{{\bm V}_{(k)}}(I^{>k},:)]^{\dag}\|_2+1)\nonumber\\
    &\hspace{0.3cm}\cdot\|[\wt{\bm T}^{\<k\>}(I^{\leq k},I^{>k})]^{\dag}\|_2(\|{\bm E}^{\<k\>}\|_F+\|{\bm F}_{r_k}^{\<k\>}\|_F)^2,
\end{align}
where ${\bm T}^{\<k\>}_{I^{\leq p-1},I^{>q}}$ is a submatrix of ${\bm T}^{\<k\>}$ and ${{\bm T}}^{\<k\>}_{r_k}={\bm W}_{(k)}{\bm \Sigma}_{(k)}{{\bm V}_{(k)}}^T$ is the SVD of ${{\bm T}}^{\<k\>}_{r_k}$.
\end{lemma}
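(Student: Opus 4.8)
The plan is to recognize the left-hand side as the error of a \emph{noisy} matrix cross approximation of the submatrix ${\bm T}^{\<k\>}_{I^{\leq p-1},I^{>q}}$, and then reduce everything to \Cref{TEBTTME1_1}. First I would set $\mA = {\bm T}^{\<k\>}_{I^{\leq p-1},I^{>q}}$ and decompose it as $\mA = \mA_r + \mF$ with $\mA_r = {\bm T}^{\<k\>}_{r_k,I^{\leq p-1},I^{>q}}$ and $\mF = {\bm F}^{\<k\>}_{I^{\leq p-1},I^{>q}}$, with measurement error $\mE = {\bm E}^{\<k\>}_{I^{\leq p-1},I^{>q}}$ and selected indices $I^{\leq k}, I^{>k}$. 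The point to check is that, by the nested compatibility of the interpolation sets used in \Cref{sec:overview-proof}, the intersection $\wt{\bm T}^{\<k\>}(I^{\leq k},I^{>k})$ appearing in the approximation coincides with $\wt{\bm T}^{\<k\>}_{I^{\leq p-1},I^{>q}}(I^{\leq k},I^{>k})$, so that $\wt{\bm C}\wt{\bm U}^{\dagger}\wt{\bm R}$ is honestly a noisy cross approximation of $\mA$ in the sense of \eqref{CAWithME_9Mar22_1}. The rank hypothesis ${\rm rank}({\bm T}^{\<k\>}_{r_k}(I^{\leq k},I^{>k}))=r_k$ forces $\mA_r$ to have rank exactly $r_k$ (it is a submatrix of the rank-$r_k$ matrix ${\bm T}^{\<k\>}_{r_k}$ whose $r_k\times r_k$ intersection is nonsingular), so the hypotheses of \Cref{TEBTTME1_1} hold.

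Applying \Cref{TEBTTME1_1} then delivers a bound of exactly the stated shape \eqref{TEBTTME1_2}, but with the coherence factors written through the SVD of the \emph{submatrix} $\mA_r$ and with the residual and noise measured on the submatrix. Because every coefficient in \eqref{TEBTTME1_2} is nonnegative, the right-hand side is monotone in each of $\|{\bm W}^{\dagger}(I,:)\|_2$, $\|{\bm V}^{\dagger}(J,:)\|_2$, $\|\mF\|_F$, $\|\mE\|_F$, and $\|\wt{\bm U}^{\dagger}\|_2$, so it will suffice to dominate each submatrix quantity by the corresponding full-tensor quantity appearing in \eqref{ARFP_1_4_9Mar22_2}.

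The heart of the argument is the coherence comparison, which I would carry out via \Cref{lem:CU-to-W}. Writing $\mC_r = {\bm T}^{\<k\>}_{r_k}(:,I^{>k})$ and $\mU_r = {\bm T}^{\<k\>}_{r_k}(I^{\leq k},I^{>k})$ for the full unfolding, \Cref{lem:CU-to-W} gives $\|\mC_r\mU_r^{\dagger}\|_2 = \|[{\bm W}_{(k)}(I^{\leq k},:)]^{\dagger}\|_2$. Applying \Cref{lem:CU-to-W} instead to $\mA_r$, the factor $\|{\bm W}^{\dagger}(I^{\leq k},:)\|_2$ produced by \Cref{TEBTTME1_1} (with ${\bm W}$ the left singular factor of $\mA_r$) equals $\|\mA_r(:,I^{>k})\,\mU_r^{\dagger}\|_2$, the intersection being the shared $\mU_r$. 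Now $\mA_r(:,I^{>k}) = {\bm T}^{\<k\>}_{r_k,I^{\leq p-1},I^{>q}}(:,I^{>k})$ is simply $\mC_r$ with its rows restricted to the block $I^{\leq p-1}$, i.e. $\mA_r(:,I^{>k}) = {\bm S}\mC_r$ for a row-selection matrix ${\bm S}$ with $\|{\bm S}\|_2 = 1$; hence $\|\mA_r(:,I^{>k})\mU_r^{\dagger}\|_2 \le \|\mC_r\mU_r^{\dagger}\|_2 = \|[{\bm W}_{(k)}(I^{\leq k},:)]^{\dagger}\|_2$. The transposed computation, deleting columns of ${\bm T}^{\<k\>}_{r_k}(I^{\leq k},:)$, bounds the corresponding $V$-coherence by $\|[{\bm V}_{(k)}(I^{>k},:)]^{\dagger}\|_2$. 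Finally, since $\mF$ and $\mE$ are submatrices of ${\bm F}_{r_k}^{\<k\>}$ and ${\bm E}^{\<k\>}$, I would use $\|\mF\|_F \le \|{\bm F}_{r_k}^{\<k\>}\|_F$ and $\|\mE\|_F \le \|{\bm E}^{\<k\>}\|_F$, while $\|\wt{\bm U}^{\dagger}\|_2 = \|[\wt{\bm T}^{\<k\>}(I^{\leq k},I^{>k})]^{\dagger}\|_2$ already matches the $c$-term verbatim. Substituting these dominations into \eqref{TEBTTME1_2} yields \eqref{ARFP_1_4_9Mar22_2}.

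The hard part will be the bookkeeping of the restriction picture rather than any analytic estimate: I must verify that the submatrix's selected-column block is genuinely a row-restriction of $\mC_r$ while its intersection stays $\mU_r$, which relies on the nested compatibility of $I^{\leq k}$ with $I^{\leq p-1}$ and of $I^{>k}$ with $I^{>q}$. Once that restriction structure is pinned down, the unit-norm-selection contraction together with the two applications of \Cref{lem:CU-to-W} is routine, and the monotonicity of \eqref{TEBTTME1_2} closes the argument.
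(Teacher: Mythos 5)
Your proposal is correct and follows essentially the same route as the paper: the paper presents this lemma as the direct extension obtained by applying \Cref{TEBTTME1_1} together with \Cref{lem:CU-to-W} to the submatrix ${\bm T}^{\<k\>}_{I^{\leq p-1},I^{>q}}$, which is exactly your reduction. The bookkeeping you flag as the hard part---that the nested interpolation sets make the intersection matrix $\wt{\bm T}^{\<k\>}(I^{\leq k},I^{>k})$ shared between the submatrix and the full unfolding, that the rank hypothesis forces $\rank({\bm T}^{\<k\>}_{r_k,I^{\leq p-1},I^{>q}})=r_k$, and that the row/column-selection contraction transfers the coherence factors to $\|[{\bm W}_{(k)}(I^{\leq k},:)]^{\dag}\|_2$ and $\|[{\bm V}_{(k)}(I^{>k},:)]^{\dag}\|_2$---is precisely the detail the paper leaves implicit, and you have filled it in correctly.
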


Likewise, we extend \Cref{ARFP_1_3_9Mar22_6} to the measurement noise case.
\begin{lemma}
\label{ARFP_1_4_9Mar22_3}
Given $I$ and $J$, we set a low-rank model as $\wt\mA=\mA_r+\mE+\mF=\mW\mSigma\mV^T+\mE+\mF$. For $\tau\geq 0$, we have
\begin{align*}
    \left\|\wt\mA(:,J)[\wt\mA(I,J)]^{\dag}_{\tau} \right\|_2 \le & \left\|{\bm W}^{\dag}(I,:) \right\|_2 + \left(1+\|{\bm W}^{\dag}(I,:)\|_2 \right) \left\|[\wt\mA(I,J)]^{\dag}_{\tau}\right\|_2 (\left\|{\bm E}(:,J)\right\|_F+\left\|{\bm F}(:,J)\right\|_F),\\
    \left\|[\wt\mA(I,J)]_{\tau}^{\dag}\wt\mA(I,:)\right\|_2 \le &
\left\|{{\bm V}^{\dag}}(J,:)\right\|_2+(1+\left\|{{\bm V}^{\dag}}(J,:)\right\|_2)\left\|[\wt\mA(I,J)]^{\dag}_{\tau}\right\|_2(\left\|{\bm E}(:,J)\right\|_F+\left\|{\bm F}(:,J)\right\|_F).
\end{align*}

\end{lemma}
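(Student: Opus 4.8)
The plan is to mirror the proof of Lemma~\ref{ARFP_1_3_9Mar22_6} verbatim, simply carrying the measurement-error block $\mE$ through every step alongside the model-error block $\mF$. The starting point is the decomposition $\wt\mA(:,J) = \mA_r(:,J) + \mE(:,J) + \mF(:,J)$, which lets me split $\norm{\wt\mA(:,J)[\wt\mA(I,J)]^{\dag}_{\tau}}{2}$ by the triangle inequality into a ``clean'' part $\norm{\mA_r(:,J)[\wt\mA(I,J)]^{\dag}_{\tau}}{2}$ and a ``noise'' part $\norm{(\mE(:,J)+\mF(:,J))[\wt\mA(I,J)]^{\dag}_{\tau}}{2}$. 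The noise part is immediately controlled by $(\norm{\mE(:,J)}{F} + \norm{\mF(:,J)}{F})\,\norm{[\wt\mA(I,J)]^{\dag}_{\tau}}{2}$ using $\norm{\mX\mY}{2}\le\norm{\mX}{2}\norm{\mY}{2}$ together with $\norm{\cdot}{2}\le\norm{\cdot}{F}$.

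For the clean part I would use that $\rank(\mA_r(I,J)) = r = \rank(\mA_r)$, so by Theorem~\ref{thm:cross-matrix-exact-rank} the exact reproduction $\mA_r(:,J) = \mA_r(:,J)\mA_r^{\dag}(I,J)\mA_r(I,J)$ holds. Substituting $\mA_r(I,J) = \wt\mA(I,J) - \mE(I,J) - \mF(I,J)$ and ``peeling off'' the factor $\wt\mA(I,J)[\wt\mA(I,J)]^{\dag}_{\tau}$, whose spectral norm is at most $1$ (it is the orthogonal projection onto the retained singular subspace of $\wt\mA(I,J)$), gives $\norm{\mA_r(:,J)[\wt\mA(I,J)]^{\dag}_{\tau}}{2} \le \norm{\mA_r(:,J)\mA_r^{\dag}(I,J)}{2}\bigl(1 + (\norm{\mE(I,J)}{F}+\norm{\mF(I,J)}{F})\norm{[\wt\mA(I,J)]^{\dag}_{\tau}}{2}\bigr)$. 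Lemma~\ref{lem:CU-to-W} identifies $\norm{\mA_r(:,J)\mA_r^{\dag}(I,J)}{2} = \norm{\mW^{\dag}(I,:)}{2}$. Adding the clean and noise parts and using $\norm{\mE(I,J)}{F}\le\norm{\mE(:,J)}{F}$ and $\norm{\mF(I,J)}{F}\le\norm{\mF(:,J)}{F}$ collapses the constants into $\norm{\mW^{\dag}(I,:)}{2} + (1+\norm{\mW^{\dag}(I,:)}{2})\norm{[\wt\mA(I,J)]^{\dag}_{\tau}}{2}(\norm{\mE(:,J)}{F}+\norm{\mF(:,J)}{F})$, which is the first claimed inequality. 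The second inequality follows by the transposed argument, starting from $\wt\mA(I,:) = \mA_r(I,:) + \mE(I,:) + \mF(I,:)$ and the identity $\mA_r(I,:) = \mA_r(I,J)\mA_r^{\dag}(I,J)\mA_r(I,:)$, and invoking the companion identity $\norm{\mA_r^{\dag}(I,J)\mA_r(I,:)}{2} = \norm{\mV^{\dag}(J,:)}{2}$ from Lemma~\ref{lem:CU-to-W}.

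I expect no real obstacle here, since the argument is a routine bookkeeping extension of the noiseless Lemma~\ref{ARFP_1_3_9Mar22_6}. The only two points needing a line of justification are the bound $\norm{\wt\mA(I,J)[\wt\mA(I,J)]^{\dag}_{\tau}}{2}\le 1$ for the truncated pseudoinverse (an orthogonal projection onto the span of the singular vectors kept after thresholding by $\tau$, hence of norm $1$ unless it is zero), and the passage from the submatrix Frobenius norms $\norm{\mE(I,J)}{F}$, $\norm{\mF(I,J)}{F}$ to the column-block norms appearing on the right-hand side; both are elementary.
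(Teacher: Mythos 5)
Your proposal is correct and follows essentially the same route as the paper's own derivation: decompose $\wt\mA(:,J)=\mA_r(:,J)+\mE(:,J)+\mF(:,J)$, bound the noise blocks by $\|[\wt\mA(I,J)]^{\dag}_{\tau}\|_2(\|\mE(:,J)\|_F+\|\mF(:,J)\|_F)$, and handle the clean block via the exact-reproduction identity $\mA_r(:,J)=\mA_r(:,J)\mA_r^{\dag}(I,J)\mA_r(I,J)$, the substitution $\mA_r(I,J)=\wt\mA(I,J)-\mE(I,J)-\mF(I,J)$, the projection bound $\|\wt\mA(I,J)[\wt\mA(I,J)]^{\dag}_{\tau}\|_2\le 1$, and \Cref{lem:CU-to-W}. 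Your explicit flagging of the needed rank condition $\rank(\mA_r(I,J))=r$ and of the projection-norm justification only makes the argument more careful than the paper's write-up, not different from it.
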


We are now ready to prove \Cref{TEBTTME2_1_1}.

\begin{proof} (of \Cref{TEBTTME2_1_1})

\noindent{\bf Error bound for how approximation error transfers to next level:} To derive \eqref{TEBTTCA_8Mar22_3}, we first exploit the approximate low-rankness of ${\bm T}^{\<k\>}_{I^{\leq p-1},I^{>q}}$ to write it as
\begin{eqnarray}
    \label{TEBTTME2_1_4}
    {\bm T}^{\<k\>}_{I^{\leq p-1},I^{>q}}=\wt{\bm C}[\wt{\bm U}]^{\dag}_{\tau_k}\wt{\bm R}+{\bm H}^{\<k\>}_{I^{\leq p-1},I^{>q}},
\end{eqnarray}
where $\wt{\bm C}=\wt{\bm T}^{\<k\>}_{I^{\leq p-1},I^{>q}}(:,I^{>k})$, $\wt{\bm R}=\wt{\bm T}^{\<k\>}_{I^{\leq p-1},I^{>q}}(I^{\leq k},:)$ and $\wt{\bm U}=\wt{\bm T}^{\<k\>}_{I^{\leq p-1},I^{>q}}(I^{\leq k},I^{>k})=\wt{\bm T}^{\<k\>}(I^{\leq k},I^{>k})$. In addition, $\widehat{{\bm T}}^{\<k\>}_{I^{\leq p-1},I^{>q}}$ can be written as
\begin{eqnarray}
    \label{TEBTTME2_1_5}
    \widehat{{\bm T}}^{\<k\>}_{I^{\leq p-1},I^{>q}}=\widehat{\bm C} [\wt{\bm U}]^{\dag}_{\tau_k}\widehat{\bm R}.
\end{eqnarray}

Denote by ${\bm E}_{C} = \wt{\bm C}- \widehat{\bm C}$ and ${\bm E}_{R} = \wt{\bm R}- \widehat{\bm R}$. Now using \Cref{ARFP_1_4_9Mar22_3} and $\|[\wt{\bm U}]^{\dag}_{\tau_k}\|_2\leq\frac{1}{\tau_k}$, we obtain
\begin{eqnarray}
    \label{TEBTTME2_3}
    &&\|\wt{\bm C} [\wt{\bm U}]^{\dag}_{\tau_k}{\bm E}_{R}\|_F\nonumber\\
    &\leq&\|\wt{\bm T}^{\<k\>}(:,I^{>k})[\wt{\bm T}^{\<k\>}(I^{\leq k},I^{>k})]^{\dag}_{\tau_k}\|_2\|{\bm E}_{R}\|_F\nonumber\\
    &\leq&\bigg(\|[{\bm W}_{(k)}(I^{\leq k},:)]^{\dag}\|_2+(1+\|[{\bm W}_{(k)}(I^{\leq k},:)]^{\dag}\|_2)\|[\wt{\bm T}^{\<k\>}(I^{\leq k},I^{>k})]^{\dag}\|_2\nonumber\\
    &&\cdot(\|{\bm E}^{\<k\>}(:,I^{>k})\|_2+\|{\bm F}^{\<k\>}(:,I^{>k})\|_2)\bigg)\|{\bm E}_{R}\|_F\nonumber\\
    &\leq&(a+(1+a)c(\xi+\epsilon))\|{\bm E}_{R}\|_F,
\end{eqnarray}
where we restate that
\begin{eqnarray}
    \label{TEBTTME2_1_6}
    \nonumber
    &&\xi=\|{\bm E}^{\<k\>}\|_F , \ a=\max_{k=1,\dots, N-1}\|[{\bm W}_{(k)}(I^{\leq k},:)]^{\dag}\|_2, \  b=\max_{k=1,\dots, N-1}\|[{{\bm V}_{(k)}}(I^{>k},:)]^{\dag}\|_2,  \\
    &&c=\max_{k=1,\dots, N-1}\|[{\wt{\bm T}^{\<k\>}}(I^{\leq k},I^{>k})]^{\dag}\|_2, \ r=\max_{k=1,\dots, N-1}r_k, \  \epsilon=\max_{k=1,\dots, N-1}\|{\bm F}^{\<k\>}\|_F \nonumber.
\end{eqnarray}

With the same argument, we can obtain
\begin{eqnarray}
    \label{TEBTTME2_4}
    \|{\bm E}_{C}[\wt{\bm U}]^{\dag}_{\tau_k} \wt{\bm R}\|_F
    \leq(b+(1+b)c(\xi+\epsilon))\|{\bm E}_{C}\|_F,
\end{eqnarray}
and
\begin{eqnarray}
    \label{TEBTTME2_5}
    \|{\bm E}_{C}[\wt{\bm U}]^{\dag}_{\tau_k}{\bm E}_{R}\|_F\leq\|[\wt{\bm U}]^{\dag}_{\tau_k}\|_2\|{\bm E}_{C}\|_F\|{\bm E}_{R}\|_F\leq\frac{1}{\tau_k}\|{\bm E}_{C}\|_F\|{\bm E}_{R}\|_F.
\end{eqnarray}

Furthermore, it follows from {Lemma} \ref{ARFP_1_4_9Mar22_1} and {Lemma} \ref{ARFP_1_4_9Mar22_3} that
\begin{eqnarray}
    \label{TEBTTME2_6}
    &&\|{\bm H}^{\<k\>}_{I^{\leq p-1},I^{>q}}\|_F\nonumber\\
    &\leq&\|{\bm T}^{\<k\>}_{I^{\leq p-1},I^{>q}}-\wt{\bm C}\wt{\bm U}^{\dag}\wt{\bm R}\|_F+\|\wt{\bm C}\wt{\bm U}^{\dag}\wt{\bm R}-\wt{\bm C}[\wt{\bm U}]^{\dag}_{\tau_k}\wt{\bm R}\|_F\nonumber\\
    &=&\|{\bm T}^{\<k\>}_{I^{\leq p-1},I^{>q}}-\wt{\bm C}\wt{\bm U}^{\dag}\wt{\bm R}\|_F+\|\wt{\bm C}\wt{\bm U}^{\dag}\wt{\bm U}(\wt{\bm U}^{\dag}-[\wt{\bm U}]^{\dag}_{\tau_k})\wt{\bm U}\wt{\bm U}^{\dag}\wt{\bm R}\|_F\nonumber\\
    &=&\|{\bm T}^{\<k\>}_{I^{\leq p-1},I^{>q}}-\wt{\bm C}\wt{\bm U}^{\dag}\wt{\bm R}\|_F+\|\wt{\bm C}\wt{\bm U}^{\dag}(\wt{\bm U}-[\wt{\bm U}]_{\tau_k})\wt{\bm U}^{\dag}\wt{\bm R}\|_F\nonumber\\
    &\leq&\|\wt{\bm T}^{\<k\>}(:,I^{>k})[\wt{\bm T}^{\<k\>}(I^{\leq k},I^{>k})]^{\dag}\|_2\|\wt{\bm U}-[\wt{\bm U}]_{\tau_k}\|_F\|[{\wt{\bm T}^{\<k\>}}(I^{\leq k},I^{>k})]^{\dag}\wt{\bm T}^{\<k\>}(I^{\leq k},:)\|_2\nonumber\\
    &&+\|{\bm T}^{\<k\>}_{I^{\leq p-1},I^{>q}}-\wt{\bm C}\wt{\bm U}^{\dag}\wt{\bm R}\|_F\nonumber\\
    &\leq&\sqrt{\|{\bm E}^{\<k\>}(I^{\leq k},I^{>k})+{\bm F}^{\<k\>}(I^{\leq k},I^{>k})\|_F^2+r_k^2\tau_k^2}\|\wt{\bm T}^{\<k\>}(:,I^{>k})[\wt{\bm T}^{\<k\>}(I^{\leq k},I^{>k})]^{\dag}\|_2\nonumber\\
    &&\cdot\|[{\wt{\bm T}^{\<k\>}}(I^{\leq k},I^{>k})]^{\dag}\wt{\bm T}^{\<k\>}(I^{\leq k},:)\|_2+\|{\bm T}^{\<k\>}_{I^{\leq p-1},I^{>q}}-\wt{\bm C}\wt{\bm U}^{\dag}\wt{\bm R}\|_F\nonumber\\
    &\leq&(\|{\bm E}^{\<k\>}\|_F+\|{\bm F}^{\<k\>}\|_F+r_k\tau_k)\|\wt{\bm T}^{\<k\>}(:,I^{>k})[\wt{\bm T}^{\<k\>}(I^{\leq k},I^{>k})]^{\dag}\|_2\nonumber\\
    &&\cdot\|[{\wt{\bm T}^{\<k\>}}(I^{\leq k},I^{>k})]^{\dag}\wt{\bm T}^{\<k\>}(I^{\leq k},:)\|_2+\|{\bm T}^{\<k\>}_{I^{\leq p-1},I^{>q}}-\wt{\bm C}\wt{\bm U}^{\dag}\wt{\bm R}\|_F\nonumber\\
    &\leq&\epsilon+(\xi+\epsilon+r_k\tau_k)(a+(1+a)c(\xi+\epsilon))(b+(1+b)c(\xi+\epsilon))\nonumber\\
    &&+(a+b+3ab)(\xi+\epsilon)+(a+b+ab+1)c(\xi+\epsilon)^2.
\end{eqnarray}

Combining \eqref{TEBTTME2_3}-\eqref{TEBTTME2_6}, we can obtain
\begin{align}
    \label{TEBTTME2_11}
    &\hspace{0.46cm}\|{\bm T}^{\<k\>}_{I^{\leq p-1},I^{>q}}-\widehat{{\bm T}}^{\<k\>}_{I^{\leq p-1},I^{>q}}\|_F=\|\wt{\bm C}[\wt{\bm U}]^{\dag}_{\tau_k}\wt{\bm R}+{\bm H}^{\<k\>}_{I^{\leq p-1},I^{>q}}-\widehat{\bm C} [\wt{\bm U}]^{\dag}_{\tau_k}\widehat{\bm R}\|_F\nonumber\\
    &\leq\|{\bm H}^{\<k\>}_{I^{\leq p-1},I^{>q}}\|_F+\|\wt{\bm C}[\wt{\bm U}]^{\dag}_{\tau_k}\wt{\bm R}-(\wt{\bm C}-{\bm E}_{C})[\wt{\bm U}]^{\dag}_{\tau_k}(\wt{\bm R}-{\bm E}_{R})\|_F\nonumber\\
    &\leq\|{\bm H}^{\<k\>}_{I^{\leq p-1},I^{>q}}\|_F+\|\wt{\bm C}[\wt{\bm U}]^{\dag}_{\tau_k}{\bm E}_{R}\|_F+\|{\bm E}_{C}[\wt{\bm U}]^{\dag}_{\tau_k}{\bm E}_{R}\|_F+\|{\bm E}_{C}[\wt{\bm U}]^{\dag}_{\tau_k}\wt{\bm R}\|_F\nonumber\\
    &\leq\epsilon+(\xi+\epsilon+r_k\tau_k)(a+(1+a)c(\xi+\epsilon))(b+(1+b)c(\xi+\epsilon))+(a+b+3ab)(\xi+\epsilon)\nonumber\\
    &\hspace{0.46cm}+(a+b+ab+1)c(\xi+\epsilon)^2+(a+(1+a)c(\xi+\epsilon))\|{\bm E}_{R}\|_F+(b+(1+b)c(\xi+\epsilon))\|{\bm E}_{C}\|_F\nonumber\\
    &\hspace{0.46cm}+\frac{1}{\tau_k}\|{\bm E}_{C}\|_F\|{\bm E}_{R}\|_F.
\end{align}

\noindent{\bf Error bound for the entire tensor:}
Similar to the derivation of \Cref{TEBTTCA3_4_3}, \eqref{TEBTTME2_11} implies that (by setting $\tau_k= e_{l}$ in the $l$-th layer)
\begin{align*}
    e_{l+1}\le & \epsilon+(a+b+4ab)(\xi+\epsilon)+(2a+2b+3ab+1)c(\xi+\epsilon)^2+(1+a+b+ab)c^2(\xi+\epsilon)^3\\
    & + \bigg(1+a+b+rab+(2+(a+b)(1+r)+2abr)c(\xi+\epsilon)+r(1+a+b+ab)c^2(\xi+\epsilon)^2\bigg)e_{l},
\end{align*}
which together with $e_0 = 0$ implies that
\begin{align}
    \label{TEBTTME2_14}
    &\|{\mathcal{T}}-\widehat{\mathcal{T}}\|_F=\frac{\alpha_2(a,b,c,\epsilon,\xi,r)^{{\lceil \log_2N \rceil}}-1}{\alpha_2(a,b,c,\epsilon,\xi,r)-1}\beta_2(a,b,c,\epsilon,\xi),
\end{align}
where
\begin{align*}
    \alpha_2(a,b,c,\epsilon,\xi,r)= &1+a+b+rab+(2+a+b+ar+br+2abr)c(\xi+\epsilon)\nonumber\\
    &+r(1+a+b+ab)c^2(\xi+\epsilon)^2,\\
    \beta_2(a,b,c,\epsilon,\xi)=&\epsilon+(a+b+4ab)(\xi+\epsilon)+(2a+2b+3ab+1)c(\xi+\epsilon)^2\nonumber\\
    &+(1+a+b+ab)c^2(\xi+\epsilon)^3.
\end{align*}
\end{proof}


\end{document}